\definecolor{shadecolor}{RGB}{255,203,203}
\theoremstyle{plain}
\newtheorem{theorem}{Theorem}[section]
\newtheorem{proposition}[theorem]{Proposition}
\newtheorem{lemma}[theorem]{Lemma}
\theoremstyle{definition}
\newtheorem{definition}[theorem]{Definition}
\theoremstyle{remark}
\title{A Consistent and Differentiable \\$L_p$ Canonical Calibration Error Estimator}
\author{%
  Teodora Popordanoska\thanks{Equal contribution} \\
  ESAT-PSI, KU Leuven \\
  \texttt{teodora.popordanoska@kuleuven.be}
  \And
  Raphael Sayer\footnotemark[1]\protect\phantom{\footnotesize 1}\thanks{Most of this work was done while at KU Leuven}\\
  University of Tübingen \\
  \texttt{raphael.sayer@uni-tuebingen.de} 
  \And
  Matthew B. Blaschko \\
  ESAT-PSI, KU Leuven \\
  \texttt{matthew.blaschko@esat.kuleuven.be} \\
}
\begin{document}

\maketitle

\begin{abstract}
Calibrated probabilistic classifiers are models whose predicted probabilities can directly be interpreted as uncertainty estimates. It has been shown recently that deep neural networks are poorly calibrated and tend to output overconfident predictions. As a remedy, we propose a low-bias, trainable calibration error estimator based on Dirichlet kernel density estimates, which asymptotically converges to the true $L_p$ calibration error. This novel estimator enables us to tackle the strongest notion of multiclass calibration, called canonical (or distribution) calibration, while other common calibration methods are tractable only for top-label and marginal calibration. The computational complexity of our estimator is $\mathcal{O}(n^2)$, the convergence rate is $\mathcal{O}(n^{-1/2})$, and it is unbiased up to $\mathcal{O}(n^{-2})$, achieved by a geometric series debiasing scheme. In practice, this means that the estimator can be applied to small subsets of data, enabling efficient estimation and mini-batch updates. The proposed method has a natural choice of kernel, and can be used to generate consistent estimates of other quantities based on conditional expectation, such as the sharpness of a probabilistic classifier. 
Empirical results validate the correctness of our estimator, and demonstrate its utility in canonical calibration error estimation and calibration error regularized risk minimization.
\end{abstract}

\section{Introduction}

Deep neural networks have shown tremendous success in classification tasks, being regularly the best performing models in terms of accuracy. However, they are also known to make overconfident predictions \citep{guo2017}, which is particularly problematic in safety-critical applications, such as medical diagnosis \citep{esteva2017,esteva2019} or autonomous driving \citep{caesar2020,sun2020}. In many real world applications it is not only the predictive performance that is important, but also the trustworthiness of the prediction, i.e., we are interested in accurate predictions with robust uncertainty estimates. To this end, it is necessary that the models are uncertainty calibrated, which means that, for instance, among all cells that have been predicted with a probability of 0.8 to be cancerous, 80\% should indeed belong to a malignant tumor. 

The field of uncertainty calibration has been mostly focused on binary problems, often considering only the confidence score of the predicted class. However, this so called top-label (or confidence) calibration \citep{guo2017}) is often not sufficient in multiclass settings. A stronger notion of calibration is marginal (or class-wise) \citep{kull2019}, that splits up the multiclass problem into $K$ one-vs-all binary ones, and requires each to be calibrated according to the definition of binary calibration. The most strict notion of calibration, called canonical (or distribution) calibration \citep{brocker_2009, kull2015, vaicenavicius2019}, requires the whole probability vector to be calibrated. The curse of dimensionality makes estimation of this form of calibration difficult, and current estimators, such as the binned estimator $ECE^{bin}$ \citep{naeini2015}, MMCE \citep{kumar2018} and Mix-n-Match \citep{zhang2020}, have computational or statistical limitations that prevent them from being successfully applied in this important setting. Specifically, the binned estimator is sensitive to the binning scheme and is asymptotically inconsistent in many situations \cite{vaicenavicius2019}, MMCE is not a consistent estimator of $L_p$ calibration error and Mix-n-Match, although consistent, is intractable in high dimensions and the authors did not implement it in more than one dimension. 

We propose \textit{a tractable, differentiable, and consistent estimator of the expected $L_p$ canonical calibration error}. In particular, we use kernel density estimates (KDEs) with a Beta kernel in binary classification tasks and a Dirichlet kernel in the multiclass setting, as these kernels are the natural choices to model densities over a probability simplex. 
In Table \ref{tab:calibration_error_estimators}, we summarize and compare the properties of our $ECE^{KDE}$ estimator and other commonly used estimators. $ECE^{KDE}$ scales well to higher dimensions and it is able to capture canonical calibration with $\mathcal{O}(n^2)$ complexity.

\newcommand{\y}{\textcolor{green}{\ding{51}}}
\newcommand{\n}{\textcolor{red}{\ding{55} }}
\begin{table*}[t!]
    \caption{Properties of $ECE^{KDE}$ and other commonly used calibration error estimators.}
    \centering
    	\resizebox{1\textwidth}{!}{%
            \begin{tabular}{llclc}
             \bottomrule
                & \multicolumn{4}{c}{\textbf{Properties}}\\
                 &  Consistency &  Scalability &  De-biased  & Differentiable  \\
			\midrule
            \textbf{$ECE^{KDE}$ (Our)} & \y & \y  & \y  & \y \\
            $ECE^{bin}$ & \n  \citep{vaicenavicius2019} & \n & \y   \citep{pmlr-v151-roelofs22a} & \n  \\
            Mix-n-Match & \y  \citep{zhang2020} & \n & \n  & \y  \\
            MMCE & \n  \citep{kumar2018} & \y & \n  & \y   \\
            \bottomrule
            \end{tabular}
       }
    \label{tab:calibration_error_estimators}
\end{table*}

Our contributions can be summarized as follows: 
    1. We develop a tractable estimator of canonical $L_p$ calibration error that is consistent and differentiable.
    2. We demonstrate a natural choice of kernel. Due to the scaling properties of Dirichlet kernel density estimation, 
    evaluating \emph{canonical calibration} becomes feasible in cases that cannot be estimated using other methods. 
    3. We provide a second order debiasing scheme to further improve the convergence of the estimator. 
    4. We empirically evaluate the correctness of our estimator and demonstrate its utility in the task of calibration regularized risk minimization on variety of network architectures and several datasets.

\section{Related Work}
Calibration of probabilistic predictors has long been studied in many fields. 
This topic gained attention in the deep learning community since \citet{guo2017} observed that modern neural networks are poorly calibrated and tend to give overconfident predictions due to overfitting on the NLL loss. The surge of interest resulted in many calibration strategies that can be split in two general categories, which we discuss subsequently. 

\textbf{Post-hoc calibration strategies} learn a calibration map of the predictions from a trained predictor in a post-hoc manner, using a held-out calibration set. For instance, Platt scaling \citep{platt99} fits a logistic regression model on top of the logit outputs of the model. A special case of Platt scaling that fits a single scalar, called temperature, has been popularized by \citet{guo2017} as an accuracy-preserving, easy to implement and effective method to improve calibration. However, it has the undesired consequence that it clamps the high confidence scores of accurate predictions \citep{kumar2018}. 
Similar approaches for post-hoc calibration include histogram binning \citep{zadrozny2001}, isotonic regression \citep{zadrozny2002}, Bayesian binning into quantiles \citep{naeini2015b}, Beta \citep{kull2017} and Dirichlet calibration \citep{kull2019}.
Recently, \citet {gupta2021} proposed a binning-free calibration measure based on the Kolmogorov-Smirnov test. In this approach, the recalibration function is obtained via spline-fitting, rather than minimizing a loss function on a calibration set. \citet {ma2021} integrate ensamble-based and post-hoc calibration methods in an accuracy-perserving truth discovery framework. \citet{zhao2021} introduce a new notion of calibration, called decision calibration, however, they do not propose an estimator of calibration error with statistical guarnatees.

\textbf{Trainable calibration strategies} integrate a differentiable calibration measure into the training objective. One of the earliest approaches is regularization by penalizing low entropy predictions \citep{pereyra2017}. Similarly to temperature scaling, it has been shown that entropy regularization needlessly suppresses high confidence scores of correct predictions \citep{kumar2018}. Another popular strategy is MMCE (Maximum Mean Calibration Error) \citep{kumar2018}, where the entropy regularizer is replaced by a kernel-based surrogate for the calibration error that can be optimized alongside NLL. It has been shown that label smoothing \citep{szegedy2015, muller2020}, i.e. training models with a weighted mixture of the labels instead of one-hot vectors, also improves model calibration. \citet{liang2020} propose to add the difference between predicted confidence and accuracy as auxiliary term to the cross-entropy loss. Focal loss \citep{mukhoti2020, lin2018} has recently been \emph{empirically} shown to produce better calibrated models than many of the alternatives, but does not estimate a clear quantity related to calibration error. \citet{bohdal2021} derive a differentiable approximation to the commonly-used binned estimator of calibration error by computing differentiable approximations to the 0/1 loss and the binning operator. However, this approach does not eliminate the dependence on the binning scheme and it is not clear how it can be extended to calibration of the whole probability vector.

\textbf{Kernel density estimation} \citep{parzen1962,rosenblatt1956,silverman86} is a non-parametric method to estimate a probability density function from a finite sample. 
\citet{zhang2020} propose a KDE-based estimator of the calibration error  (Mix-n-Match) for measuring calibration performance. Although they demonstrate consistency of the method, it requires a numerical integration step that is infeasible in high dimensions.  In practice, they only implemented binary calibration, and not canonical calibration.

Although many calibration strategies have been empirically shown to decrease the calibration error, very few of them are based on an estimator of miscalibration. 
Our estimator is the first consistent, differentiable estimator with favourable scaling properties that has been successfully applied to the estimation of $L_p$ canonical calibration error in the multi-class setting.

\section{Methods}
We study a classical supervised classification problem. Let $(\Omega, \mathcal{A}, \mathbb{P})$ be a probability space, where $\Omega$ is the set of possible outcomes, $\mathcal{A}=\mathcal{A}(\Omega)$ is the sigma field of events and $\mathbb{P}:\mathcal{A}\rightarrow [0,1]$ is a probability measure, let $\mathcal{X}=\mathbb{R}^d$ and $\mathcal{Y}=\{1, ..., K\}$. Let $x: \Omega \rightarrow \mathcal{X}$ and $y: \Omega \rightarrow \mathcal{Y}$ be random variables, while realizations are denoted with subscripts. Suppose we have a model $f:\mathcal{X} \rightarrow \triangle^K$, where $\triangle^K$ denotes the $K-1$ dimensional simplex as obtained, e.g.,\ from the output of a final softmax layer in a neural network. We measure the (mis-)calibration in terms of the $L_p$ calibration error, defined below. 
\begin{definition}[Calibration error, \citep{naeini2015,kumar2019,wenger2020}]
    \label{def:calibration-err}
    The $L_p$ calibration error of $f$ is:
    \begin{align}\label{eq:LPcalibrationError}
        \operatorname{CE}_p(f) = \biggl(\mathbb{E}\biggl[\Bigl\| \mathbb{E}[y \mid f(x)]-f(x)\Bigr\|_p^p\biggr] \biggr)^{\frac{1}{p}}.
    \end{align}
\end{definition}
We note that we consider multiclass calibration, and that $f(x)$ and the conditional expectation in \Cref{eq:LPcalibrationError} therefore map to points on a probability simplex.
We say that a classifier $f$ is perfectly calibrated if $\operatorname{CE}_p(f) = 0$.

In order to empirically compute the conditional expectation in \Cref{eq:LPcalibrationError}, we need to perform density estimation over the probability simplex. In a binary setting, this has traditionally been done with binned estimates \citep{naeini2015,guo2017,kumar2019}. However, this is not differentiable w.r.t.\ the function $f$, and cannot be incorporated into a gradient based training procedure.  Furthermore, binned estimates suffer from the curse of dimensionality and do not have a practical extension to multiclass settings. We consider an estimator for the $\operatorname{CE}_p$ based on Beta and Dirichlet kernel density estimates in the binary and multiclass setting, respectively. We require that this estimator is consistent and differentiable such that we can train it in a calibration error regularized risk minimization framework. This estimator is given by:
\begin{align}
    \widehat{\operatorname{CE}_p(f)^p} = \frac{1}{n}\sum_{j=1}^n\biggl[\Bigl\| \widehat{\mathbb{E}[y \mid f(x)]}\Bigr|_{f(x_j)}-f(x_j)\Bigr\|_p^p\biggr] ,
\end{align}
where $\widehat{\mathbb{E}[y \mid f(x)]}\Bigr|_{f(x_j)}$ denotes $\widehat{\mathbb{E}[y \mid f(x)]}$ evaluated at $f(x)=f(x_j)$.
If 
probability density $p_{x, y}$ is measurable
with respect to the product of the Lebesgue and counting measure, we can define: $p_{x, y}(x_i, y_i) =p_{y|x=x_i}(y_i) \, p_x(x_i)$. Then we define the estimator of the conditional expectation as follows:
\begin{align}
    \mathbb{E}[y\mid f(x)]&= \sum_{y_k \in \mathcal{Y}} y_k \, p_{y|f(x)}(y_k) = \frac{\sum_{y_k \in \mathcal{Y}} y_k \, p_{f(x), y}(f(x), y_k)}{p_{f(x)}(f(x))} \nonumber \\ &\approx  \frac{\sum_{i=1}^n k(f(x) ; f(x_i))y_i}{\sum_{i=1}^n k(f(x) ; f(x_i))} =: \widehat{\mathbb{E}[y \mid f(x)]}
    \label{eq:estimator_EYfX}
\end{align}
where $k$ is the kernel of a kernel density estimate evaluated at point $x$ and $p_{f(x)}$ is uniquely determined by $p_x$ and $f$.

\begin{proposition}
\label{prop:Eyfx_consistent}
    Assuming that $p_{f(x)}(f(x))$ is Lipschitz continuous over the interior of the simplex, there exists a kernel $k$ such that $\widehat{\mathbb{E}[y \mid f(x)]}$ is a pointwise consistent estimator of $\mathbb{E}[y\mid f(x)]$, 
    that is:
    \begin{align}
        \underset{n\to\infty}{\operatorname{plim}} 
        \frac{\sum_{i=1}^n k(f(x) ; f(x_i))y_i}{\sum_{i=1}^n k(f(x) ; f(x_i))} = \frac{\sum_{y_k \in \mathcal{Y}} y_k \, p_{f(x), y}(f(x), y_k)}{p_{f(x)}(f(x))}.
        \label{eq:prop_cond_exp}
    \end{align}
\end{proposition}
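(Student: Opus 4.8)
The plan is to exhibit a concrete kernel — the Beta kernel in the binary case and the Dirichlet kernel in the multiclass case, since these are supported on the simplex and have well-studied density-estimation properties \citep{Chen1999, ouimet2020} — equipped with a bandwidth sequence $h_n \to 0$ chosen so that the effective number of samples in a shrinking neighbourhood of $f(x)$ diverges. I would then split the claim into three parts: (i) the denominator $\hat p_n := \frac{1}{n}\sum_{i=1}^n k(f(x); f(x_i))$ converges in probability to $p_{f(x)}(f(x))$; (ii) each coordinate of the numerator $\hat q_n := \frac{1}{n}\sum_{i=1}^n k(f(x); f(x_i))\, y_i$ converges in probability to the corresponding coordinate of $\sum_{y_k} y_k\, p_{f(x),y}(f(x), y_k)$; and (iii) the ratio $\hat q_n / \hat p_n$ converges to the ratio of the two limits. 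The deterministic identity in the middle of \Cref{eq:estimator_EYfX} already rewrites the target conditional expectation as exactly this ratio, so it suffices to establish (i)--(iii).

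For (i), $\hat p_n$ is precisely a Beta/Dirichlet kernel density estimate of $p_{f(x)}$ evaluated at $f(x)$. I would bound its mean squared error by the usual bias--variance decomposition: the bias is controlled by the Lipschitz continuity hypothesis on $p_{f(x)}$ together with the shrinking bandwidth (the asymmetric kernel concentrates its mass at $f(x)$ as $h_n \to 0$), while the variance vanishes because the effective number of contributing samples grows with $n$. Convergence of the mean squared error to zero gives $\hat p_n \xrightarrow{P} p_{f(x)}(f(x))$. For (ii), I would treat $\hat q_n$ coordinatewise as a weighted kernel estimate in which the weights $(y_i)_k \in \{0,1\}$ (one-hot labels) are bounded. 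Writing $\mathbb{E}[(y_i)_k\, k(f(x); f(x_i))] = \int_{\triangle^K} k(f(x); z)\, p_{f(x), y}(z, k)\, dz$ and again invoking the concentration of the kernel, the bias tends to the joint-density value $p_{f(x), y}(f(x), k)$, while boundedness of the weights keeps the variance under the same control as in (i); this yields $\hat q_n \xrightarrow{P} \sum_{y_k} y_k\, p_{f(x),y}(f(x), y_k)$.

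Finally, for (iii), since $f(x)$ lies in the interior of the simplex we have $p_{f(x)}(f(x)) > 0$, so the map $(a,b)\mapsto a/b$ is continuous at the limit point; Slutsky's theorem (equivalently the continuous mapping theorem) then transfers the joint convergence of $(\hat q_n, \hat p_n)$ into convergence in probability of $\hat q_n / \hat p_n$ to the stated ratio, which is precisely the $\operatorname{plim}$ in \Cref{eq:prop_cond_exp}. Note that this Slutsky step is exactly what the commented-out argument glossed over when it spoke of ``the ratio of two convergent sequences,'' and the $\operatorname{plim}$ in the statement replaces the earlier almost-sure phrasing.

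I expect the main obstacle to be the bias analysis of the asymmetric kernels. Unlike symmetric kernels on $\mathbb{R}^d$, Beta and Dirichlet kernels are location-dependent and behave differently near the boundary of the simplex, so verifying that their bias vanishes — and confirming that the Lipschitz hypothesis on $p_{f(x)}$, together with the continuity of the conditional probabilities entering the joint density in (ii), is exactly strong enough — is the delicate step, and is where the cited consistency results for these kernels do the heavy lifting. The remaining parts reduce to a routine bias--variance decomposition and a standard application of Slutsky's theorem.
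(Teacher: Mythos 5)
Your proposal is correct and follows essentially the same route as the paper: choose the Beta/Dirichlet kernel, invoke the cited consistency results under the Lipschitz hypothesis to get convergence in probability of the numerator and denominator KDEs, and pass to the ratio via the continuous mapping theorem (the paper factors this last step into a reciprocal map $g(x)=1/x$ followed by a product of convergent sequences, which is equivalent to your joint Slutsky argument on $(a,b)\mapsto a/b$). The only caveat is that your claim that interiority of $f(x)$ implies $p_{f(x)}(f(x))>0$ is not a valid inference in general, but the paper's own proof rests on the same implicit positivity assumption, so this is not a gap relative to the paper.
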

\begin{proof}
    Let $k$ be a Dirichlet kernel \citep{ouimet2020}. By the consistency of the Dirichlet kernel density estimators \citep[Theorem~4]{ ouimet2020} Lipschitz continuity of the density over the simplex is a sufficient condition for uniform convergence of the kernel density estimate. This in turn implies that for a given $f$, for all $f(x)\in (0,1)$, $\frac{1}{n}\sum_{i=1}^n k(f(x) ; f(x_i))y_i\xrightarrow{p} \sum_{y_k \in \mathcal{Y}} y_k \, p_{f(x), y}(f(x), y_k)$ and $\frac{1}{n}\sum_{i=1}^n k(f(x) ; f(x_i))\xrightarrow{p} p_{f(x)}(f(x))$. 
    Let $g(x)=1/ x$, 
    then the set of discontinuities of $g$ applied to the denominator of the l.h.s.\ of \eqref{eq:prop_cond_exp} has measure zero since $\frac{1}{n} \sum_{i=1}^n k(f(x) ; f(x_i))=0$ with probability zero. From the continuous mapping theorem \citep{mann1943} it follows, that $n/(\sum_{i=1}^n k(f(x) ; f(x_i))) \xrightarrow{p} 1/p_{f(x)}(f(x))$. Since products of convergent (in probability) sequences of random variables converge in probability to the product of their limits \citep{resnick2019}, we have that $\sum_{i=1}^n k(f(x) ; f(x_i))y_i g(\sum_{i=1}^n k(f(x) ; f(x_i)))\xrightarrow{p} \sum_{y_k \in \mathcal{Y}} y_k \, p_{f(x), y}(f(x), y_k) g(p_{f(x)}(f(x)))$, which is equal to the r.h.s.\ of \eqref{eq:prop_cond_exp}.
\end{proof}

The most commonly used loss functions are designed to achieve consistency in the sense of Bayes optimality under risk minimization, however, they do not guarantee calibration - neither for finite samples nor in the asymptotic limit. Since we are interested in models $f$ that are both accurate and calibrated, we consider the following optimization problem bounding the calibration error $\operatorname{CE}(f)$:
    $f = \arg\min_{f\in \mathcal{F}}\, \operatorname{Risk}(f),  \text{s.t. } \operatorname{CE}(f) \leq B$ 
for some $B>0$, and its associated Lagrangian
\begin{align}
    f = \arg\min_{f\in \mathcal{F}}\, \Bigl(\operatorname{Risk}(f) + \lambda \cdot \operatorname{CE}(f)\Bigr).
    \label{eq:risk_calibration_min}
\end{align}

\paragraph{Mean squared error in binary classification}
As a first instantiation of our framework we consider a binary classification setting, with mean squared error $\operatorname{MSE}(f)=\mathbb{E}[(f(x)-y)^2]$ as the risk function, jointly optimized with the $L_2$ calibration error $\operatorname{CE}_2$: 

\begin{align}
    \label{eq:mse_binary}
    f = \arg\min_{f\in \mathcal{F}} \Bigl( \operatorname{MSE}(f)  +\lambda \operatorname{CE}_2(f)^2 \Bigr) 
      &= \arg\min_{f\in \mathcal{F}} \biggl( \operatorname{MSE}(f)    +\gamma \mathbb{E}\Bigl[\mathbb{E}[y\mid f(x)]^2\Bigr]\biggr)
\end{align}

where $\gamma=\frac{\lambda}{\lambda+1} \in [0,1)$. The full derivation using the MSE decomposition \citep{Murphy1973, degroot1983, kuleshov2015, nguyen2015} is given in Appendix \ref{appendix:mse_decomposition}. For optimization we wish to find an estimator for $\mathbb{E}[\mathbb{E}[y\mid f(x)]^2]$. Building upon \Cref{eq:estimator_EYfX}, a partially debiased estimator can be written as:
\begin{align}\label{eq:SharpnessSlightlyDebiased}
    \widehat{\mathbb{E}\Bigl[\mathbb{E}[y\mid f(x)]^2\Bigr]} \approx  \frac{1}{n} \sum_{j=1}^n \frac{\left(\sum_{i\neq j} \ k(f(x_j) ; f(x_i))y_i\right)^2 - \sum_{i\neq j} \left(k(f(x_j);f(x_i))y_i\right)^2 }{\left( \sum_{i\neq j} k(f(x_j) ; f(x_i)) \right)^2 - \sum_{i\neq j} \left(k(f(x_j);f(x_i))\right)^2}.
\end{align}
Thus, the conditional expectation is estimated using a ratio of unbiased estimators of the square of a mean. 
\begin{proposition}
Equation~\eqref{eq:SharpnessSlightlyDebiased} is a ratio of two U-statistics and has a bias converging as $\mathcal{O}\left(\frac{1}{n}\right)$.
\end{proposition}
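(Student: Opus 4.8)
The plan is to first rewrite each summand's numerator and denominator as genuine degree-two U-statistics, then apply the delta method to the resulting ratio and control the leading bias term through Hoeffding's variance formula. Throughout I treat the binary case, so $y_i$ is scalar.

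\emph{Step 1 (identifying the U-statistics).} I would fix the evaluation index $j$ and abbreviate $k_{ji} := k(f(x_j); f(x_i))$. Expanding the square and cancelling the diagonal gives, for the numerator,
\[
\Bigl(\sum_{i\neq j} k_{ji} y_i\Bigr)^2 - \sum_{i\neq j}(k_{ji}y_i)^2 = \sum_{\substack{i,l\neq j\\ i\neq l}} (k_{ji}y_i)(k_{jl}y_l) = (n-1)(n-2)\, U^{\mathrm{num}}_j ,
\]
where $U^{\mathrm{num}}_j$ is the degree-two U-statistic over the $n-1$ samples $\{(x_i,y_i): i\neq j\}$ with symmetric kernel $g^{\mathrm{num}}_j\bigl((x_i,y_i),(x_l,y_l)\bigr) = (k_{ji}y_i)(k_{jl}y_l)$, treating $x_j$ as a fixed parameter. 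The denominator admits the identical treatment with kernel $g^{\mathrm{den}}_j = k_{ji}k_{jl}$, and the common factor $(n-1)(n-2)$ cancels, so each summand is exactly the ratio $R_j := U^{\mathrm{num}}_j / U^{\mathrm{den}}_j$ of two U-statistics, which establishes the first half of the claim.

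\emph{Step 2 (conditional means).} Since U-statistics are unbiased and the $(x_i,y_i)$ are i.i.d., conditioning on $x_j$ and factorising over the two independent arguments yields $\mathbb{E}[U^{\mathrm{num}}_j \mid x_j] = (\mathbb{E}[k_{j1}y_1\mid x_j])^2 =: a_j$ and $\mathbb{E}[U^{\mathrm{den}}_j\mid x_j] = (\mathbb{E}[k_{j1}\mid x_j])^2 =: b_j$, so that $a_j/b_j$ is precisely the squared smoothed conditional expectation targeted by $R_j$. I would then expand $R_j = a_j/b_j + \tfrac{1}{b_j}(U^{\mathrm{num}}_j - a_j) - \tfrac{a_j}{b_j^2}(U^{\mathrm{den}}_j - b_j) + (\text{remainder})$; the two linear terms vanish under $\mathbb{E}[\,\cdot\mid x_j]$ by Step~2, leaving the leading contribution to $\mathbb{E}[R_j\mid x_j] - a_j/b_j$ equal to
\[
-\frac{1}{b_j^2}\operatorname{Cov}(U^{\mathrm{num}}_j, U^{\mathrm{den}}_j \mid x_j) + \frac{a_j}{b_j^3}\operatorname{Var}(U^{\mathrm{den}}_j \mid x_j).
\]
By Hoeffding's variance formula a U-statistic of fixed degree based on $N=n-1$ observations has variance (and, by Cauchy--Schwarz, covariance) of order $O(1/N)=O(1/n)$, while the third-order remainder involves products of three centred deviations and is $O(1/n^2)$. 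Hence $\mathbb{E}[R_j\mid x_j] = a_j/b_j + O(1/n)$; averaging over $j$ and taking the outer expectation (all $R_j$ share the same law by exchangeability) propagates this to a total bias of $O(1/n)$ relative to the smoothed target $\mathbb{E}_x[a/b]$.

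\emph{Main obstacle.} The delicate part will be making the delta-method remainder rigorous: it requires $U^{\mathrm{den}}_j$ to remain bounded away from zero and the relevant moments of $k_{ji}$ to be finite, so that $\operatorname{Var}(U^{\mathrm{den}}_j)$ and the third-order term are genuinely controlled. Because the Beta/Dirichlet normalisation depends on the bandwidth, the $O(1/n)$ constant is implicitly bandwidth-dependent; the cleanest framing is therefore to isolate this statistical $O(1/n)$ ratio bias from the separate kernel smoothing bias, the latter being governed by the bandwidth rate and absorbed by the consistency argument of \Cref{prop:Eyfx_consistent}.
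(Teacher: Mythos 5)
Your proposal is correct and follows essentially the same route as the paper's appendix proof: there, the generic lemma that a ratio $U_1/U_2$ of U-statistics has $\mathcal{O}(1/n)$ bias is proven by expanding $\bigl(1 + (U_2-\theta_2)/\theta_2\bigr)^{-1}$ as a geometric series, taking expectations so the linear terms vanish by unbiasedness, and bounding the surviving $\operatorname{Cov}(U_1,U_2)$ and $\operatorname{Var}(U_2)$ terms by $\mathcal{O}(1/n)$ via the U-statistic convergence rate --- exactly your delta-method computation in different notation, including the same implicit regularity caveat (the paper conditions on $|(U_2-\theta_2)/\theta_2|<1$ to justify the expansion, the issue you flag as your ``main obstacle''). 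Your only departure is a useful refinement: by conditioning on $x_j$ you make explicit that each summand is a ratio of degree-two U-statistics over the remaining $n-1$ points with $f(x_j)$ as a fixed parameter, and that the target is the smoothed, bandwidth-dependent quantity $a_j/b_j$ rather than the true squared conditional expectation --- points the paper leaves implicit when applying its generic lemma to the estimator.
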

    The proof is given in Appendix \ref{App:Proof_lemma}.

\begin{proposition}
    There exist de-biasing schemes for the ratios in \Cref{eq:SharpnessSlightlyDebiased} and \Cref{eq:estimator_EYfX} that achieve an improved $\mathcal{O}\left(\frac{1}{n^2}\right)$ convergence of the bias.
\end{proposition}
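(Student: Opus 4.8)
The plan is to treat the bias of each ratio as arising solely from the nonlinearity of the map $t \mapsto 1/t$ applied to the (unbiased) denominator, and to cancel its leading contribution with a Neumann (geometric) series expansion. Write the denominator of \eqref{eq:estimator_EYfX} as $\hat{B} = \frac{1}{n}\sum_{i} k(f(x);f(x_i))$, an unbiased estimator of $B = p_{f(x)}(f(x))$, and set $\epsilon = \hat{B}/B - 1$ so that $\mathbb{E}[\epsilon]=0$. On the event $|\epsilon|<1$ one has the expansion $1/\hat{B} = B^{-1}\sum_{k\ge 0}(-\epsilon)^k$, which turns the ratio $\hat{A}/\hat{B}$ into a series whose terms can be taken in expectation individually. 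Because the linear term satisfies $\mathbb{E}[\hat{A}\epsilon]/B = \operatorname{Cov}(\hat{A},\hat{B})/B^2$ and the quadratic term contributes $A\,\mathbb{E}[\epsilon^2]/B = A\operatorname{Var}(\hat{B})/B^3$, the full leading bias is the fixed combination $-\operatorname{Cov}(\hat{A},\hat{B})/B^2 + A\operatorname{Var}(\hat{B})/B^3 = \mathcal{O}(1/n)$, with all remaining terms of order $\mathcal{O}(1/n^2)$.

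First I would make this expansion explicit and identify the exact $\mathcal{O}(1/n)$ coefficient as above. Next I would replace each of the second-order quantities $\operatorname{Var}(\hat{B})$ and $\operatorname{Cov}(\hat{A},\hat{B})$ by its own unbiased U-statistic built from the same kernel evaluations $k(f(x_j);f(x_i))$; these are sums over distinct index pairs, exactly as in the square-of-a-mean construction already used in \eqref{eq:SharpnessSlightlyDebiased}, so they cost no more than $\mathcal{O}(n^2)$. Subtracting the resulting estimate $\hat{b}_1$ of the leading bias from the naive ratio cancels the $1/n$ term; the residual is controlled by the next series term $\mathbb{E}[\epsilon^3]$ and by the estimation error of $\hat{b}_1$, both $\mathcal{O}(1/n^2)$ by standard U-statistic moment bounds. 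The same correction applies verbatim to the outer ratio in \eqref{eq:SharpnessSlightlyDebiased}, whose numerator and denominator are already unbiased second-order U-statistics, so the scheme simply acts on one further level of ratio.

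The main obstacle is rigorously justifying the term-by-term expectation of the infinite series: the expansion $1/\hat{B} = B^{-1}\sum_{k\ge 0}(-\epsilon)^k$ diverges on the event $\{|\epsilon|\ge 1\}$, which corresponds to the kernel sum $\hat{B}$ falling near zero. To handle this I would use the Lipschitz continuity assumption on $p_{f(x)}$ from \Cref{prop:Eyfx_consistent} to bound $B$ away from zero on the interior of the simplex, and then invoke concentration of the kernel sum to show that $\mathbb{P}(|\epsilon|\ge 1)$ and the truncated tail $\sum_{k\ge 3}\mathbb{E}[(-\epsilon)^k \mathbf{1}_{|\epsilon|<1}]$ decay faster than $1/n^2$; this reduces the claim to controlling finitely many moments. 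The accompanying routine step is verifying the moment orders $\mathbb{E}[\epsilon^{k}] = \mathcal{O}(n^{-\lceil k/2\rceil})$, which follows from the U-statistic variance structure but must be tracked to confirm that the quadratic term is precisely the $\mathcal{O}(1/n)$ contribution being removed.
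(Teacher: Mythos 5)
Your proposal is correct and takes essentially the same route as the paper's proofs in Appendices \ref{App:L1_debiasing} and \ref{App:L2_debiasing}: a geometric-series expansion of the ratio around its unbiased numerator and denominator, identification of the leading $\mathcal{O}(1/n)$ bias through the combination $-\operatorname{Cov}(\hat{A},\hat{B})/B^2 + A\operatorname{Var}(\hat{B})/B^3$, and cancellation of that term by plugging in sample-based estimates at $\mathcal{O}(n^2)$ cost, with the remaining series terms being $\mathcal{O}(1/n^2)$ by the same U-statistic moment bounds. The only notable differences are that you use an additive one-term correction (which indeed suffices for the stated $\mathcal{O}(1/n^2)$ claim), whereas the paper applies a multiplicative correction that also tracks the $n^{-2}$ terms and recurses on the nested biased ratios $r_a^*$, $r_b^*$ of \eqref{eq:r_a_corrected} and \eqref{eq:r_b_corrected}, and that you explicitly address the validity of the series expansion when the kernel sum is near zero---a point the paper glosses over---though note that Lipschitz continuity alone does not bound $p_{f(x)}$ away from zero, so positivity of the density at the evaluation point must be assumed rather than derived.
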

Proofs are given in Appendix \ref{App:L1_debiasing} and \ref{App:L2_debiasing}.

In a binary setting, the kernels $k(\cdot, \cdot)$ are Beta distributions defined as:
    \begin{align}
        k_{\operatorname{B}}(f(x_j),f(x_i)) := f(x_j)^{\alpha_i-1}(1-f(x_j))^{\beta_i-1} \frac{\operatorname{\Gamma}(\alpha_i+\beta_i)}{\operatorname{\Gamma}(\alpha_i)\operatorname{\Gamma}(\beta_i)},
    \end{align}

with $\alpha_i = \frac{f(x_i)}{h}+1$ and $\beta_i = \frac{1-f(x_i)}{h}+1$ \citep{Chen1999, bouezmarni2003, zhang2010}, where $h$ is a bandwidth parameter in the kernel density estimate that goes to 0 as $n \rightarrow \infty $.  We note that the computational complexity of this estimator is $\mathcal{O}(n^2)$. If we would use this within a gradient descent training procedure, the density can be estimated using a mini-batch and therefore the $\mathcal{O}(n^2)$ complexity is w.r.t.\ the size of a mini-batch, not the entire dataset.

The estimator in \Cref{eq:SharpnessSlightlyDebiased} is a ratio of two second order U-statistics that converge as $n^{-1/2}$ \citep{Ferguson2005a}. Therefore, the overall convergence will be $n^{-1/2}$. Empirical convergence rates are calculated in Appendix~{\ref{subsection:bias_and_convergece_rates}} and shown to be close to the theoretically expected value.

\paragraph{Multiclass calibration with Dirichlet kernel density estimates}
There are multiple definitions regarding multiclass calibration that differ in the strictness regarding the calibration of the probability vector $f(x)$.  
The strongest notion of multiclass calibration, and the one that we consider in this paper, is canonical (also called multiclass or distribution) calibration \citep{brocker_2009, kull2015, vaicenavicius2019}, which requires that the whole probability vector $f(x)$ is calibrated  (Definition~\ref{def:calibration-err}). Its estimator is:
\begin{equation}
    \label{eq:canonical_estimator}
   \widehat{\operatorname{CE}_{p}(f)^p} =  \frac{1}{n} \sum_{j=1}^n \left\| \frac{\sum_{i\neq j}  k_{\operatorname{Dir}}( f(x_j); f(x_i))y_i}{\sum_{i\neq j} k_{\operatorname{Dir}}(f(x_j) ; f(x_i))} - f(x_j) \right\|_p^p 
\end{equation}
where $k_{\operatorname{Dir}}$ is a Dirichlet kernel defined as: 
\begin{equation}
    k_{Dir}(f(x_j),f(x_i)) = \frac{\Gamma(\sum_{k=1}^K \alpha_{ik})}{\prod_{k=1}^K \Gamma(\alpha_{ik})} \prod_{k=1}^K f(x_j)_{k}^{\alpha_{ik}-1}
\end{equation}
with $\alpha_{i} = \frac{f(x_i)}{h} + 1$ \citep{ouimet2020}.  As before, the computational complexity is $\mathcal{O}(n^2)$ irrespective of $p$.

This estimator is differentiable and furthermore, the following proposition holds:

\begin{proposition}
The Dirichlet kernel based $\operatorname{CE}$ estimator is consistent when $p_{f(x)}(f(x))$ is Lipschitz:
    \begin{align}
     &\underset{n\to\infty}{\operatorname{plim}}  
    \frac{1}{n} \sum_{j=1}^n \left\| \frac{\sum_{i\neq j}^n  k_{\operatorname{Dir}}( f(x_j); f(x_i))y_i}{\sum_{i\neq j}^n k_{\operatorname{Dir}}(f(x_j) ; f(x_i))} - f(x_j) \right\|_p^p \nonumber = \mathbb{E}\biggl[ \Bigl\|\mathbb{E}[y \mid f(x)] - f(x)\Bigr\|_p^p \biggr]^p. 
    \end{align}
\end{proposition}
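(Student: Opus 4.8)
The plan is to reduce the claim to two ingredients: a law of large numbers for the sample mean of the \emph{true} summand, and a uniform-convergence argument controlling the gap between the estimated and the true summands. Write $g(z) := \bigl\|\mathbb{E}[y \mid f(x)=z] - z\bigr\|_p^p$ for the true integrand, and let $g_n^{(j)}(z)$ denote the estimated integrand obtained by substituting the leave-one-out ratio $\frac{\sum_{i\neq j} k_{\operatorname{Dir}}(z; f(x_i)) y_i}{\sum_{i\neq j} k_{\operatorname{Dir}}(z; f(x_i))}$ into $\|\cdot - z\|_p^p$. The estimator equals $\frac{1}{n}\sum_{j=1}^n g_n^{(j)}(f(x_j))$, and I would decompose it as
\begin{align*}
    \frac{1}{n}\sum_{j=1}^n g_n^{(j)}(f(x_j)) = \underbrace{\frac{1}{n}\sum_{j=1}^n g(f(x_j))}_{(\mathrm{I})} + \underbrace{\frac{1}{n}\sum_{j=1}^n \bigl(g_n^{(j)}(f(x_j)) - g(f(x_j))\bigr)}_{(\mathrm{II})}.
\end{align*}

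For term $(\mathrm{I})$, the variables $g(f(x_j))$ are i.i.d.\ (since $g$ is a fixed, data-independent function and the $(x_j,y_j)$ are i.i.d.), so the weak law of large numbers gives $(\mathrm{I}) \xrightarrow{p} \mathbb{E}\bigl[\|\mathbb{E}[y\mid f(x)] - f(x)\|_p^p\bigr]$, the target quantity. The essential subtlety is that the summands of the \emph{actual} estimator, $g_n^{(j)}(f(x_j))$, are \emph{not} independent across $j$, because they all share the full sample through the kernel density estimates; this is exactly why a direct appeal to the law of large numbers is unavailable and the split into $(\mathrm{I})$ and $(\mathrm{II})$ is needed.

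For term $(\mathrm{II})$, I would bound its magnitude by $\sup_z \bigl|g_n^{(j)}(z) - g(z)\bigr|$ uniformly over $j$ and show this supremum vanishes in probability. By Theorem~4 of \citet{ouimet2020}, Lipschitz continuity of $p_{f(x)}$ upgrades pointwise to \emph{uniform} convergence in probability of both $\frac{1}{n-1}\sum_{i\neq j} k_{\operatorname{Dir}}(z; f(x_i)) y_i$ to $\sum_{y_k} y_k\, p_{f(x),y}(z, y_k)$ and $\frac{1}{n-1}\sum_{i\neq j} k_{\operatorname{Dir}}(z; f(x_i))$ to $p_{f(x)}(z)$, on compact subsets of the simplex interior. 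Since $p_{f(x)}$ is continuous and bounded away from zero there, the continuous mapping theorem promotes this to uniform convergence of the ratio; composing with $\zeta \mapsto \|\zeta - z\|_p^p$, which is uniformly continuous on the compact simplex, gives $\sup_z \bigl|g_n^{(j)}(z) - g(z)\bigr| \xrightarrow{p} 0$ and hence $(\mathrm{II}) \xrightarrow{p} 0$. Combining the two terms yields the stated plim.

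The main obstacle I anticipate is precisely this upgrade from pointwise to uniform convergence, together with the boundary behaviour of the Dirichlet kernel. \Cref{prop:Eyfx_consistent} delivers convergence only at a fixed point, whereas $(\mathrm{II})$ is evaluated at the random, sample-dependent points $f(x_j)$, so pointwise consistency is genuinely insufficient; this is where the Lipschitz hypothesis is consumed via the uniform statement of \citet{ouimet2020}. Additional care is required near the boundary of the simplex, where the denominator can approach zero and the uniform control of the ratio degrades. I would address this by confining the uniform argument to compact interior subsets and bounding the contribution of the remaining boundary region through its small probability mass, using that $f(x)$ admits a density on $\triangle^K$.
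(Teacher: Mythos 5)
Your proposal is correct, and it takes a genuinely different, more careful route than the paper's own proof. The paper argues in three short steps: by its Proposition~\ref{prop:Eyfx_consistent}, the leave-one-out ratio inside the norm is pointwise consistent at any fixed $f(x_j)$ (the $i\neq j$ summation ensuring the ratio does not depend on the outer index); the norm of a convergent sequence converges to the norm of its limit; and then it asserts that ``the outer sum is merely the sample mean of consistent summands, which again is consistent.'' That last assertion is exactly the step you flag as unavailable: the summands $g_n^{(j)}(f(x_j))$ form a dependent triangular array sharing the whole sample, evaluated at random points, so pointwise consistency of each summand does not by itself yield consistency of their average. Your decomposition into the i.i.d.\ term $(\mathrm{I})$, dispatched by the weak law of large numbers, plus the error term $(\mathrm{II})$, controlled by the \emph{uniform} convergence statement of Theorem~4 of \citet{ouimet2020} together with the continuous mapping theorem and a boundary-truncation argument (using boundedness of $\|\cdot\|_p^p$ on the simplex and the vanishing probability mass of the boundary region), is the standard way to make that step rigorous, and it correctly locates where the Lipschitz hypothesis is actually consumed --- in upgrading pointwise to uniform control at the sample-dependent evaluation points. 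In short, the paper's proof buys brevity at the cost of a genuine logical gap, while yours supplies the missing argument; the two agree on the key external ingredient (uniform consistency of Dirichlet KDEs under Lipschitz densities). One shared caveat: like the paper, you need the class-wise joint densities $p_{f(x),y}(\cdot,y_k)$, not only the marginal $p_{f(x)}$, to satisfy the regularity required for the numerator's uniform convergence, which neither proof states explicitly.
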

\begin{proof}
    Dirichlet kernel estimators are consistent when the density is Lipschitz continuous over the simplex \citep[Theorem~4]{ouimet2020}, consequently, by Proposition \ref{prop:Eyfx_consistent} the term inside the norm is consistent for any fixed $f(x_j)$ (note, that summing over $i \neq j$ ensures that the ratio of the KDE's does not depend on the outer summation). Moreover, for any convergent sequence also the norm of that sequence converges to the norm of its limit. Ultimately, the outer sum is merely the sample mean of consistent summands, which again is consistent.
\end{proof}

With this development, we have for the first time a consistent, differentiable, tractable estimator of $L_p$ canonical calibration error with $\mathcal{O}(n^2)$ computational cost and $\mathcal{O}(n^{-1/2})$ convergence rate, with a debiasing scheme that achieves $\mathcal{O}(n^{-2})$ bias for $p\in \{1,2\}$.

\section{Empirical validation of $ECE^{KDE}$}

Accurately evaluating the calibration error is a crucial step towards designing trustworthy models that can be used in societally important settings. The most widely used metric for evaluating miscalibration, and the only other estimator that can be straightforwardly extended to measure canonical calibration, is the histogram-based estimator $ECE^{bin}$. However, as discussed in \citet{vaicenavicius2019,widmann2019,ding2020,ashukha2021}, it has numerous flaws, such as: \begin{inparaenum} [(i)]
\item it is sensitive to the binning scheme
\item it is severely affected by the curse of dimensionality, as the number of bins grows exponentially with the number of classes
\item it is asymptotically inconsistent in many cases.
\end{inparaenum}

To investigate its relationship with our estimator $ECE^{KDE}$, we first introduce an extension of the top-label binned estimator to the probability simplex in the three class setting. We start by partitioning the probability simplex into equally-sized, triangle-shaped bins and assign the probability scores to the corresponding bin, as shown in Figure~{\ref{subfig:scatterplot}}. Then, we define the binned estimate of canonical calibration error as follows:
    $\operatorname{CE}_p(f)^p 
    \approx \mathbb{E}\left[\left\|H(f(x))-f(x)\right\|_p^p\right] 
    \approx  \frac{1}{n} \sum_{i=1}^n \left\|H(f(x_i))-f(x_i)\right\|_p^p$,
where $H(f(x_i))$ is the histogram estimate, shown in Figure~{\ref{subfig:histogram}}. The surface of the corresponding Dirichlet KDE is presented in Figure~{\ref{subfig:surface}}.
 See Appendix~\ref{appendix:binning_vs_kde} for 
 \begin{inparaenum}[(i)] 
 \item an experiment investigating their relationship for the three types of calibration (top-label, marginal, canonical) and with varying number of points used for the estimation, and
 \item another example of the binned estimator and Dirichlet KDE on CIFAR-10.
 \end{inparaenum}

\begin{figure}[ht!]
    \centering
    \subfloat[Splitting the simplex in 16 bins]{
    \label{subfig:scatterplot}
    \includegraphics[width=.3\linewidth]{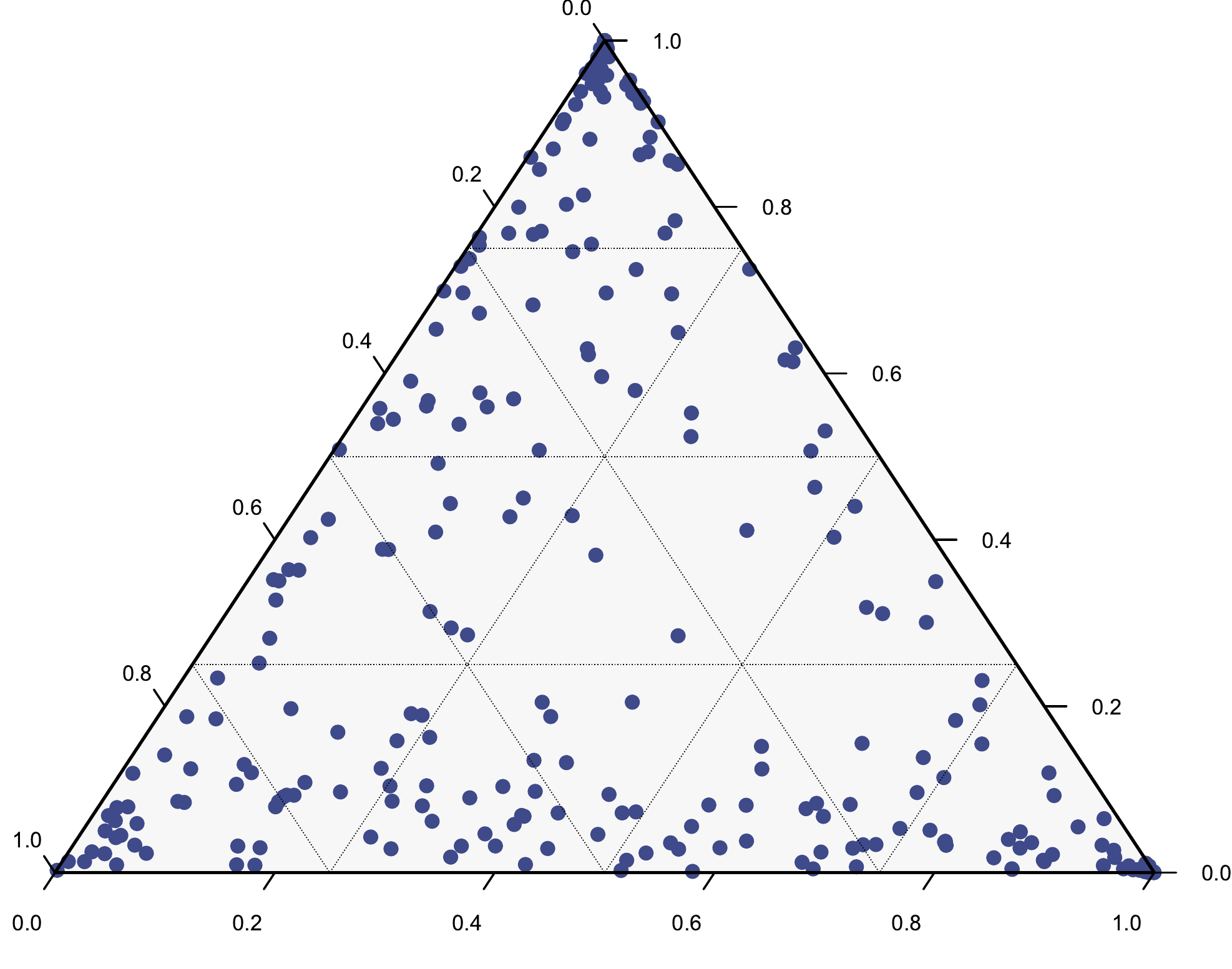}
    } \hfill
    \subfloat[Histogram]{
    \label{subfig:histogram}
    \includegraphics[width=.3\linewidth]{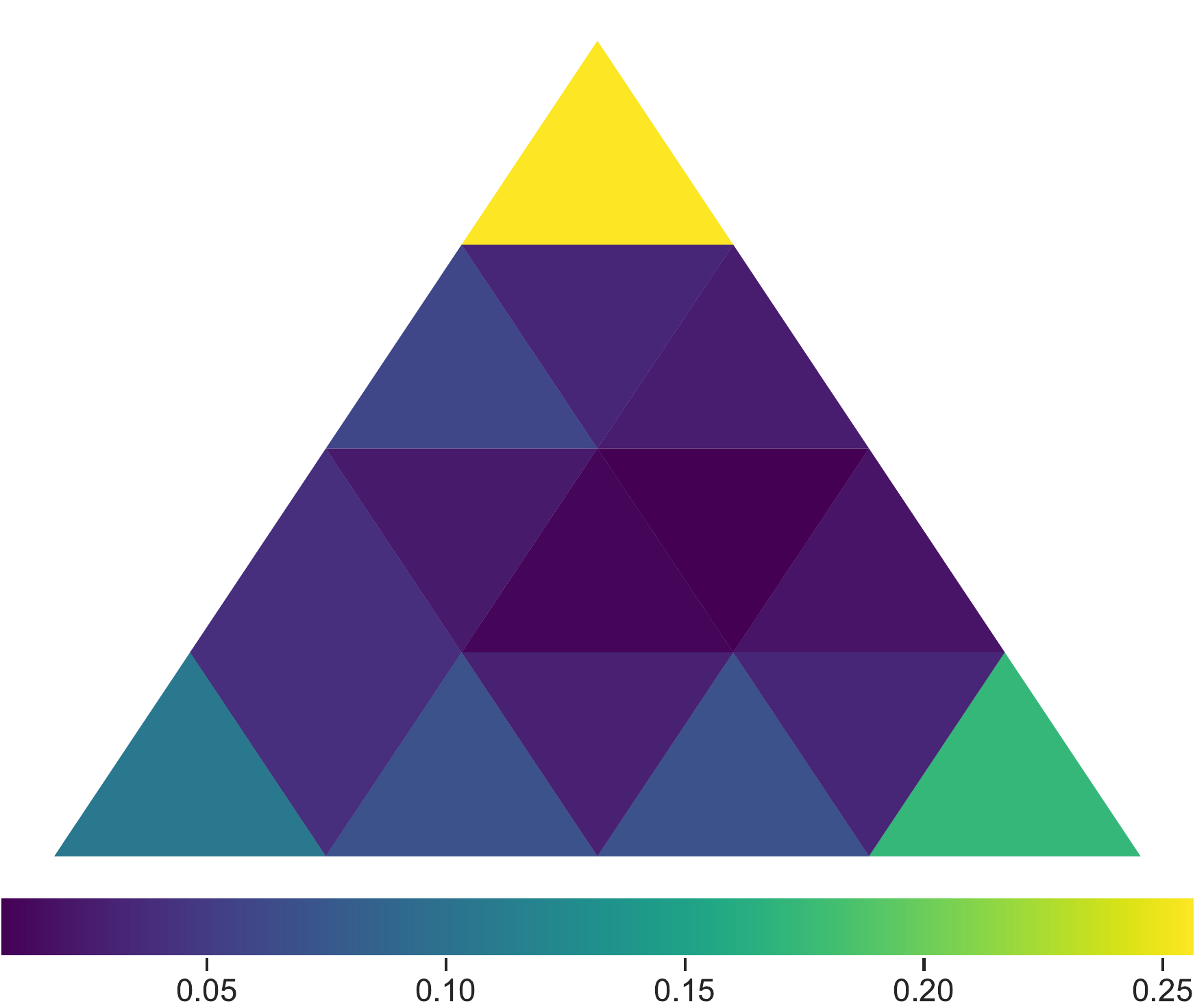} 
    } \hfill
    \subfloat[Dirichlet KDE]{
    \label{subfig:surface}
    \includegraphics[width=.3\linewidth]{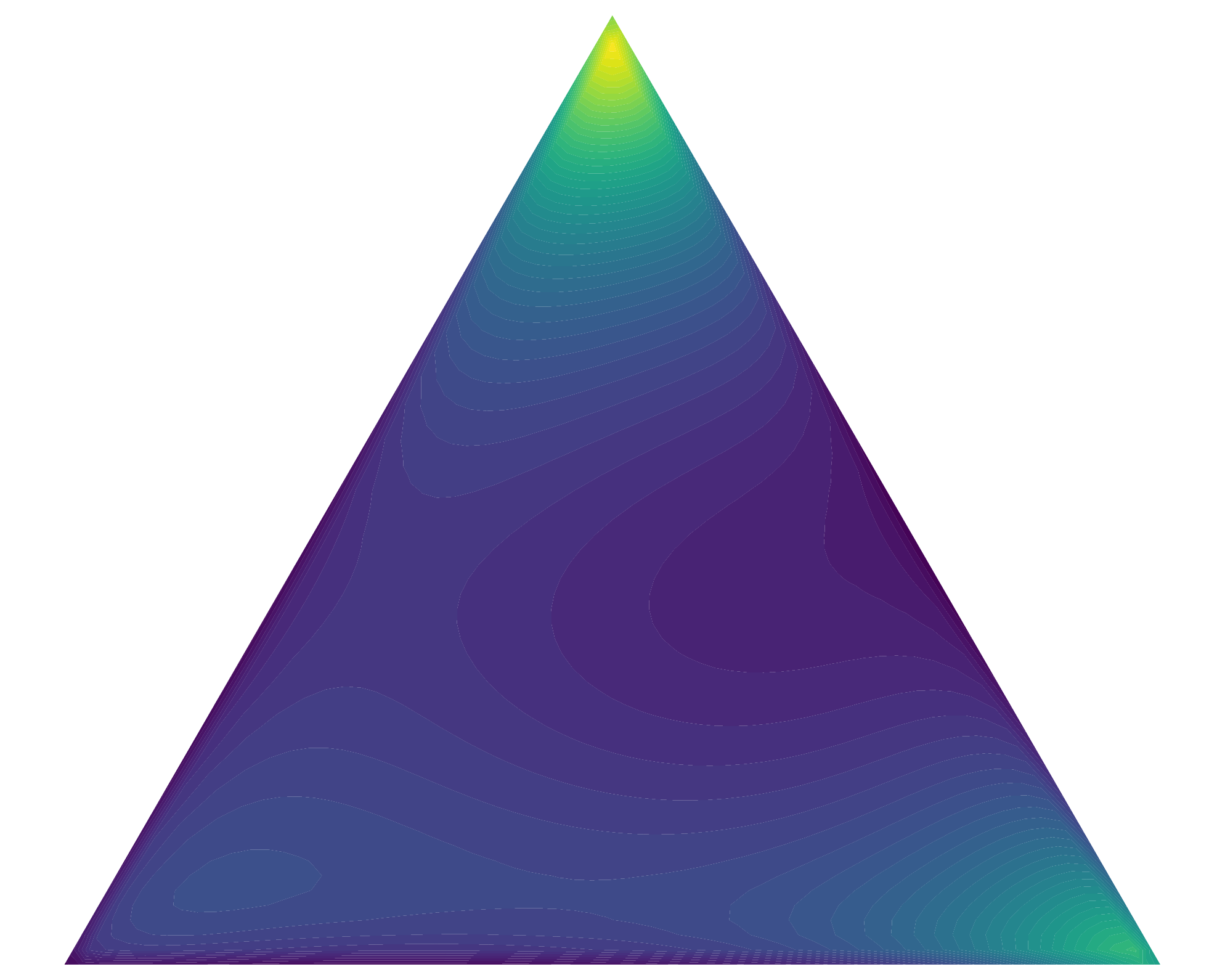}
    }
    \caption{Extension of the binned estimator $ECE^{bin}$ to the probability simplex, compared with 
    the $ECE^{KDE}$. The 
    $ECE^{KDE}$ achieves a better approximation to the finite sample, and accurately models the fact that samples tend to be concentrated near low dimensional faces of the simplex.}
    \label{fig:binned_simplex_estimator}
\end{figure}

\crefformat{subfigure}{#2#1#3}
\crefrangeformat{subfigure}{#3#1#4--#5#2#6}

\paragraph{Synthetic experiments}
We consider an extension of $ECE^{bin}$ to arbitrary number of classes and investigate its performance compared to $ECE^{KDE}$. Since on real data the ground truth calibration error is unknown, we generate synthetic data with known transformations with the following procedure. First, we sample uniformly from the simplex using the Kraemer algorithm \citep{smith2004}. Then, we apply temperature scaling with $t_1=0.6$ to simulate realistic scenarios where the probability scores are concentrated along lower dimensional faces of the simplex. We generate ground truth labels according to the sampled probabilities and therefore obtain a perfectly calibrated classifier. Subsequently, the classifier is miscalibrated by additional temperature scaling with $t_2=0.6$.
Figure \ref{fig:synthetic_experiment} depicts the performance of the two estimators as a function of the sample size on generated data for 4 and 8 classes. $ECE^{KDE}$ converges to the ground truth value obtained by integration in both cases, whereas $ECE^{bin}$ provides poor estimates even with 20000 points. \\
In another experiment with synthetic data we look at the bias of the sharpness\footnote{The sharpness is defined as $\operatorname{Var}(\mathbb{E}[y \mid f(x)])$ \citep{kuleshov2015}. Here we neglect the term that does not depend on $f(x)$, and thereby refer to ${\mathbb{E}\Bigl[\mathbb{E}[y\mid f(x)]^2\Bigr]}$ as the sharpness.} term in a binary setting. In Figure \ref{fig:debiasing_experiment} we plot the estimated value of the sharpness term for varying number of samples, both using the partially debiased ratio from Equation (\ref{eq:SharpnessSlightlyDebiased}) and the ratio debiased with the scheme introduced in Appendix \ref{App:L2_debiasing}. A sigmoidal function is applied to the calibrated data to obtain an uncalibrated sample that is used to compute the partially debiased and the fully debiased ratio of the sharpness term. The ground truth value is obtained by using 100 million samples to compute the ratio with the partially debiased version, as it converges asymptotically to the true value due to its consistency. We use a bandwidth of 0.5 and average over 10000 repetitions for each number of samples that range from 32 to 16384. We fix the location of the KDE at $f(x_j)=0.17$.
\begin{figure}[ht!]
    \centering
    \subfloat[$ECE^{bin}$ vs. $ECE^{KDE}$]{
    \includegraphics[width=0.6\linewidth]{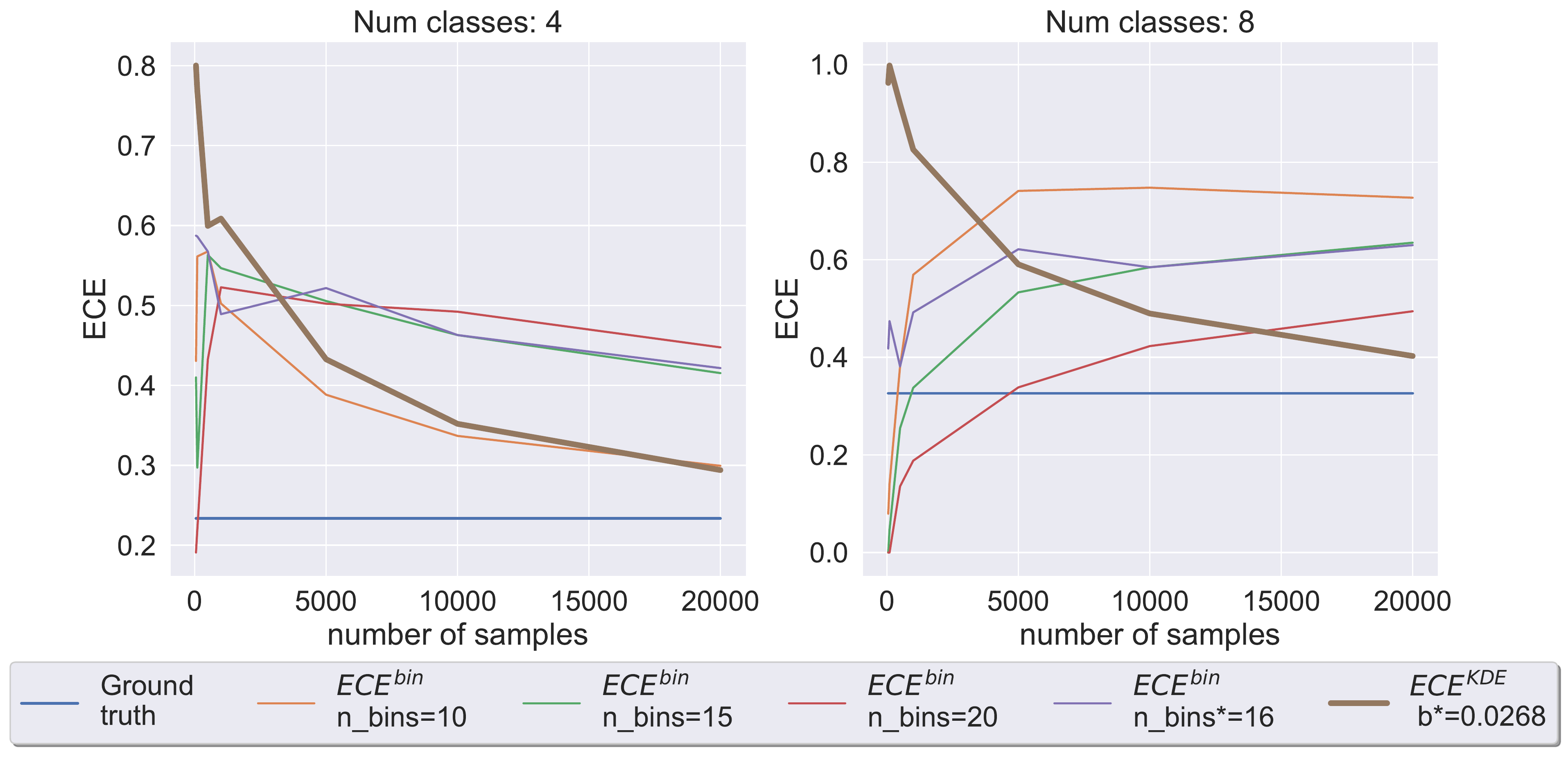}
    \label{fig:synthetic_experiment}
    }
    \subfloat[Debiasing]{
    \includegraphics[width=0.37\linewidth]{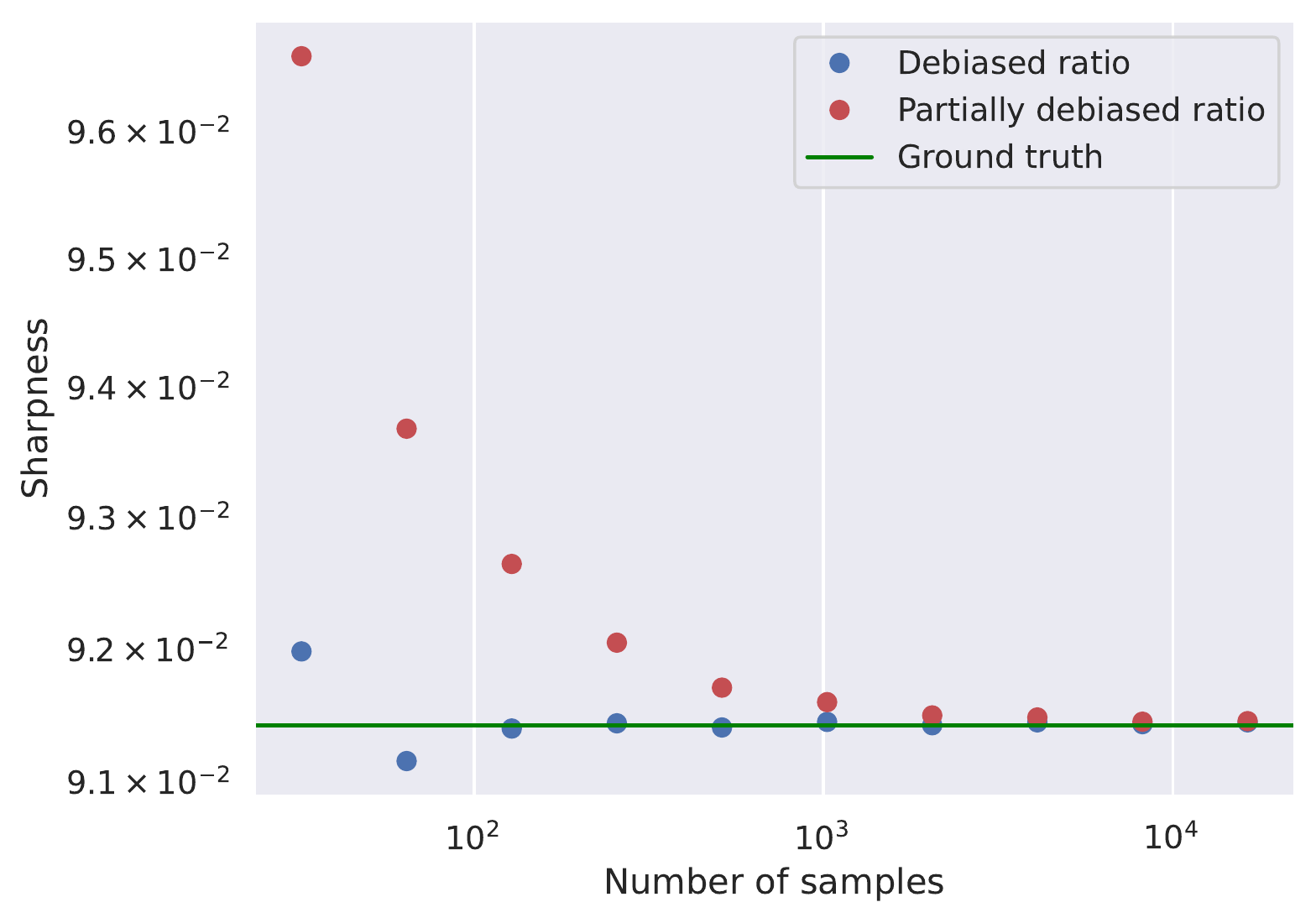}
    \label{fig:debiasing_experiment}
    }

    \caption{ \ref{fig:synthetic_experiment} Performance of $ECE^{bin}$ and $ECE^{KDE}$ on synthetic data for varying number of classes, as a function of the sample size. Ground truth represents the true value of the integral. $ECE^{bin}$ is calculated using several common choices for the number of bins (\textit{n\_bins} represents number of bins per-class.) \textit{n\_bins\*} and \textit{b\*} are found as optimal values according to Doane's formula \citep{doanesformula1976} and LOO MLE, respectively. $ECE^{KDE}$ converges to the true value in all settings, in contrast to $ECE^{bin}$. \ref{fig:debiasing_experiment} Sharpness term evaluated for different numbers of samples with the partially debiased ratio from Equation (\ref{eq:SharpnessSlightlyDebiased}) and with the debiasing scheme derived in Appendix \ref{App:L2_debiasing} on synthetic data. }
\end{figure}

\section{Calibration regularized training}
\paragraph{Empirical setup}
To showcase our estimator in applications where canonical calibration is crucial, we consider two medical datasets, namely Kather and DermaMNIST. The Kather dataset \citep{Kather2016} consists of 5000 histological images of human colorectal cancer and it has eight different classes of tissue. 
DermaMNIST \citep{yang2021} is a pre-processed version of the HAM10000 dataset \citep{tschandl2018}, containing 10015 dermatoscopic images of skin lesions, categorized in seven classes. Both datasets have been collected in accordance with the Declaration of Helsinki. 
According to standard practice in related works, we trained ResNet \citep{he2015}, ResNet with stochastic depth (SD) \citep{huang2016}, DenseNet \citep{huang2018} and WideResNet \citep{zagoruyko2017} networks also on CIFAR-10/100 \citep{krizhevsky2009}. 
We use 45000 images for training on the CIFAR datasets, 4000 for Kather and 7007 for DermaMNIST. The code is available at \url{https://github.com/tpopordanoska/ece-kde}.
\paragraph{Baselines}
\textit{Cross-entropy}: The first baseline model is trained using cross-entropy (\textbf{XE}), with the data preprocessing, training procedure and hyperparameters described in the corresponding paper for the architecture. 

\textit{Trainable calibration strategies} \textbf{KDE-XE} denotes our proposed estimator $ECE^{KDE}$, as defined in \Cref{eq:canonical_estimator}, jointly trained with cross entropy. \textbf{MMCE} \citep{kumar2018} is a differentiable measure of calibration with a property that it is minimized at perfect calibration, i.e., MMCE is 0 if and only if $\operatorname{CE}_p=0$. It is used as a regulariser alongside NLL, with the strength of regularization parameterized by $\lambda$. \textbf{Focal loss (FL)} \citep{mukhoti2020} is an alternative to the cross-entropy loss, defined as $\mathcal{L}_f = -(1 - f(y|x))^\gamma \log (f(y|x))$, where $\gamma$ is a hyperparameter and $f(y|x)$ is the probability score that a neural network $f$ outputs for a class $y$ on an input $x$. Their best-performing approach is the sample-dependent FL-53 where $\gamma = 5$ for $f(y|x) \in [0, 0.2)$ and $\gamma = 3$ otherwise, followed by the method with fixed $\gamma = 3$. 

\textit{Post-hoc calibration strategies} \citet{guo2017} investigated the performance of several post-hoc calibration methods and found \textbf{temperature scaling} to be a strong baseline, which we use as a representative of this group. It works by scaling the logits with a scalar $T > 0$, typically learned on a validation set by minimizing NLL. 
Following \citet{kumar2018,mukhoti2020}, we also use temperature scaling as a post-processing step for our method.

\paragraph{Metrics}
We report $L_1$ canonical calibration using our $ECE^{KDE}$ estimator, calculated according to \Cref{eq:canonical_estimator}. 
Additional experiments with $L_1$ and $L_2$ top-label calibration on CIFAR-10/100 can be found in Appendix~{\ref{appendix:results}}.

\paragraph{Hyperparameters}

\begin{wrapfigure}{R}{5.5cm}
    \centering
    \includegraphics[width=\linewidth]{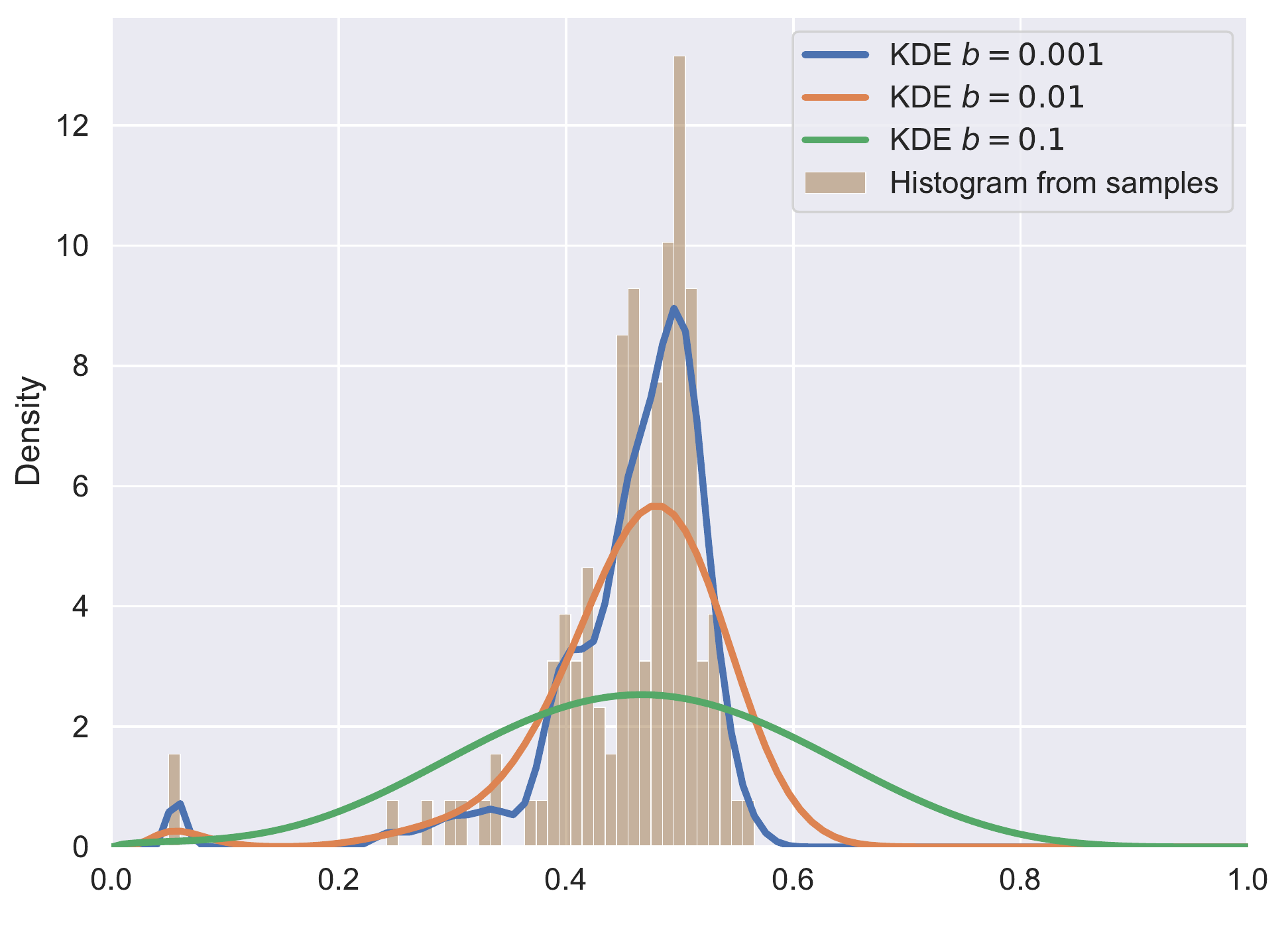}
    \caption{Effect of the bandwidth $b$ on the shape of the estimate.}
    \label{fig:bandwidth_selection}
\end{wrapfigure}

A crucial parameter for KDE is the bandwidth $b$, a positive number that defines the smoothness of the density plot. Poorly chosen bandwidth may lead to undersmoothing (small bandwidth) or oversmoothing (large bandwidth), as shown in Figure \ref{fig:bandwidth_selection}. A commonly used non-parametric bandwidth selector is maximum likelihood cross validation \citep{duin1976}. For our experiments we choose the bandwidth from a list of possible values by maximizing the leave-one-out likelihood (LOO MLE). The $\lambda$ parameter for weighting the calibration error w.r.t the loss is typically chosen via cross-validation or using a holdout validation set. We found that for KDE-XE, values of $\lambda \in [0.001, 0.2]$ provide a good trade-off in terms of accuracy and calibration error. The $p$ parameter is selected depending on the desired $L_p$ calibration error and the corresponding theoretical guarantees. The rest of the hyperparameters for training are set as proposed in the corresponding papers for the architectures we benchmark. In particular, for the CIFAR-10/100 datasets we used a batch size of 64 for DenseNet and 128 for the other architectures. For the medical datasets, we used a batch size of 64, due to their smaller size.

\subsection{Experiments}

An important property of our $ECE^{KDE}$ estimator is differentiability, allowing it be used in a calibration regularized training framework. In this section, we benchmark KDE-XE with several baselines
on medical diagnosis applications, where the calibration of the whole probability vector is of particular interest.
For completeness, we also include an experiment on CIFAR-10. 

Table~\ref{table:ece_tab1} summarizes the canonical $L_1$ $ECE^{KDE}$ and Table~\ref{table:acc_tab1} the accuracy, measured across multiple architectures. The bandwidth is chosen by LOO MLE. For MMCE and KDE-XE the best performing regularization weight is reported. 
In Table~\ref{table:ece_tab1} we notice that KDE-XE consistently achieves very competitive ECE values, while also boosting the accuracy, as shown in Table~{\ref{table:acc_tab1}}. Interestingly, we observe that temperature scaling does not improve canonical calibration error, contrary to its reported improvements on top-label calibration. This observation that temperature scaling is less effective for stronger notions of calibration is consistent with a similar finding in \cite{kull2019}, where the authors show that although the temperature-scaled model has well calibrated top-label confidence scores, the calibration error is much larger for class-wise calibration. 

\begin{table*}[ht]
	\centering
	\footnotesize
	\caption{Canonical $L_1$ $ECE^{KDE}$ ($\downarrow$) for different loss functions and architectures, both trained from scratch (Pre T) and after temperature scaling on a validation set (Post T). Best results across Pre T methods are marked in bold.}
	\resizebox{0.9\linewidth}{!}{%
		\begin{tabular}{cccccccccc}
			\toprule
			Dataset & Model & \multicolumn{2}{c}{XE} & \multicolumn{2}{c}{MMCE}
			& \multicolumn{2}{c}{FL-53} & \multicolumn{2}{c}{\textbf{$L_1$ KDE-XE (Our)} } \\
			&& Pre T & Post T & Pre T & Post T & Pre T & Post T & Pre T & Post T  \\
			\midrule
			
			\multirow{4}{*}{Kather} 
			& ResNet-110 &  0.335 & 0.304 & 0.343 & 0.300 & 0.325 & 0.248 & \textbf{0.311} & 0.289\\
			& ResNet-110 (SD) & 0.329 & 0.334 &  0.235 & 0.159 & 0.209 & 0.122 &  \textbf{0.198} & 0.147\\
			& Wide-ResNet-28-10 & 0.177 & 0.259 & 0.201 & 0.241 & 0.270 & 0.328 & \textbf{0.162} & 0.212\\
			& DenseNet-40 & 0.244 & 0.251 & 0.159 & 0.218 & 0.165 & 0.207 & \textbf{0.114} & 0.154  \\
			\midrule
			\multirow{4}{*}{DermaMNIST} 
			& ResNet-110 & 0.579 & 0.602 & 0.575 & 0.603 & 0.684 & 0.618 & \textbf{0.467} & 0.516\\
			& ResNet-110 (SD) & 0.534 & 0.571 & 0.470 & 0.526 & 0.567 & 0.594 & \textbf{0.461} & 0.538\\
			& Wide-ResNet-28-10 & 0.546 & 0.599 & 0.470 & 0.512 & 0.623 & 0.608 & \textbf{0.455} & 0.599 \\
			& DenseNet-40 & 0.573 & 0.578 & 0.514 & 0.558 & 0.577 & 0.557 & \textbf{0.366} & 0.418 \\
			\midrule
			\multirow{4}{*}{CIFAR-10} 
			& ResNet-110 & 0.133 & 0.170 & 0.171 & 0.196 & 0.138 & 0.171 & \textbf{0.126} & 0.163\\
			& ResNet-110 (SD) & \textbf{0.132} & 0.172 & 0.164 & 0.203 & 0.156 & 0.201 & 0.178 & 0.223\\
			& Wide-ResNet-28-10 & 0.083 & 0.098 & 0.143 & 0.155 & 0.147 & 0.177& \textbf{0.077} & 0.091\\
			& DenseNet-40 & 0.104 & 0.131 &  0.133 & 0.155 & 0.081 &0.081 & \textbf{0.090} & 0.124\\
			\bottomrule
		\end{tabular}%
	}
	\label{table:ece_tab1}
\end{table*}

\begin{table}[ht]
	\centering
	\caption{Accuracy ($\uparrow$) computed for different architectures. Best results are marked in bold.}
	\resizebox{0.7\linewidth}{!}{%
		\begin{tabular}{cccccccccc}
			\toprule
			Dataset & Model & XE & MMCE & FL-53 & \textbf{$L_1$ KDE-XE} \\
			\midrule
			\multirow{4}{*}{Kather} 
			& ResNet-110 &  0.840 & \textbf{0.860} & 0.844 & \textbf{0.860} \\
			& ResNet-110 (SD) & 0.870 & 0.900 & 0.885 & \textbf{0.914} \\
			& Wide-ResNet-28-10 & \textbf{0.933} & 0.899 & 0.873 & 0.921 \\
			& DenseNet-40 & 0.913 & 0.93 & 0.916 & \textbf{0.941} \\
			\midrule
			\multirow{4}{*}{DermaMNIST} 
			& ResNet-110 & 0.720 & 0.721 & 0.674 & \textbf{0.744} \\
			& ResNet-110 (SD) & 0.743 & 0.753 & 0.689 & \textbf{0.764} \\
			& Wide-ResNet-28-10 & 0.736 & 0.741 & 0.715 & \textbf{0.754}\\
			& DenseNet-40 & 0.741 & \textbf{0.758} & 0.705 & 0.748 \\
			\midrule
			\multirow{4}{*}{CIFAR-10} 
			& ResNet-110 & 0.925 & \textbf{0.929} & 0.922 & \textbf{0.929}\\
			& ResNet-110 (SD) &  \textbf{0.926} & 0.925 & 0.92 & 0.907\\
			& Wide-ResNet-28-10 & \textbf{0.954} & 0.947 & 0.936 & \textbf{0.954}\\
			& DenseNet-40 & 0.947 & 0.944 & \textbf{0.948} & 0.947\\
			\bottomrule
		\end{tabular}%
	}
	\label{table:acc_tab1}
\end{table}

Figure \ref{fig:canonical_calibration} shows the performance of several architectures and datasets in terms of accuracy and $L_1$ $ECE^{KDE}$ for various choices of the regularization parameter for MMCE and KDE-XE. The 95$\%$ confidence intervals for $ECE^{KDE}$ are calculated using 100 and 10 bootstrap samples on the medical datasets and CIFAR-10, respectively. In all settings, KDE-XE Pareto dominates the competitors, for several choices of $\lambda$. For example, on DermaMNIST trained with DenseNet, KDE-XE with $\lambda=0.2$ reduces $ECE^{KDE}$ from 66\% to 45\%.
\begin{figure}[ht!]
    \centering
    \subfloat[ResNet-110 (SD) on Kather]{
    \label{subfig:kather}
    \includegraphics[width=.33\linewidth]{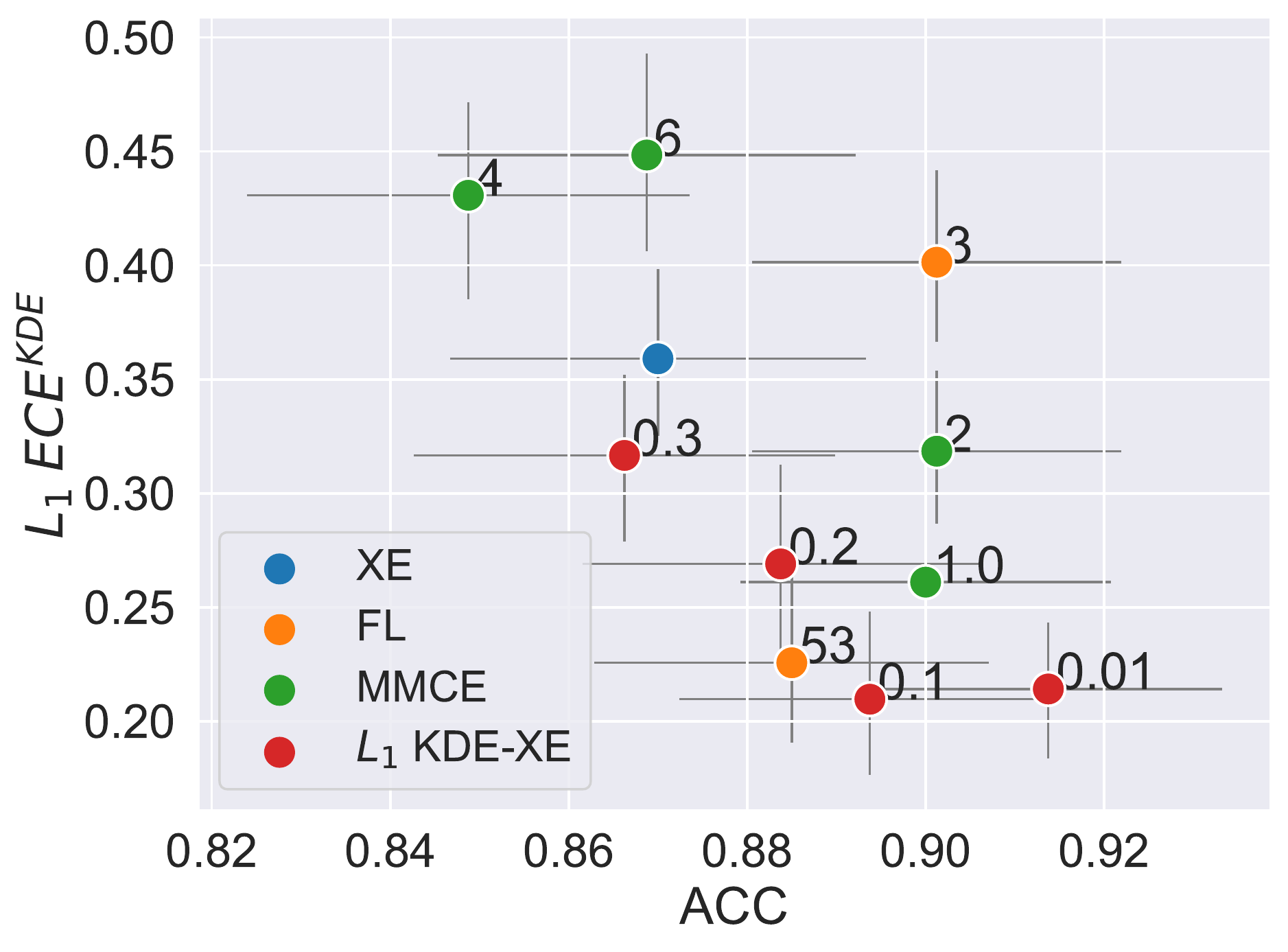}
    } 
    \subfloat[DenseNet on DermaMNIST]{
    \label{subfig:derma}
    \includegraphics[width=.33\linewidth]{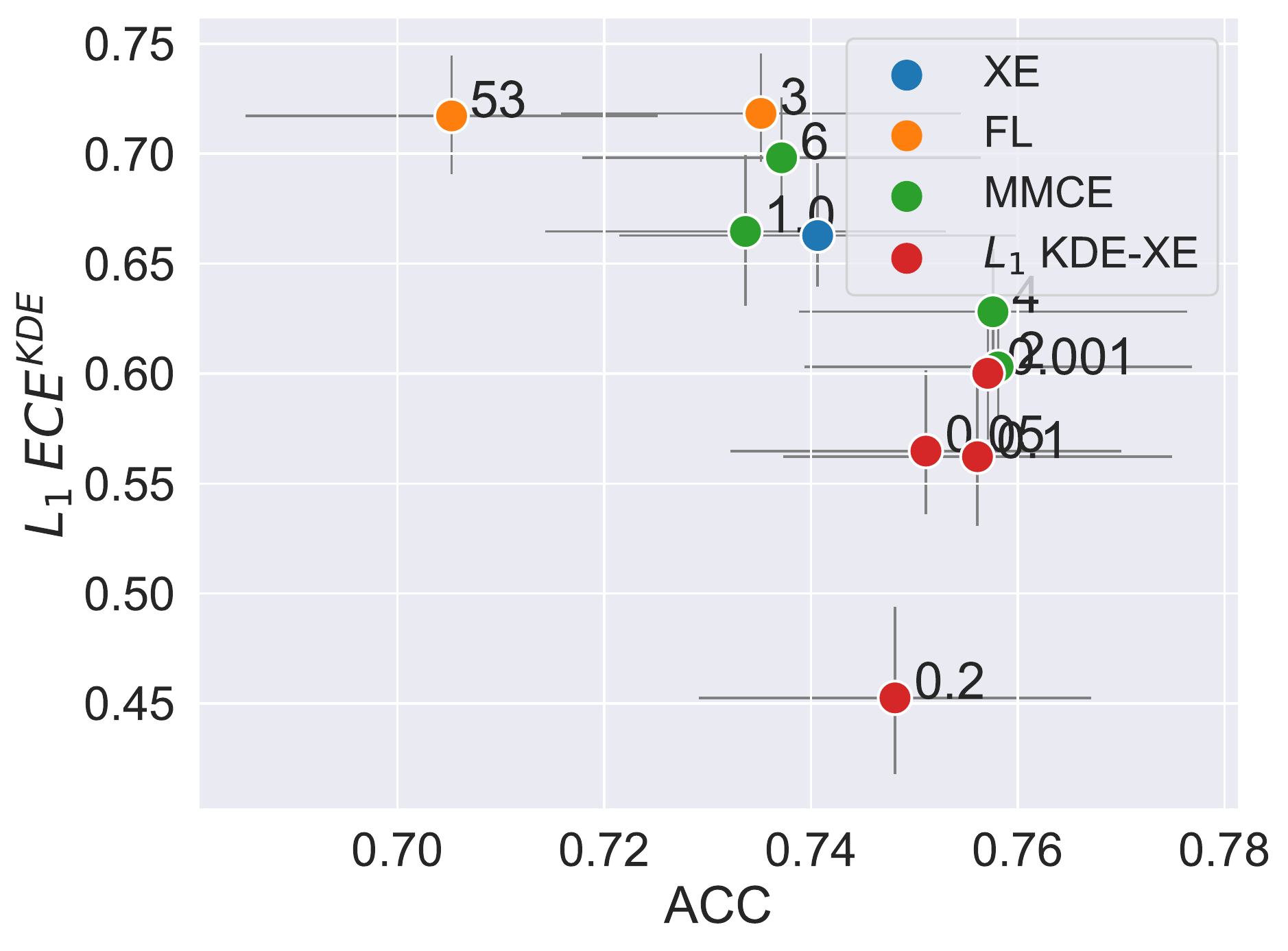} 
    } 
    \subfloat[ResNet-110 on CIFAR-10]{
    \label{subfig:cifar}
    \includegraphics[width=.33\linewidth]{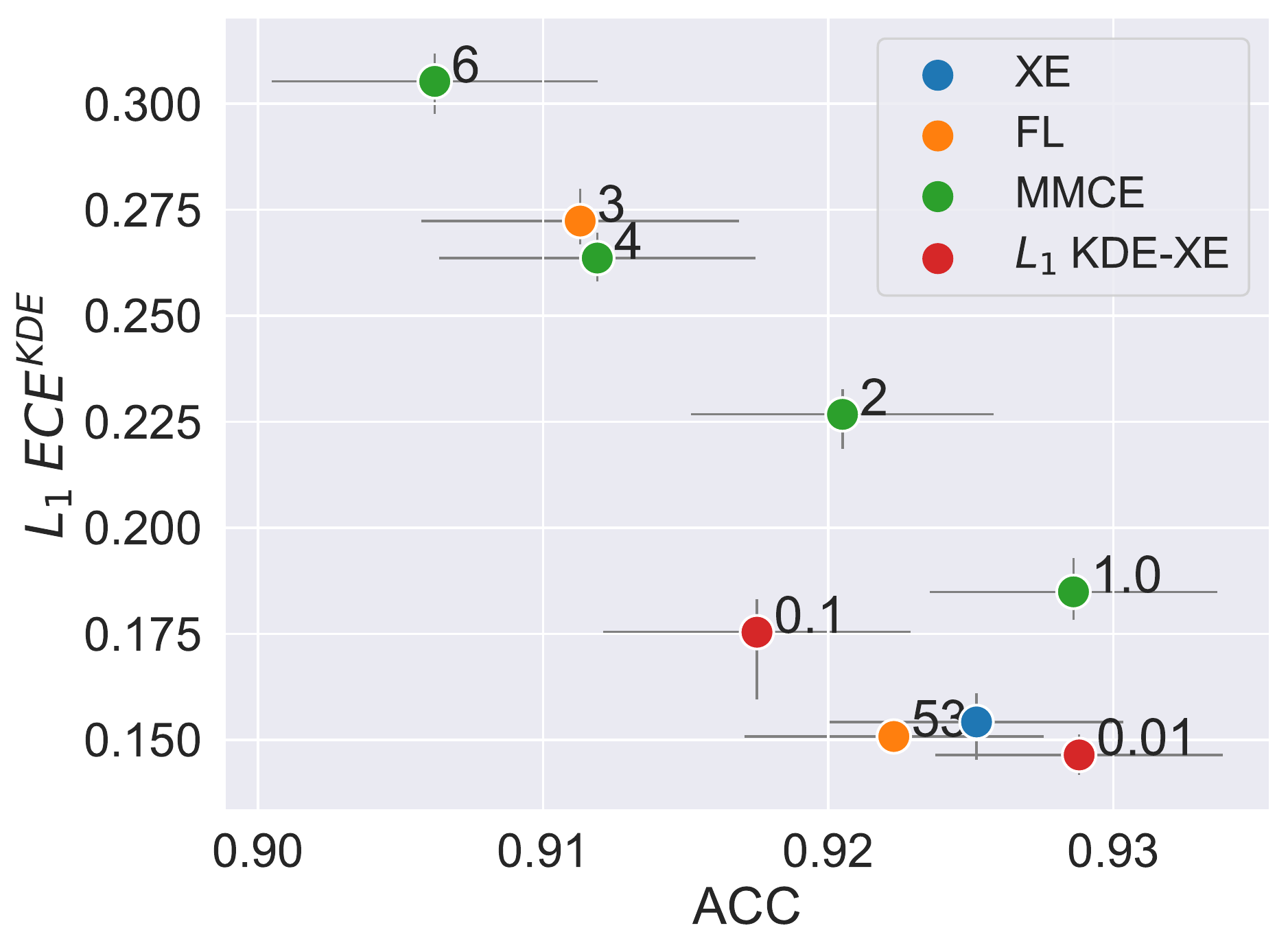}
    }
    \caption{Canonical calibration on various datasets and architectures. The numbers next to the points denote the value of the regularization parameter. KDE-XE outperforms the competitors, both in terms of accuracy and calibration error, for several choices of $\lambda$. }
    \label{fig:canonical_calibration}
\end{figure}

\paragraph{Training time measurements}
\label{subsection:training_times}

\begin{wraptable}{r}{7cm}
\caption{Training time [sec] per epoch for XE and KDE-XE for different models on CIFAR-10.}
	\centering
    \footnotesize
	\resizebox{\linewidth}{!}{%
    	\begin{tabular}{cccc}
    		\toprule
    		\textbf{Dataset} & \textbf{Model} & \textbf{XE} &  \textbf{$L_1$ KDE-XE}  \\
    		\midrule
        	\multirow{4}{*}{CIFAR-10} 
            & ResNet-110 & 51.8 & 53 \\
            & ResNet-110 (SD) & 45 & 46\\
            & Wide-ResNet-28-10 & 152.9 & 154.9\\
            & DenseNet-40 & 103.2 & 106.8\\
    		\bottomrule
    	\end{tabular}%
	}
	\label{table:training_times}
\end{wraptable}

In Table~\ref{table:training_times} we summarize the running time per epoch of the four architectures, with regularization (KDE-XE) and without regularization (XE). We observe only an insignificant impact on the training speed when using KDE-XE, dispelling any concerns w.r.t. the computational overhead. 

To summarize, the experiments show that our estimator is consistently producing competitive calibration errors with other state-of-the-art approaches, while maintaining accuracy and keeping the computational complexity at $\mathcal{O}(n^2)$. We note that within the proposed calibration-regularized training framework, this complexity is w.r.t. to a mini-batch, and the added cost is less than a couple percent. Furthermore, the $\mathcal{O}(n^2)$ complexity shows up in other related works \citep{kumar2018,zhang2020}, and is intrinsic to the problem of density estimators of calibration error. As a future work, a larger scale benchmarking will be beneficial for exploring the limits of canonical calibration using Dirichlet kernels. 

\section{Conclusion}
In this paper, we proposed a consistent and differentiable estimator of canonical $L_p$ calibration error using Dirichlet kernels. It has favorable computational and statistical properties, with a complexity of $\mathcal{O}(n^2)$, convergence of $\mathcal{O}(n^{-1/2})$ and a bias that converges as $\mathcal{O}(n^{-1})$, which can be further reduced to $\mathcal{O}(n^{-2})$ using our debiasing strategy. The $ECE^{KDE}$ can be directly optimized alongside any loss function in the existing batch stochastic gradient descent framework. Furthermore, we propose using it as a measure of the highest form of calibration, which requires the entire probability vector to be calibrated. To the best of our knowledge, this is the only metric that can tractably capture this type of calibration, which is crucial in safety-critical applications where downstream decisions are made based on the predicted probabilities. We showed empirically on a range of neural architectures and datasets that the performance of our estimator in terms of accuracy and calibration error is competitive against the current state-of-the-art, while having superior properties as a consistent estimator of canonical calibration error.

\section*{Acknowledgments}
 This research received funding from  the Research Foundation - Flanders (FWO) through project number S001421N, and the Flemish Government under  the  ``Onderzoeksprogramma  Artifici\"{e}le  Intelligentie (AI) Vlaanderen'' programme. R.S. was supported in part by the Tübingen AI centre.

\section*{Ethical statement}
\label{App:Ethical_statement}
The paper is concerned with estimation of calibration error, a topic for which existing methods are deployed, albeit not typically for canonical calibration error in a multi-class setting.  We therefore consider the ethical risks to be effectively the same as for any probabilistic classifier.  Experiments apply the method to medical image classification, for which misinterpretation of benchmark results with respect to their clinical applicability has been highlighted as a risk, see e.g.\ \cite{varoquaux2022}.


\bibliographystyle{plainnat}
\bibliography{refs}

\section*{Checklist}

\begin{enumerate}

\item For all authors...
\begin{enumerate}
  \item Do the main claims made in the abstract and introduction accurately reflect the paper's contributions and scope?
    \answerYes{}
  \item Did you describe the limitations of your work?
    \answerYes{}
  \item Did you discuss any potential negative societal impacts of your work?
    \answerYes{Please refer to our ethical statement}.
  \item Have you read the ethics review guidelines and ensured that your paper conforms to them?
    \answerYes{}
\end{enumerate}

\item If you are including theoretical results...
\begin{enumerate}
  \item Did you state the full set of assumptions of all theoretical results?
    \answerYes{}
        \item Did you include complete proofs of all theoretical results?
    \answerYes{}
\end{enumerate}

\item If you ran experiments...
\begin{enumerate}
  \item Did you include the code, data, and instructions needed to reproduce the main experimental results (either in the supplemental material or as a URL)?
    \answerYes{It is in the supplementary material}
  \item Did you specify all the training details (e.g., data splits, hyperparameters, how they were chosen)?
    \answerYes{}
        \item Did you report error bars (e.g., with respect to the random seed after running experiments multiple times)?
    \answerYes{See Figure \ref{fig:canonical_calibration}}
        \item Did you include the total amount of compute and the type of resources used (e.g., type of GPUs, internal cluster, or cloud provider)?
    \answerYes{}\answerNo{} Table~\ref{table:training_times} includes compute times. Most of our results are provided in big O complexity.
\end{enumerate}

\item If you are using existing assets (e.g., code, data, models) or curating/releasing new assets...
\begin{enumerate}
  \item If your work uses existing assets, did you cite the creators?
    \answerYes{}
  \item Did you mention the license of the assets?
    \answerNo{} We do not release the data.  Data license is available via the citation.
  \item Did you include any new assets either in the supplemental material or as a URL?
    \answerNo{}
  \item Did you discuss whether and how consent was obtained from people whose data you're using/curating?
    \answerYes{} Medical datasets used in this paper conform to the Declaration of Helsinki.
  \item Did you discuss whether the data you are using/curating contains personally identifiable information or offensive content?
    \answerYes{} Medical datasets used in this paper conform to the Declaration of Helsinki.
\end{enumerate}

\item If you used crowdsourcing or conducted research with human subjects...
\begin{enumerate}
  \item Did you include the full text of instructions given to participants and screenshots, if applicable?
    \answerNA{We did not use crowdsourcing or conducted research with human subjects}
  \item Did you describe any potential participant risks, with links to Institutional Review Board (IRB) approvals, if applicable?
    \answerNA{We did not use crowdsourcing or conducted research with human subjects}
  \item Did you include the estimated hourly wage paid to participants and the total amount spent on participant compensation?
    \answerNA{We did not use crowdsourcing or conducted research with human subjects}
\end{enumerate}

\end{enumerate}



\appendix

\section{Additional derivations}
\label{appendix:mse_decomposition}

\subsection{Derivation of the MSE decomposition}

\begin{definition}[Mean Squared Error (MSE)]
The mean squared error of an estimator is
\begin{align}
    \operatorname{MSE}(f) := \mathbb{E}[(f(x) - y)^2] .
\end{align}
\end{definition}

\begin{proposition}
    \label{prop:MSEgeqCE2}
    $\operatorname{MSE}(f)\geq \operatorname{CE}_2(f)^2$
\end{proposition}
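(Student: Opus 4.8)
The plan is to establish the stronger \emph{exact} identity
\[
\operatorname{MSE}(f) = \mathbb{E}\bigl[(y - \mathbb{E}[y\mid f(x)])^2\bigr] + \operatorname{CE}_2(f)^2 ,
\]
from which the claimed inequality is immediate by nonnegativity of the first summand. First I would abbreviate $g(x) := \mathbb{E}[y\mid f(x)]$ and note that in this binary, scalar setting Definition~\ref{def:calibration-err} with $p=2$ reads $\operatorname{CE}_2(f)^2 = \mathbb{E}[(g(x) - f(x))^2]$, since the norm collapses to a squared difference.

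Next I would insert $g(x)$ into the residual $y - f(x)$ and expand the square:
\begin{align*}
\operatorname{MSE}(f) &= \mathbb{E}\bigl[\bigl((y - g(x)) + (g(x) - f(x))\bigr)^2\bigr] \\
&= \mathbb{E}[(y-g(x))^2] + 2\,\mathbb{E}[(y-g(x))(g(x)-f(x))] + \mathbb{E}[(g(x)-f(x))^2].
\end{align*}
The third term is exactly $\operatorname{CE}_2(f)^2$, so the whole argument reduces to showing that the cross term vanishes. This cross term is the crux, and the single step I would treat most carefully. I would condition on $f(x)$ and use the tower property: because both $g(x)$ and $f(x)$ are $\sigma(f(x))$-measurable, the factor $g(x)-f(x)$ can be pulled out of the inner conditional expectation, giving
\[
\mathbb{E}[(y - g(x))(g(x)-f(x))] = \mathbb{E}\bigl[(g(x)-f(x))\,\mathbb{E}[\,y - g(x)\mid f(x)\,]\bigr] = 0 ,
\]
where the inner conditional expectation is zero almost surely by the very definition $g(x) = \mathbb{E}[y\mid f(x)]$.

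Combining these steps yields $\operatorname{MSE}(f) = \mathbb{E}[(y-g(x))^2] + \operatorname{CE}_2(f)^2$, and since $\mathbb{E}[(y - g(x))^2] \geq 0$ the bound $\operatorname{MSE}(f) \geq \operatorname{CE}_2(f)^2$ follows. Conceptually this is the classical reliability/resolution/uncertainty (Brier score) decomposition specialized so as to isolate the calibration (reliability) component, consistent with the references cited around \Cref{eq:mse_binary}; everything except the measurability argument for the cross term is a routine expansion of a square.
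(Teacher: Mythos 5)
Your proof is correct, and it diverges from the paper's at the key step. Both arguments start from the same decomposition of the residual, $y - f(x) = (y - \mathbb{E}[y\mid f(x)]) + (\mathbb{E}[y\mid f(x)] - f(x))$, but you then kill the cross term by the tower property (pulling out the $\sigma(f(x))$-measurable factor), which yields the clean refinement/reliability identity $\operatorname{MSE}(f) = \mathbb{E}[(y-\mathbb{E}[y\mid f(x)])^2] + \operatorname{CE}_2(f)^2$ valid for \emph{any} square-integrable $y$. The paper never isolates the cross term; instead it expands all terms and evaluates the combined residual (refinement plus cross term) by conditioning on $f(x)$ and summing explicitly over the two outcomes $y \in \{0,1\}$, arriving at the Bernoulli-variance form $\operatorname{MSE}(f)-\operatorname{CE}_2(f)^2 = \mathbb{E}\bigl[(1-\mathbb{E}[y\mid f(x)])\mathbb{E}[y\mid f(x)]\bigr]$ in \Cref{eq:MSEminusCE2}. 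The two results agree in the binary setting: since $y \in \{0,1\}$, your first summand is $\mathbb{E}[\operatorname{Var}(y\mid f(x))]$, which equals the paper's expectation of Bernoulli variances. What your route buys is generality and brevity — the measure-theoretic argument is the textbook one and needs no case analysis. What the paper's route buys is the specific closed form in terms of $\mathbb{E}[y\mid f(x)]$ alone, which is not incidental: that exact expression is what gets substituted in the derivation of \Cref{eq:mse_binary} to convert the calibration-regularized objective into the sharpness term $\mathbb{E}[\mathbb{E}[y\mid f(x)]^2]$, the quantity the estimator in \Cref{eq:SharpnessSlightlyDebiased} targets. If you wanted your proof to serve the same downstream purpose, you would add one line noting that for binary $y$, $\mathbb{E}[(y-\mathbb{E}[y\mid f(x)])^2] = \mathbb{E}\bigl[\mathbb{E}[y\mid f(x)](1-\mathbb{E}[y\mid f(x)])\bigr]$.
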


\begin{proof}
\begin{align}
    \operatorname{MSE}(f):=&\mathbb{E}[(f(x) - y))^2] 
    = \mathbb{E}[((f(x) - \mathbb{E}[y\mid f(x)]) + (\mathbb{E}[y\mid f(x)] - y))^2] \\
    =& \underbrace{\mathbb{E}[(f(x) - \mathbb{E}[y\mid f(x)])^2]}_{=CE_{2}^2} + \mathbb{E}[(\mathbb{E}[y\mid f(x)] - y)^2]  \\
    &+ 2 \mathbb{E}[(f(x) - \mathbb{E}[y\mid f(x)])(\mathbb{E}[y\mid f(x)] - y)] \nonumber
    \end{align}
    which implies
    \begin{align}
    \operatorname{MSE}(f)-\operatorname{CE}_2(f)^2 =& \mathbb{E}[(\mathbb{E}[y\mid f(x)] - y)^2] \\
    &+ 2 \mathbb{E}[(f(x) - \mathbb{E}[y\mid f(x)])(\mathbb{E}[y\mid f(x)] - y)] \nonumber \\
    =& \mathbb{E}[(\mathbb{E}[y\mid f(x)] - y)^2] + 2 \mathbb{E}[(f(x)\mathbb{E}[y\mid f(x)]] \\
    &- 2 \mathbb{E}[f(x)y] - 2 \mathbb{E}[\mathbb{E}[y\mid f(x)]^2] +2 \mathbb{E}[\mathbb{E}[y\mid f(x)] y]] \nonumber \\
    =&\mathbb{E}[\mathbb{E}[y\mid f(x)]^2] + \mathbb{E}[y^2] - 2 \mathbb{E}[\mathbb{E}[y\mid f(x)] y] \\
    &+ 2 \mathbb{E}[(f(x)\mathbb{E}[y\mid f(x)]] 
    - 2 \mathbb{E}[f(x)y] \nonumber \\
    &- 2 \mathbb{E}[\mathbb{E}[y\mid f(x)]^2] +2 \mathbb{E}[\mathbb{E}[y\mid f(x)] y]] \nonumber \\
    =&  \mathbb{E}[y^2]  + 2 \mathbb{E}[(f(x)\mathbb{E}[y\mid f(x)]] - 2 \mathbb{E}[f(x)y] \\
    &-  \mathbb{E}[\mathbb{E}[y\mid f(x)]^2] \nonumber \\
    =& \mathbb{E}[(2 f(x) - y - \mathbb{E}[y\mid f(x)]) (\mathbb{E}[y\mid f(x)])-y] \label{eq:midstep_mse_decomposition} \\
    =& \mathbb{E}[( f(x) - y)  (\mathbb{E}[y\mid f(x)] - y)]  \\
    &+ \mathbb{E}[ (f(x) - \mathbb{E}[y\mid f(x)]) (\mathbb{E}[y\mid f(x)] - y)] . \nonumber
\end{align}
%

By the law of total expectation, we will write the above as
\begin{align}
    \operatorname{MSE}(f)-\operatorname{CE}_2(f)^2 
    = \mathbb{E}[ \mathbb{E}[&( f(x) - y)  (\mathbb{E}[y\mid f(x)] - y)  \\
    &+  (f(x) - \mathbb{E}[y\mid f(x)]) (\mathbb{E}[y\mid f(x)] - y)\mid f(x)]] . \nonumber
\end{align}

Focusing on the inner conditional expectation, we have that
\begin{align}
    \mathbb{E}[(& f(x)  - y)  (\mathbb{E}[y\mid f(x)] - y)  +  (f(x) - \mathbb{E}[y\mid f(x)]) (\mathbb{E}[y\mid f(x)] - y)\mid f(x)] \nonumber \\
    =&  \mathbb{E}[y\mid f(x)]  (f(x) - 1)(\mathbb{E}[y\mid f(x)] - 1)
    +(1 - \mathbb{E}[y\mid f(x)])f(x) \mathbb{E}[y\mid f(x)]  \nonumber \\
    &+ \mathbb{E}[y\mid f(x)] (f(x) - \mathbb{E}[y\mid f(x)])(\mathbb{E}[y\mid f(x)] - 1)
    \nonumber \\ &+ (1-\mathbb{E}[y\mid f(x)]) (f(x) - \mathbb{E}[y\mid f(x)])\mathbb{E}[y\mid f(x)] \label{eq:mse_ce_diff_expansion}\\
    =& (1-\mathbb{E}[y\mid f(x)])\mathbb{E}[y\mid f(x)] \geq 0 \quad \forall f(x) \label{eq:mse_ce_diff}
\end{align}
and therefore

\begin{align} 
    \label{eq:MSEminusCE2}
    \operatorname{MSE}(f)-\operatorname{CE}_2(f)^2 = \mathbb{E}[(1-\mathbb{E}[y\mid f(x)])\mathbb{E}[y\mid f(x)]]\geq 0 .
\end{align}

\end{proof}
The expectation in ~\Cref{eq:MSEminusCE2} is over variances of Bernoulli random variables with probabilities $\mathbb{E}[y\mid f(x)]$.

\subsection{Derivation of \Cref{eq:estimator_EYfX}}

By considering $y \in \{0, 1\}$, we have the following:
\begin{align}
    \mathbb{E}[y\mid f(x)]&= \sum_{y_k \in \mathcal{Y}} y_k \, p_{y|f(x)}(y_k) = \frac{\sum_{y_k \in \mathcal{Y}} y_k \, p_{f(x), y}(f(x), y_k)}{p_{f(x)}(f(x))} \\ 
    &= \frac{p_{f(x), y}(f(x), y_k=1)}{p_{f(x)}(f(x))} = \frac{p_{f(x)|y}(f(x)|y_k=1)p_y(y_k=1)}{p_{f(x)}(f(x))} \\
    &\approx \frac{\frac{1}{\sum_{i=1}^n y_i}\sum_{i=1}^n k(f(x) ; f(x_i))y_i \frac{\sum_{i=1}^n y_i}{n}}{\frac{1}{n}\sum_{i=1}^n k(f(x) ; f(x_i))} \\
    &\approx  \frac{\sum_{i=1}^n k(f(x) ; f(x_i))y_i}{\sum_{i=1}^n k(f(x) ; f(x_i))} =: \widehat{\mathbb{E}[y \mid f(x)]}
\end{align}

\subsection{Derivation of \Cref{eq:mse_binary}} 

We consider the optimization problem for some $\lambda>0$:
\begin{align}
    f = \arg\min_{f\in \mathcal{F}} \Bigl( \operatorname{MSE}(f)  +\lambda \operatorname{CE}_2(f)^2 \Bigr).
\end{align}
Using \Cref{eq:MSEminusCE2} we rewrite: 
    \begin{align}
        \operatorname{MSE}(f)  +\lambda \operatorname{CE}_2(f)^2 \nonumber 
        &= (1+\lambda) \operatorname{MSE}(f)  -\lambda \Bigl(\operatorname{MSE}(f)-\operatorname{CE}_2(f)^2\Bigr) \nonumber \\
        &= (1+\lambda) \operatorname{MSE}(f)  -\lambda \mathbb{E}\biggl[\Bigl(1-\mathbb{E}[y\mid f(x)]\Bigr)\mathbb{E}[y\mid f(x)]\biggr]. \label{eq:CalibrationRetularized}
    \end{align}
Rescaling \Cref{eq:CalibrationRetularized} by a factor of $(1+\lambda)^{-1}$ and a variable substitution $\gamma=\frac{\lambda}{1+\lambda} \in [0,1)$, we have that:
\begin{align}
    f=&\arg\min_{f\in \mathcal{F}}\Bigl( \operatorname{MSE}(f)  +\lambda \operatorname{CE}_2(f)^2\Bigr) \nonumber \\ 
    =& \arg\min_{f\in \mathcal{F}}\biggl(  \operatorname{MSE}(f)    -\gamma \mathbb{E}\biggl[\Bigl(1-\mathbb{E}[y\mid f(x)]\Bigr)\mathbb{E}[y\mid f(x)]\biggr]\biggr)  \nonumber \\
    =& \arg\min_{f\in \mathcal{F}} \biggl( \operatorname{MSE}(f)    +\gamma \mathbb{E}\Bigl[\mathbb{E}[y\mid f(x)]^2\Bigr]\biggr) .
\end{align}


\section{Bias of ratio of U-statistics}
The unbiased estimator for the square of a mean $\mu_X^2$ is given by:
\begin{align}
    \widehat{\mu_X^2} = \frac{1}{n(n-1)}\sum_{i=1}^n \sum_{\substack{j=1\\j \neq i}}^n X_i X_j = \frac{1}{n(n-1)}\left(\left(\sum_{i=1}^n X_i\right)^2 - \sum_{i=1}^n X_i^2\right).
\end{align}
This is a second order U-statistics with kernel $h(x_1, x_2)=x_1 x_2$. The bias of the ratio of two of these estimators converges as $\mathcal{O}\left(\frac{1}{n}\right)$, as the following lemma proves.
\label{App:Proof_lemma}
\begin{lemma}
\label{Lemma:Convergence_Bias}
Let $\theta_1$ and $\theta_2$ be two estimable parameters and let $U_1$ and $U_2$ be the two corresponding U-statistics of order $m_1$ and $m_2$, respectively, based on a sample of $n$ i.i.d. RVs. The bias of the ratio $U_1 / U_2$ of these two U-statistics will converge as  $\mathcal{O}\left(\frac{1}{n}\right)$.
\end{lemma}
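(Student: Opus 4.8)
The plan is to treat $U_1/U_2$ via a first-order delta-method (Taylor) expansion around the point $(\theta_1,\theta_2)$, exploiting the two defining properties of U-statistics: exact unbiasedness, $\mathbb{E}[U_1]=\theta_1$ and $\mathbb{E}[U_2]=\theta_2$, together with the Hoeffding variance decomposition, which gives $\operatorname{Var}(U_i)=\mathcal{O}(1/n)$ for a U-statistic of fixed order $m_i$ based on $n$ i.i.d.\ samples (this is exactly the $n^{-1/2}$ standard-deviation rate quoted earlier). Writing $\Delta_i := U_i-\theta_i$, so that $\mathbb{E}[\Delta_i]=0$, I would first factor
\begin{align}
\frac{U_1}{U_2} = \frac{\theta_1+\Delta_1}{\theta_2}\cdot\frac{1}{1+\Delta_2/\theta_2}
\end{align}
and expand the second factor as a Neumann (geometric) series in $\Delta_2/\theta_2$, assuming $\theta_2\neq 0$.

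Carrying out the multiplication and collecting terms by order, the zeroth-order term reproduces the target $\theta_1/\theta_2$, while the two first-order terms $\Delta_1/\theta_2$ and $-\theta_1\Delta_2/\theta_2^2$ vanish in expectation by unbiasedness, $\mathbb{E}[\Delta_1]=\mathbb{E}[\Delta_2]=0$. Hence the bias is governed by the second-order terms, namely
\begin{align}
\mathbb{E}\!\left[\frac{U_1}{U_2}\right]-\frac{\theta_1}{\theta_2}
= -\frac{\operatorname{Cov}(U_1,U_2)}{\theta_2^2} + \frac{\theta_1\operatorname{Var}(U_2)}{\theta_2^3} + (\text{higher order}),
\end{align}
since $\mathbb{E}[\Delta_1\Delta_2]=\operatorname{Cov}(U_1,U_2)$ and $\mathbb{E}[\Delta_2^2]=\operatorname{Var}(U_2)$. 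By the variance decomposition both of these are $\mathcal{O}(1/n)$, so the leading bias contribution is $\mathcal{O}(1/n)$, as claimed.

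It then remains to verify that the remainder of the expansion is genuinely of smaller order. A degree-$k$ term carries a mixed central moment such as $\mathbb{E}[\Delta_2^{\,k}]$ or $\mathbb{E}[\Delta_1\Delta_2^{\,k-1}]$; since each $\Delta_i$ is a centered U-statistic with variance $\mathcal{O}(1/n)$, the standard bound on higher central moments of U-statistics gives $\mathbb{E}[\Delta_2^{\,k}]=\mathcal{O}(n^{-\lceil k/2\rceil})$, which is $o(1/n)$ for $k\geq 3$, and Cauchy--Schwarz controls the mixed moments the same way. I expect the main obstacle to be making the expansion rigorous rather than merely formal: the geometric series for $1/(1+\Delta_2/\theta_2)$ converges only on the event $\{|\Delta_2|<|\theta_2|\}$, so one must separately bound the contribution of $\{|\Delta_2|\geq|\theta_2|/2\}$. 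In the present application the U-statistics are sums of bounded kernel values with $y_i\in\{0,1\}$, so $U_2$ is bounded away from $0$ and the denominator cannot vanish; combined with a Bernstein-type concentration bound for U-statistics, the probability of this event is super-polynomially small and its contribution is negligible, leaving the $\mathcal{O}(1/n)$ term as the dominant bias. This establishes $\operatorname{Bias}(U_1/U_2)=\mathcal{O}(1/n)$.
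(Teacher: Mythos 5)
Your proposal is correct and follows essentially the same route as the paper's proof: write $U_1/U_2 = R\bigl(1+\Delta_1/\theta_1\bigr)\bigl(1+\Delta_2/\theta_2\bigr)^{-1}$ with $R=\theta_1/\theta_2$, expand the second factor as a geometric series, use unbiasedness of U-statistics to kill the first-order terms, and bound the surviving $\operatorname{Cov}(U_1,U_2)$ and $\operatorname{Var}(U_2)$ terms by $\mathcal{O}(1/n)$ via the U-statistic variance rate, which yields exactly the same leading bias expression. If anything, you are more careful than the paper, which invokes the series expansion only under the hypothesis $|U_2-\theta_2|<|\theta_2|$ without bounding the complementary event and controls the remainder via asymptotic normality rather than moment bounds; your treatment of the bad event and of higher central moments addresses both of these points.
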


\begin{proof}
Let $R=\theta_1 / \theta_2$ be the ratio of two estimable parameters and $r=U_1 / U_2$ the ratio of the corresponding U-statistics. Note, that $U_i$ is an unbiased estimator of $\theta_i$, $\mathbb{E}[U_i]=\theta_i$, $i=1,2$, however, the ratio is usually biased. To investigate the bias of that ratio we rewrite 
\begin{align}
    r = R \biggl(1+\frac{U_1 -\theta_1 }{\theta_1} \biggr)\biggl(1+\frac{U_2 -\theta_2 }{\theta_2} \biggr)^{-1}.
\end{align}
If $\left|\frac{U_2 -\theta_2 }{\theta_2} \right|<1$, we can expand $\biggl(1+\frac{U_2 -\theta_2 }{\theta_2} \biggr)^{-1}$ in a geometric series:
\begin{align}
    r &=R \biggl(1+\frac{(U_1 -\theta_1) }{\theta_1} \biggr) \biggl(1- \frac{(U_2 -\theta_2) }{\theta_2} +\frac{(U_2 -\theta_2)^2 }{\theta_2^2}-\frac{(U_2 -\theta_2)^3 }{\theta_2^3}+\frac{(U_2 -\theta_2)^4 }{\theta_2^4}-...\biggr)\\
        &=R\Biggl(1 +\frac{(U_1 -\theta_1) }{\theta_1}-\frac{(U_2 -\theta_2) }{\theta_2}-\frac{(U_2 -\theta_2)(U_1 -\theta_1) }{\theta_2\theta_1} \nonumber \\
        &\phantom{asdfasd}+\frac{(U_2 -\theta_2)^2 }{\theta_2^2}+\frac{(U_2 -\theta_2)^2(U_1 -\theta_1)}{\theta_2^2\theta_1}
        -\frac{(U_2 -\theta_2)^3 }{\theta_2^3}-\frac{(U_2 -\theta_2)^3(U_1 -\theta_1) }{\theta_2^3\theta_1} \nonumber \\
        &\phantom{asdfasd}+\frac{(U_2 -\theta_2)^4 }{\theta_2^4}+
        \frac{(U_2 -\theta_2)^4(U_1 -\theta_1) }{\theta_2^4\theta_1} -
        ... \Biggr).  
\end{align}

If $\zeta_1 > 0$, a U-statistic $U$ of order $m$ obtained from a sample of $n$ observations converges in distribution \citep{shao2003}: 
\begin{align}
    \sqrt{n}\left(U - \mathbb{E}[U]\right) \xrightarrow{\text{d}} N(0, m^2 \zeta_1).
    \label{eq:convergence_U_statistics}
\end{align}
Keeping the terms up to $\Theta\left(\frac{1}{n} \right)$:
\begin{align}
    \begin{split}
        r&=R\Biggl(1 +\frac{(U_1 -\theta_1 )}{\theta_1}-\frac{(U_2 -\theta_2) }{\theta_2}-\frac{(U_2 -\theta_2)(U_1 -\theta_1) }{\theta_2\theta_1}+\frac{(U_2 -\theta_2)^2 }{\theta_2^2} + o\left(\frac{1}{n}\right)\Biggl)
    \end{split}
\end{align}
To examine the bias, we take the expectation value of this expression:
\begin{align}
    \begin{split}
        \mathbb{E}[r] &= R\Biggl(1 +\frac{\mathbb{E}\bigl[(U_1 -\theta_1) \bigr]}{\theta_1}-\frac{\mathbb{E}\bigl[(U_2 -\theta_2) \bigr]}{\theta_2}-\frac{\mathbb{E}\bigl[(U_2 -\theta_2)(U_1 -\theta_1) \bigr]}{\theta_2\theta_1}+\frac{\mathbb{E}\bigl[(U_2 -\theta_2)^2 \bigr]}{\theta_2^2}
        + o\left(\frac{1}{n}\right) \Biggr)
    \end{split}
\end{align}
We now make use of the following expressions:
\begin{align}
    \mathbb{E}\bigl[(U_1 -\theta_1) \bigr]&=\mathbb{E}\bigl[(U_2 -\theta_2) \bigr]=0\\
    \mathbb{E}\bigl[(U_2 -\theta_2)(U_1 -\theta_1) \bigr]&=\operatorname{Cov}(U_2, U_1)\\
    \mathbb{E}\bigl[(U_2 -\theta_2)^2 \bigr]&= \operatorname{Var}(U_2)\\
\end{align}
Using these expressions the expectation of $r$ becomes:
\begin{align}
    \begin{split}
        \mathbb{E}[r] &= R\Biggl(1 - \frac{\operatorname{Cov}(U_2, U_1)}{\theta_2 \theta_1} + \frac{\operatorname{Var}(U_2)}{\theta_2^2} + o\left(\frac{1}{n}\right) \Biggr)
    \end{split}
\end{align}
Using Equation (\ref{eq:convergence_U_statistics}), 
the linearity of covariance and with $\operatorname{Var}(aX)=a^2 \operatorname{Var}(X)$ we obtain:
\begin{align}
\operatorname{Cov}(U_2, U_1), \operatorname{Var}(U_2) \in \mathcal{O}\left( \frac{1}{n} \right) \implies \mathbb{E}[r] = R\Biggl(1 + \mathcal{O}\left(\frac{1}{n}\right) \Biggr) .
\end{align}

\end{proof}

\section{De-biasing of ratios of straight averages}
\label{App:L1_debiasing}
Let $X$ and $Y$ be random variables and let $\mu_X$ and $\mu_Y$ be the means of their distributions, respectively.\\
Consider the problem of finding an unbiased estimator for the ratio of means:
\begin{align}
    R = \frac{\mu_Y}{\mu_X}.
\end{align}
A first approach to estimate this ratio $R$ is to compute the ratio of the sample means: Let $(X_1, Y_1), ..., (X_n, Y_n)$ be pairs of i.i.d. random variables that are jointly distributed:
\begin{align}
    r = \hat{R} = \frac{\hat{\mu_Y}}{\hat{\mu_X}} =\frac{\frac{1}{n}\sum_{i=1}^n Y_i}{\frac{1}{n}\sum_{i=1}^n 
    X_i}=\frac{\bar{Y}}{\bar{X}}.
\end{align}
This, however, is a biased estimator, which can be seen as follows (we follow \citep{Tin1965, Ogliore2011} here):
\begin{align}
    r =\frac{\bar{Y}}{\bar{X}} = \frac{\mu_Y}{\mu_X} \left(\frac{\bar{Y}}{\mu_Y}\right)\left(\frac{\bar{X}}{\mu_X}\right)^{-1}=R\biggl(1 + \frac{\bar{Y}-\mu_Y}{\mu_Y} \biggr)\biggl(1 + \frac{\bar{X}-\mu_X}{\mu_X} \biggr)^{-1}.
\end{align}
This has now the form of a converging geometric series. Thus, if
\begin{align}
    \biggl| \frac{\bar{X}-\mu_X}{\mu_X}\biggr| < 1, 
\end{align}
we can expand $\biggl(1 + \frac{\bar{X}-\mu_X}{\mu_X} \biggr)^{-1}$ in a geometric series, which is defined as:
\begin{align}
    \sum_{k=0}^{\infty} a \, b^k = a + ab + a b^2 + ... = \frac{a}{1-b}.
\end{align}
In our case we can identify $a=R\biggl(1 + \frac{\bar{Y}-\mu_Y}{\mu_Y} \biggr)$ and $b=-\frac{\bar{X}-\mu_X}{\mu_X}$.\\
Thus, using the geometric series expansion, we can write:
\begin{align}
    r&= R\biggl(1 + \frac{\bar{Y}-\mu_Y}{\mu_Y} \biggr)\biggl(1 - \frac{(\bar{X}-\mu_X)}{\mu_X} + \frac{(\bar{X}-\mu_X)^2}{\mu_X^2} - \frac{(\bar{X}-\mu_X)^3}{\mu_X^3} + \frac{(\bar{X}-\mu_X)^4}{\mu_X^4} - ...
    \biggr)\\
    \begin{split}
        &=R\biggl(1 + \frac{(\bar{Y}-\mu_Y)}{\mu_Y} - \frac{(\bar{X}-\mu_X)}{\mu_X} - \frac{(\bar{X}-\mu_X)(\bar{Y}-\mu_Y)}{\mu_Y \mu_X} + \frac{(\bar{X}-\mu_X)^2}{\mu_X^2} \\
        & \phantom{asdfa}+ \frac{(\bar{X}-\mu_X)^2 (\bar{Y}-\mu_Y)}{\mu_X^2 \mu_Y} - \frac{(\bar{X}-\mu_X)^3}{\mu_X^3} - \frac{(\bar{X}-\mu_X)^3 (\bar{Y}-\mu_Y)}{\mu_X^3 \mu_Y}  + \frac{(\bar{X}-\mu_X)^4}{\mu_X^4} + ... \biggr)
    \end{split}
\end{align}
\paragraph{Neglecting higher order terms}
Since $\bar{X}$ and $\bar{Y}$ are U-statistics, we make use of the asymptotic behaviour of U-statistics. If $\zeta_1 > 0$, a U-statistics $U_n$ of order $m$ obtained from a sample of $n$ observations behaves as $n \rightarrow \infty$ like (\citep{shao2003}):
\begin{align}
    \sqrt{n}\left(U_n - \mathbb{E}[U_n]\right) \xrightarrow{\text{d}} N(0, m^2 \zeta_1).
\end{align}
As we seek an estimator that is unbiased up until order $n^{-2}$ and since $\mathbb{E}[\bar{X}] = \mu_X$, we can neglect all terms of order 5 or higher since for $n \rightarrow \infty$:
\begin{align}
    (\bar{X} - \mu_X)^5 &\in \mathcal{O}(n^{-2.5})\\
    (\bar{X} - \mu_X)^4 (\bar{Y} - \mu_Y) &\in \mathcal{O}(n^{-2.5})
\end{align}
Therefore, we obtain:
\begin{align}
    \begin{split}
        r&\approx R\biggl(1 + \frac{(\bar{Y}-\mu_Y)}{\mu_Y} - \frac{(\bar{X}-\mu_X)}{\mu_X} - \frac{(\bar{X}-\mu_X)(\bar{Y}-\mu_Y)}{\mu_Y \mu_X} + \frac{(\bar{X}-\mu_X)^2}{\mu_X^2} \\
        & \phantom{asdfa}+ \frac{(\bar{X}-\mu_X)^2 (\bar{Y}-\mu_Y)}{\mu_X^2 \mu_Y} - \frac{(\bar{X}-\mu_X)^3}{\mu_X^3} - \frac{(\bar{X}-\mu_X)^3 (\bar{Y}-\mu_Y)}{\mu_X^3 \mu_Y}  + \frac{(\bar{X}-\mu_X)^4}{\mu_X^4}  \biggr)
    \end{split}
\end{align}

\paragraph{Identities to compute the terms of the series expansion of $r$}
\begin{align}
    \mathbb{E}[\bar{X}-\mu_X]&=\mathbb{E}[\bar{Y}-\mu_Y]=0\\
    \mathbb{E}[(\bar{X}-\mu_X)^2]&=\operatorname{Var}(\bar{X})=\frac{1}{n}\operatorname{Var(X)}\\
    \mathbb{E}\biggl[\Bigl(\bar{X}- \mu_X\Bigr)\Bigl(\bar{Y}- \mu_Y\Bigr)\biggr]&=\operatorname{Cov}(\bar{X}, \bar{Y}) = \frac{1}{n}\operatorname{Cov}(X, Y)\\
    \mathbb{E}\biggl[\Bigl(\bar{X}- \mu_X\Bigr)^2\Bigl(\bar{Y}- \mu_Y\Bigr)\biggr]&=\operatorname{Cov}(\bar{X}^2, \bar{Y}) -2\mu_X \operatorname{Cov}(\bar{X}, \bar{Y})\\
    &= \frac{1}{n^2}\Bigl(\operatorname{Cov}(X^2, Y) -2\mu_X \operatorname{Cov}(X, Y)\Bigr)\\
    \mathbb{E}\biggl[\Bigl(\bar{X}-\mu_X\Bigr)^3\biggr]&=\operatorname{Cov}(\bar{X}^2, \bar{X})-2\mu_X \operatorname{Var}(\bar{X})\\
    &=\frac{1}{n^2}\operatorname{Cov}(X^2, X) - \frac{2}{n^2}\mu_X \operatorname{Var}(X)\\
    \mathbb{E}\Bigl[\Bigl( \bar{X} - \mu_X \Bigr)^3 \Bigl(\bar{Y}-\mu_Y \Bigr) \Bigr]&= \operatorname{Cov}(\bar{X}^3, \bar{Y}) - 3\mu_X \operatorname{Cov}(\bar{X}^2, \bar{Y)} + 3 \mu_X^2 \operatorname{Cov}(\bar{X}, \bar{Y})\\
    &=\frac{3}{n^2}\operatorname{Var(X)} \operatorname{Cov}(X, Y)+ \mathcal{O}(n^{-3})\\
    \mathbb{E}\biggl[\Bigl(\bar{X}-\mu_X\Bigr)^4\biggr]&=\operatorname{Cov}(\bar{X}^3, \bar{X}) - 3\mu_X \operatorname{Cov}(\bar{X}^2, \bar{X)} + 3 \mu_X^2 \operatorname{Var}(\bar{X})\\
    &=\frac{3}{n^2}\operatorname{Var}(X)^2+ \mathcal{O}(n^{-3})
\end{align}
\paragraph{Bias}
Using these expressions we can compute the expectation value of $r=\hat{R}$:
\begin{align}
    \begin{split}
        \mathbb{E}[r]&\approx R\Biggl(1+\frac{1}{n}\biggl(\frac{\operatorname{Var(X)}}{\mu_X^2} - \frac{\operatorname{Cov}(X, Y)}{\mu_X \mu_Y} \biggr) + \frac{1}{n^2}\biggl(\frac{(\operatorname{Cov}(X^2, Y) -2\mu_X \operatorname{Cov}(X, Y))}{\mu_X^2 \mu_Y}\\
        &\phantom{asdfa}-\frac{(\operatorname{Cov}(X^2, X) - 2\mu_X \operatorname{Var}(X))}{\mu_X^3} - \frac{3\operatorname{Var}(X) \operatorname{Cov}(X, Y)}{\mu_X^3 \mu_Y}+\frac{3 \operatorname{Var}(X)^2}{\mu_X^4}\biggr)\Biggr)
    \end{split}
\end{align}
The bias or $r=\widehat{R}$ is defined as:
\begin{align}
    \operatorname{Bias}(r) &= \mathbb{E}[r] - R\\
    &=R\Biggl(\frac{1}{n}\biggl(\frac{\operatorname{Var}(X)}{\mu_X^2} - \frac{\operatorname{Cov}(X, Y)}{\mu_X \mu_Y} \biggr) + \frac{1}{n^2}\biggl(\frac{(\operatorname{Cov}(X^2, Y) -2\mu_X \operatorname{Cov}(X, Y))}{\mu_X^2 \mu_Y}\\
        &\phantom{asdfa}-\frac{(\operatorname{Cov}(X^2, X) - 2\mu_X \operatorname{Var}(X))}{\mu_X^3} - \frac{3\operatorname{Var}(X) \operatorname{Cov}(X, Y)}{\mu_X^3 \mu_Y}+\frac{3 \operatorname{Var}(X)^2}{\mu_X^4}\biggr)\Biggr)
\end{align}
Therefore an unbiased version of $r$ is:
\begin{align}
    r_{\text{unbiased}} &= r - R\Biggl(\frac{1}{n}\biggl(\frac{\operatorname{Var}(X)}{\mu_X^2} - \frac{\operatorname{Cov}(X, Y)}{\mu_X \mu_Y} \biggr) + \frac{1}{n^2}\biggl(\frac{(\operatorname{Cov}(X^2, Y) -2\mu_X \operatorname{Cov}(X, Y))}{\mu_X^2 \mu_Y}\\
        &\phantom{asdfas}-\frac{(\operatorname{Cov}(X^2, X) - 2\mu_X \operatorname{Var}(X))}{\mu_X^3} - \frac{3\operatorname{Var}(X) \operatorname{Cov}(X, Y)}{\mu_X^3 \mu_Y}+\frac{3 \operatorname{Var}(X)^2}{\mu_X^4}\biggr)\Biggr)
\end{align}

A corrected version of the estimator $r=\hat{R}$ is consequently given by:
\begin{align}
    \begin{split}
        r_{corr} &:=  r\Biggl(1-\frac{1}{n}\biggl(\frac{\widehat{\operatorname{Var}(X)}}{\widehat{\mu_X^2}} - \frac{\widehat{\operatorname{Cov}(X, Y)}}{\widehat{\mu_X \mu_Y}} \biggr) - \frac{1}{n^2}\biggl(\frac{\widehat{(\operatorname{Cov}(X^2, Y)} -2\widehat{\mu_X} \widehat{\operatorname{Cov}(X, Y)})}{\widehat{\mu_X^2} \widehat{\mu_Y}}\\
        &\phantom{asdfas}-\frac{(\widehat{\operatorname{Cov}(X^2, X)} - 2\widehat{\mu_X} \widehat{\operatorname{Var}(X)})}{\widehat{\mu_X^3}} - \frac{3\widehat{\operatorname{Var}(X)} \widehat{\operatorname{Cov}(X, Y)}}{\widehat{\mu_X^3} \widehat{\mu_Y}}+\frac{3 \widehat{\operatorname{Var}(X)}^2}{\widehat{\mu_X^4}}\biggr)\Biggr)
    \end{split}
    \label{eq:L1_corrected_not_yet_recursed}
\end{align}
In the above equation we again encounter rations of estimators which again might be biased. Since we want to achieve a second order de-biasing we have to again recurse on the terms that have a $\mathcal{O}\left(\frac{1}{n} \right)$ dependency. However, we do not have to recurse on the terms that have a $\mathcal{O}\left(\frac{1}{n^2} \right)$ dependency, since any recursion would increase the power of the $n$-dependency.
Therefore a debiased estimator up to order $\mathcal{O}(n^2)$ is:
\begin{align}
    \begin{split}
        r_{corr} &:=  r\Biggl(1-\frac{1}{n}\biggl(r_{b}^{*} - r_{a}^{*} \biggr) - \frac{1}{n^2}\biggl(\frac{\widehat{(\operatorname{Cov}(X^2, Y)} -2\widehat{\mu_X} \widehat{\operatorname{Cov}(X, Y)})}{\widehat{\mu_X^2} \widehat{\mu_Y}}\\
        &\phantom{asdfa}-\frac{(\widehat{\operatorname{Cov}(X^2, X)} - 2\widehat{\mu_X} \widehat{\operatorname{Var}(X)})}{\widehat{\mu_X^3}} - \frac{3\widehat{\operatorname{Var}(X)} \widehat{\operatorname{Cov}(X, Y)}}{\widehat{\mu_X^3} \widehat{\mu_Y}}+\frac{3 \widehat{\operatorname{Var}(X)}^2}{\widehat{\mu_X^4}}\biggr)\Biggr)
    \end{split}
    \label{eq:L1_corrected}
\end{align}
where 
\begin{align}
\begin{split}
    r_{a}^{*} &= \underbrace{\frac{\widehat{\operatorname{Cov}(X, Y)}}{\widehat{\mu_X \mu_Y}}}_{=r_a}
    \Biggl(1+\frac{1}{(n-1)}\biggl(\frac{\widehat{\mu_Y}\widehat{\operatorname{Cov}(X^2, Y)}+\widehat{\mu_X}\widehat{\operatorname{Cov}(Y^2, X)}}{\widehat{\operatorname{Cov}(X, Y)}\widehat{\mu_X}\widehat{ \mu_Y}}-4\biggr)\\
    &\phantom{asasasasddasdf}- \frac{1}{(n-1)}\biggl(\frac{\ \widehat{\operatorname{Var}(X)}}{\widehat{\mu_X^2} }+ \frac{ \widehat{\operatorname{Var}(Y)}}{ \widehat{\mu_Y^2}} + 2\frac{ \widehat{\operatorname{Cov}(X, Y)}}{\widehat{\mu_X} \widehat{\mu_Y}}\biggr)\Biggr)
\end{split}
\label{eq:r_a_corrected_}
\end{align}
\begin{align}\label{eq:r_b_corrected_}
    r_{b}^{*} = \underbrace{\frac{\widehat{\operatorname{Var}(X)}}{\widehat{\mu_X^2}}}_{=r_b}\Biggl(1+ \frac{4}{(n-1)}\biggl(\frac{\frac{1}{2}\widehat{\operatorname{Cov}(X^2, X)}}{\widehat{\mu_X}\widehat{ \operatorname{Var}(X)}}-1\biggr)-\frac{4}{(n-1)}\frac{\widehat{\operatorname{Var}(X)}}{\widehat{\mu_X^2}}\Biggr).
\end{align}

\section{De-biasing of ratios of squared means}
\label{App:L2_debiasing}
Now consider the problem of finding an unbiased estimator for the ratio of the squared means of $x$ and $Y$:
\begin{align}
    R = \frac{\mu_Y^2}{\mu_X^2}.
\end{align}
Both the numerator and denominator of $R$ can separately be estimated by a second order U-statistics, respectively:
\begin{align}
    r = \hat{R} = \frac{\widehat{\mu_Y^2}}{\widehat{\mu_X^2}} =\frac{\frac{1}{n(n-1)}\sum_{i=1}^n \sum_{j=1 \wedge j\neq i}^n Y_i Y_j}{\frac{1}{n(n-1)}\sum_{i=1}^n \sum_{j=1 \wedge j\neq i}^n
    X_i X_j}=:\frac{\bar{Y_2}}{\bar{X_2}}.
\end{align}
The subscript 2 in $\bar{X}_2$ should emphasize that we are dealing with a second order U-statistics here.
Again, the ratio $\frac{\bar{Y}_2}{\bar{X}_2}$,is a biased estimator. Using the approach with the converging geometric series and neglecting the higher order terms, we obtain:
\begin{align}
    \begin{split}
        r&\approx R\biggl(1 + \frac{(\bar{Y}_2-\mu_Y^2)}{\mu_Y^2} - \frac{(\bar{X}_2-\mu_X^2)}{\mu_X^2} - \frac{(\bar{X}_2-\mu_X^2)(\bar{Y}_2-\mu_Y^2)}{\mu_Y^2 \mu_X^2} + \frac{(\bar{X}_2-\mu_X^2)^2}{\mu_X^4} \\
        & \phantom{asdfa}+ \frac{(\bar{X}_2-\mu_X^2)^2 (\bar{Y}_2-\mu_Y^2)}{\mu_X^4 \mu_Y^2} - \frac{(\bar{X}_2-\mu_X^2)^3}{\mu_X^6} - \frac{(\bar{X}_2-\mu_X^2)^3 (\bar{Y}_2-\mu_Y^2)}{\mu_X^6 \mu_Y^2}  + \frac{(\bar{X}_2-\mu_X^2)^4}{\mu_X^8}\biggr)
    \end{split}
\end{align}
\paragraph{Identities to compute the terms of the series expansion of $r$}
\begin{align}
    \mathbb{E}[\bar{X}_2-\mu_X^2]&=\mathbb{E}[\bar{Y}_2-\mu_Y^2]=0\\
    \mathbb{E}[(\bar{X}_2-\mu_X^2)^2]&=\operatorname{Var}(\bar{X}_2)\\
    \mathbb{E}\biggl[\Bigl(\bar{X}_2- \mu_X^2\Bigr)\Bigl(\bar{Y}_2- \mu_Y^2\Bigr)\biggr]&=\operatorname{Cov}(\bar{X}_2, \bar{Y}_2) \\
    \mathbb{E}\biggl[\Bigl(\bar{X}_2- \mu_X^2\Bigr)^2\Bigl(\bar{Y}_2- \mu_Y^2\Bigr)\biggr]&=\operatorname{Cov}(\bar{X}_2^2, \bar{Y}_2) -2\mu_X^2 \operatorname{Cov}(\bar{X}_2, \bar{Y}_2)\\
    \mathbb{E}\biggl[\Bigl(\bar{X}_2-\mu_X^2\Bigr)^3\biggr]&=\operatorname{Cov}(\bar{X}_2^2, \bar{X}_2)-2\mu_X^2 \operatorname{Var}(\bar{X}_2)\\
    \mathbb{E}\Bigl[\Bigl( \bar{X}_2 - \mu_X^2 \Bigr)^3 \Bigl(\bar{Y}_2-\mu_Y^2 \Bigr) \Bigr]&= \operatorname{Cov}(\bar{X}_2^3, \bar{Y}_2) - 3\mu_X^2 \operatorname{Cov}(\bar{X}_2^2, \bar{Y}_2) + 3 \mu_X^4 \operatorname{Cov}(\bar{X}_2, \bar{Y}_2)\\
    \mathbb{E}\biggl[\Bigl(\bar{X}_2-\mu_X^2\Bigr)^4\biggr]&=\operatorname{Cov}(\bar{X}_2^3, \bar{X}_2) - 3\mu_X^2 \operatorname{Cov}(\bar{X}_2^2, \bar{X}_2) + 3 \mu_X^4 \operatorname{Var}(\bar{X}_2)
\end{align}

\paragraph{Bias}
Computing $\mathbb{E}[r]$ using the above identities:
\begin{align}
    \begin{split}
        \mathbb{E}[r]&\approx R\Biggl(1-\frac{\operatorname{Cov}(\bar{X}_2, \bar{Y}_2)}{\mu_X^2 \mu_Y^2}+\frac{\operatorname{Var}(\bar{X}_2)}{\mu_X^4}+\biggl(\frac{\operatorname{Cov}(\bar{X}_2^2, \bar{Y}_2) -2\mu_X^2 \operatorname{Cov}(\bar{X}_2, \bar{Y}_2)}{\mu_X^4 \mu_Y^2}\biggr)\\
        &\phantom{asdfa}-\biggl(\frac{\operatorname{Cov}(\bar{X}_2^2, \bar{X}_2)-2\mu_X^2 \operatorname{Var}(\bar{X}_2)}{\mu_X^6}\biggr) - \biggl(\frac{\operatorname{Cov}(\bar{X}_2^3, \bar{Y}_2) - 3\mu_X^2 \operatorname{Cov}(\bar{X}_2^2, \bar{Y}_2) + 3 \mu_X^4 \operatorname{Cov}(\bar{X}_2, \bar{Y}_2)}{\mu_X^6 \mu_Y^2}\biggr)\\
        &\phantom{asdfa}+\biggl(\frac{\operatorname{Cov}(\bar{X}_2^3, \bar{X}_2) - 3\mu_X^2 \operatorname{Cov}(\bar{X}_2^2, \bar{X}_2) + 3 \mu_X^4 \operatorname{Var}(\bar{X}_2)}{\mu_X^8}\biggr)\Biggr)
    \end{split}
    \\
    \begin{split}
        &= R\Biggl(1-\overbrace{\frac{6 \operatorname{Cov}(\bar{X}_2, \bar{Y}_2)}{\mu_X^2 \mu_Y^2}}^{\text{Term (a)}} + \overbrace{\frac{6 \operatorname{Var}(\bar{X}_2)}{\mu_X^4}}^{\text{Term (b)}} + \overbrace{\frac{4 \operatorname{Cov}(\bar{X}_2^2, \bar{Y}_2)}{\mu_X^4\mu_Y^2}}^{\text{Term (c)}} -\overbrace{\frac{4 \operatorname{Cov}(\bar{X}_2^2, \bar{X}_2)}{\mu_X^6}}^{\text{Term (d)}} \\
        & \phantom{asdfa} 
        - \underbrace{\frac{\operatorname{Cov}(\bar{X}_2^3, \bar{Y}_2)}{\mu_X^6 \mu_Y^2}}_{\text{Term (e)}} + \underbrace{\frac{\operatorname{Cov}(\bar{X}_2^3, \bar{X}_2)}{\mu_X^8}}_{\text{Term (f)}}\Biggl)
    \end{split}
\end{align}
\begin{align}
    \begin{split}
        &=R\Biggl\{1-
        \Biggl(\frac{12}{n(n-1)}\frac{\operatorname{Cov}(X, Y)^2}{\mu_X^2 \mu_Y^2}+ \frac{24}{n} R_a\Biggr)
        \\
        &\phantom{asdfasd}+\Biggl(\frac{12}{n(n-1)}\frac{\operatorname{Var}(X)^2}{\mu_X^4}+\frac{24}{n}  R_b\Biggr)
        \\
        &\phantom{asdfasd}+\Biggl(\frac{32(n-2)}{n(n-1)^2}\biggl(\frac{ \operatorname{Cov}(X^2, Y)}{\mu_X^2 \mu_Y} +\frac{ 2\operatorname{Cov}(X, Y)(\operatorname{Var}(X)+\mu_X^2)}{\mu_X^3 \mu_Y}\biggr)\\
        &\phantom{asdfasdasd}+\frac{4(n-2)(n-3)}{n(n-1)}\biggl(\frac{8}{n} R_a + \frac{12}{n(n-1)} \frac{\operatorname{Cov}(X, Y)^2}{\mu_X^2 \mu_Y^2}\biggr)\Biggr)\\
        &\phantom{asdfasd}- \Biggl(\frac{32(n-2)}{n(n-1)^2}\biggl(\frac{ \operatorname{Cov}(X^2, X)}{\mu_X^3} +\frac{ 2\operatorname{Var}(X)(\operatorname{Var}(X)+\mu_X^2)}{\mu_X^4}\biggr)\\
        &\phantom{asdfasdasd}+\frac{4(n-2)(n-3)}{n(n-1)}\biggl( \frac{8}{n} R_b + \frac{12}{n(n-1)} \frac{\operatorname{Var}(X)^2}{\mu_X^4}\biggr)\Biggr)
        \\
        &\phantom{asdfasd}-\Biggl(
        \frac{24 (n-2)(n-3)(n-4)}{n^2(n-1)^3}\biggl(\frac{ \operatorname{Cov}(X^2, Y)}{\mu_Y \mu_X^2}+\frac{4 \operatorname{Cov}(X, Y)(\operatorname{Var}(X)+\mu_X^2)}{\mu_Y \mu_X^3} \biggr)\\
        &\phantom{asdfasdasd}+\frac{(n-2)(n-3)(n-4)(n-5)}{n^2(n-1)^2}\biggl(\frac{12}{n} R_a + \frac{30}{n(n-1)} \frac{\operatorname{Cov}(X, Y)^2}{\mu_X^2 \mu_Y^2} \biggr)\Biggr)
        \\
        &\phantom{asdfasd}+\Biggl(
        \frac{24 (n-2)(n-3)(n-4)}{n^2(n-1)^3} \biggl(\frac{ \operatorname{Cov}(X^2, X)}{\mu_X^3}+\frac{4  \operatorname{Var}(X)(\operatorname{Var}(X)+\mu_X^2)}{\mu_X^4}\biggr)\\
        &\phantom{asdfasdasd}+\frac{(n-2)(n-3)(n-4)(n-5)}{n^2(n-1)^2}\biggl(\frac{12}{n} R_b + \frac{30}{n(n-1)} \frac{\operatorname{Var}(X)^2}{\mu_X^4} \biggr)\Biggr)
    \end{split}
\end{align}
where $R_a = \frac{\operatorname{Cov(X, Y)}}{\mu_X \mu_Y}$ and $R_b = \frac{\operatorname{Var}(X)}{\mu_X^2}$
and where we have used \ref{eq:term_a_corrected}, \ref{eq:term_b_corrected} \ref{eq:term_c_corrected}, \ref{eq:term_d_corrected}, \ref{eq:term_e_corrected} and \ref{eq:term_f_corrected} for terms (a)-(f). 

Therefore, an estimator unbiased up to order two is given by:
\begin{align}
    \begin{split}
        r_{\text{corr}}&=\frac{\widehat{\mu_Y^2}}{\widehat{\mu_X^2}}\Biggl\{1+
        \Biggl(\frac{12}{n(n-1)}\frac{\widehat{\operatorname{Cov}(X, Y)^2}}{\widehat{\mu_X^2} \widehat{\mu_Y^2}}+ \frac{24}{n} r_{a}^{*}\Biggr)
        \\
        &\phantom{asdfasd}-\Biggl(\frac{12}{n(n-1)}\frac{\widehat{\operatorname{Var}(X)^2}}{\widehat{\mu_X^4}}+\frac{24}{n}  r_{b}^{*}\Biggr)
        \\
        &\phantom{asdfasd}-\Biggl(\frac{32(n-2)}{n(n-1)^2}\biggl(\frac{ \widehat{\operatorname{Cov}(X^2, Y)}}{\widehat{\mu_X^2} \widehat{\mu_Y}} +\frac{ 2\widehat{\operatorname{Cov}(X, Y)}(\widehat{\operatorname{Var}(X)}+\widehat{\mu_X^2)}}{\widehat{\mu_X^3} \widehat{\mu_Y}}\biggr)\\
        &\phantom{asdfasdasd}+\frac{4(n-2)(n-3)}{n(n-1)}\biggl(\frac{8}{n} r_{a}^{*} + \frac{12}{n(n-1)} \frac{\widehat{\operatorname{Cov}(X, Y)^2}}{\widehat{\mu_X^2} \widehat{\mu_Y^2}}\biggr)\Biggr)
        \\
        &\phantom{asdfasd}+\Biggl(\frac{32(n-2)}{n(n-1)^2}\biggl(\frac{ \widehat{\operatorname{Cov}(X^2, X)}}{\widehat{\mu_X^3}} +\frac{ 2\widehat{\operatorname{Var}(X)}(\widehat{\operatorname{Var}(X)}+\widehat{\mu_X^2})}{\widehat{\mu_X^4}}\biggr)\\
        &\phantom{asdfasdasd}+\frac{4(n-2)(n-3)}{n(n-1)}\biggl( \frac{8}{n} r_{b}^{*} + \frac{12}{n(n-1)} \frac{\widehat{\operatorname{Var}(X)^2}}{\widehat{\mu_X^4}}\biggr)\Biggr)
        \\
        &\phantom{asdfasd}+\Biggl(\frac{24 (n-2)(n-3)(n-4)}{n^2(n-1)^3}\biggl(\frac{ \widehat{\operatorname{Cov}(X^2, Y)}}{\widehat{\mu_Y} \widehat{\mu_X^2}}+\frac{4 \widehat{\operatorname{Cov}(X, Y})(\widehat{\operatorname{Var}(X)}+\widehat{\mu_X^2})}{\widehat{\mu_Y} \widehat{\mu_X^3}} \biggr)\\
        &\phantom{asdfasdasd}+\frac{(n-2)(n-3)(n-4)(n-5)}{n^2(n-1)^2}\biggl(\frac{12}{n} r_{a}^{*} + \frac{30}{n(n-1)} \frac{\widehat{\operatorname{Cov}(X, Y)^2}}{\widehat{\mu_X^2} \widehat{\mu_Y^2}}\Biggr) \biggr)
        \\
        &\phantom{asdfasd}-\Biggl(\frac{24 (n-2)(n-3)(n-4)}{n^2(n-1)^3} \biggl(\frac{ \widehat{\operatorname{Cov}(X^2, X)}}{\widehat{\mu_X^3}}+\frac{4  \widehat{\operatorname{Var}(X)}(\widehat{\operatorname{Var}(X)}+\widehat{\mu_X^2)}}{\widehat{\mu_X^4}}\biggr)\\
        &\phantom{asdfasdasd}+\frac{(n-2)(n-3)(n-4)(n-5)}{n^2(n-1)^2}\biggl(\frac{12}{n} r_{b}^{*} + \frac{30}{n(n-1)} \frac{\widehat{\operatorname{Var}(X)^2}}{\widehat{\mu_X^4}} \biggr)\Biggr),
    \end{split}
\end{align}
where we used equations (\ref{eq:r_a_corrected}), (\ref{eq:r_b_corrected}):
\begin{align}
\begin{split}
    r_{a}^{*} &= \underbrace{\frac{\widehat{\operatorname{Cov}(X, Y)}}{\widehat{\mu_X \mu_Y}}}_{=r_a}
    \Biggl(1+\frac{1}{(n-1)}\biggl(\frac{\widehat{\mu_Y}\widehat{\operatorname{Cov}(X^2, Y)}+\widehat{\mu_X}\widehat{\operatorname{Cov}(Y^2, X)}}{\widehat{\operatorname{Cov}(X, Y)}\widehat{\mu_X}\widehat{ \mu_Y}}-4\biggr)\\
    &\phantom{asasasasddasdf}- \frac{1}{(n-1)}\biggl(\frac{\ \widehat{\operatorname{Var}(X)}}{\widehat{\mu_X^2} }+ \frac{ \widehat{\operatorname{Var}(Y)}}{ \widehat{\mu_Y^2}} + 2\frac{ \widehat{\operatorname{Cov}(X, Y)}}{\widehat{\mu_X} \widehat{\mu_Y}}\biggr)\Biggr)
\end{split}
\end{align}
\begin{align}
    r_{b}^{*} = \underbrace{\frac{\widehat{\operatorname{Var}(X)}}{\widehat{\mu_X^2}}}_{=r_b}\Biggl(1+ \frac{4}{(n-1)}\biggl(\frac{\frac{1}{2}\widehat{\operatorname{Cov}(X^2, X)}}{\widehat{\mu_X}\widehat{ \operatorname{Var}(X)}}-1\biggr)-\frac{4}{(n-1)}\frac{\widehat{\operatorname{Var}(X)}}{\widehat{\mu_X^2}}\Biggr).
\end{align}

\subsection{Term (a)}
\label{subsec:term_a}
Let us first look at the first term: $\frac{6 \operatorname{Cov}(\bar{X}_2, \bar{Y}_2)}{\mu_X^2 \mu_Y^2}$. Using the expression for the covariance between two second order U-statistics we get:
\begin{align}
    \frac{6 \operatorname{Cov}(\bar{X}_2, \bar{Y}_2)}{\mu_X^2 \mu_Y^2}&=\frac{6}{\mu_X^2 \mu_Y^2}\biggl(\frac{4}{n}\mu_X \mu_Y \operatorname{Cov(X, Y)} + \frac{2}{n(n-1)} \operatorname{Cov}(X, Y)^2\biggr)\\
    &=\underbrace{\frac{24}{n}\frac{\operatorname{Cov(X, Y)}}{\mu_X \mu_Y}}_{\in \mathcal{O}(n^{-1})}+\underbrace{\frac{12}{n(n-1)}\frac{\operatorname{Cov}(X, Y)^2}{\mu_X^2 \mu_Y^2}}_{\in \mathcal{O}(n^{-2})}
\end{align}
Since we know that for every recursion (i.e., geometric series expansion) we will get at least another factor of $\frac{1}{n}$, we don't have to further recurse on term that is of order $\mathcal{O}(n^{-2})$. Consequently, we only expand the following term via a geometric series,
\begin{align}
    R_a = \frac{\operatorname{Cov}(X, Y)}{\mu_X \mu_Y},
\end{align}
since the ratio of the respective unbiased estimators,
\begin{align}
    r_a = \frac{\widehat{\operatorname{Cov}(X, Y)}}{\widehat{\mu_X \mu_Y}},
\end{align}
is biased.\\
Using the same machinery as before, we obtain a corrected version of $r_a$:
\begin{align}
\begin{split}
    r_{a}^{*} &= \underbrace{\frac{\widehat{\operatorname{Cov}(X, Y)}}{\widehat{\mu_X \mu_Y}}}_{=r_a}
    \Biggl(1+\frac{1}{(n-1)}\biggl(\frac{\widehat{\mu_Y}\widehat{\operatorname{Cov}(X^2, Y)}+\widehat{\mu_X}\widehat{\operatorname{Cov}(Y^2, X)}}{\widehat{\operatorname{Cov}(X, Y)}\widehat{\mu_X}\widehat{ \mu_Y}}-4\biggr)\\
    &\phantom{asasasasddasdf}- \frac{1}{(n-1)}\biggl(\frac{\ \widehat{\operatorname{Var}(X)}}{\widehat{\mu_X^2} }+ \frac{ \widehat{\operatorname{Var}(Y)}}{ \widehat{\mu_Y^2}} + 2\frac{ \widehat{\operatorname{Cov}(X, Y)}}{\widehat{\mu_X} \widehat{\mu_Y}}\biggr)\Biggr)
\end{split}
\label{eq:r_a_corrected}
\end{align}
The complete correction of term (a), $\frac{6 \operatorname{Cov}(\bar{X}_2, \bar{Y}_2)}{\mu_X^2 \mu_Y^2}$, looks therefore as follows:
\begin{align}
    \begin{split}
        \frac{6 \widehat{\operatorname{Cov}(\bar{X}_2, \bar{Y}_2)}}{\widehat{\mu_X^2} \widehat{\mu_Y^2}}&=
        \frac{12}{n(n-1)}\frac{\widehat{\operatorname{Cov}(X, Y)^2}}{\widehat{\mu_X^2} \widehat{\mu_Y^2}}\\
        &\phantom{as}+ \frac{24}{n}
        \frac{\widehat{\operatorname{Cov}(X, Y)}}{\widehat{\mu_X \mu_Y}}
        \Biggl(1+\frac{1}{(n-1)}\biggl(\frac{\widehat{\mu_Y}\widehat{\operatorname{Cov}(X^2, Y)}+\widehat{\mu_X}\widehat{\operatorname{Cov}(Y^2, X)}}{\widehat{\operatorname{Cov}(X, Y)}\widehat{\mu_X} \widehat{\mu_Y}}-4\biggr)\\
        &\phantom{asasasasdfasddasdf}- \frac{1}{(n-1)}\biggl(\frac{\ \widehat{\operatorname{Var}(X)}}{\widehat{\mu_X^2} }+ \frac{\widehat{ \operatorname{Var}(Y)}}{\widehat{ \mu_Y^2}} + 2\frac{\widehat{ \operatorname{Cov}(X, Y)}}{\widehat{\mu_X} \widehat{\mu_Y}}\biggr)\Biggr)
    \end{split}
    \label{eq:term_a_corrected}
\end{align}

\subsection{Term (b)}
The correction of term (b), $\frac{6 \operatorname{Var}(\bar{X}_2)}{\mu_X^4}$ is analogous to that of term (a). Define
\begin{align}
    R_b &=\frac{\operatorname{Var(X)}}{\mu_X^2}\\
    r_b &=\frac{\widehat{\operatorname{Var(X)}}}{\widehat{\mu_X^2}}.
\end{align}
Then using the geometric series expansion, a corrected version of $r_b$ is given by 
\begin{align}
    r_{b}^{*} = \underbrace{\frac{\widehat{\operatorname{Var}(X)}}{\widehat{\mu_X^2}}}_{=r_b}\Biggl(1+ \frac{4}{(n-1)}\biggl(\frac{\frac{1}{2}\widehat{\operatorname{Cov}(X^2, X)}}{\widehat{\mu_X}\widehat{ \operatorname{Var}(X)}}-1\biggr)-\frac{4}{(n-1)}\frac{\widehat{\operatorname{Var}(X)}}{\widehat{\mu_X^2}}\Biggr).
    \label{eq:r_b_corrected}
\end{align}
The full correction of term (b) is
\begin{align}
    \frac{6\widehat{\operatorname{Var}(\bar{X}_2)}}{\widehat{\mu_X^4}} &= \frac{12}{n(n-1)}\frac{\widehat{\operatorname{Var}(X)^2}}{\widehat{\mu_X^4}} +\frac{24}{n}  \frac{\widehat{\operatorname{Var}(X)}}{\widehat{\mu_X^2}}\Biggl(1+ \frac{4}{(n-1)}\biggl(\frac{\frac{1}{2}\widehat{\operatorname{Cov}(X^2, X)}}{\widehat{\mu_X}\widehat{ \operatorname{Var}(X)}}-1-\frac{\widehat{\operatorname{Var}(X)}}{\widehat{\mu_X^2}}\biggr)\Biggr)
    \label{eq:term_b_corrected}
\end{align}

\subsection{Term (c)}
In this section we want to find an expression for term (c):
\begin{align}
    \frac{4\operatorname{Cov}(\bar{X}_2^2, \bar{Y}_2)}{\mu_X^4 \mu_Y^2}.
\end{align}
To this end, we first need a convenient representation of $\bar{X}_2^2$ in terms of other U-statistics:
\begin{align}
    \bar{X}_2^2=\Biggl(\frac{1}{n(n-1)}\sum_{i=1}^n \sum_{\substack{j=1\\j \neq i}}^n X_i X_j \Biggr)^2= \frac{2}{n (n-1)} U_{\alpha}
    + \frac{4(n-2)}{n(n-1)}U_{\beta} +\frac{(n-2)(n-3)}{n(n-1)}\bar{X}_4,
\end{align}
with the U-statistics:
\begin{align}
    U_{\beta}&=\frac{1}{n(n-1)(n-2)}\sum_{i=1}^n \sum_{\substack{j=1\\j \neq i}}^n \sum_{\substack{k=1\\k \neq j\\k \neq i}}^n X_i^2 X_j X_k\\
    U_{\alpha}&=\frac{1}{n(n-1)}\sum_{i=1}^n         \sum_{\substack{j=1\\j \neq i}}^n X_i^2 X_j^2\\
    \bar{X}_4&=\frac{1}{n(n-1)(n-2)(n-3)}\sum_{i=1}^n \sum_{\substack{j=1\\j \neq i}}^n \sum_{\substack{k=1\\k \neq j\\k \neq i}}^n
        \sum_{\substack{l=1\\l \neq k\\l \neq j \\ l \neq i}}^n X_i X_j X_k X_l.
\end{align}
Hence, term (c) becomes:
\begin{align}
    \frac{4\operatorname{Cov}(\bar{X}_2^2, \bar{Y}_2)}{\mu_X^4 \mu_Y^2} = \underbrace{\frac{8}{n (n-1)} \frac{\operatorname{Cov}(U_{\alpha}, \bar{Y}_2)}{\mu_X^4 \mu_Y^2}}_{
    \text{First term}}
    + \underbrace{\frac{16(n-2)}{n(n-1)}\frac{\operatorname{Cov}(U_{\beta}, \bar{Y}_2)}{\mu_X^4 \mu_Y^2}}_{\text{Second term}} +\underbrace{\frac{4(n-2)(n-3)}{n(n-1)}\frac{\operatorname{Cov}(\bar{X}_4, \bar{Y}_2)}{\mu_X^4 \mu_Y^2}}_{\text{Third term}} 
\end{align}
All all the covariances in the above equation are covariances between U-statistics which are $\mathcal{O}\left(\frac{1}{n}\right)$. Therefore, the first term, which already has an explicit $\mathcal{O}\left(\frac{1}{n^2}\right)$ dependence, can be neglected entirely. The second term has an explicit  $\mathcal{O}\left(\frac{1}{n}\right)$, combined with the $\mathcal{O}\left(\frac{1}{n}\right)$ from the covariance this is in total a $\mathcal{O}\left(\frac{1}{n^2}\right)$ dependency. Hence, we have to find an estimator for that term but do not have to recurse on it. On the last term, we do have to recurse, however, we have derived the recursion already in equation (\ref{eq:r_a_corrected}). 
We can rewrite the above equation using the symmetrized U-statistics
\begin{align}
    U_{\beta}&=\frac{1}{n(n-1)(n-2)}\sum_{i=1}^n \sum_{\substack{j=1\\j \neq i}}^n \sum_{\substack{k=1\\k \neq j\\k \neq i}}^n \frac{1}{3} \Bigl(X_i^2 X_j X_k + X_i X_j^2 X_k + X_i X_j X_k^2).
\end{align}
\begin{align}
\begin{split}
    \frac{4\operatorname{Cov}(\bar{X}_2^2, \bar{Y}_2)}{\mu_X^4 \mu_Y^2}&\approx
    \underbrace{\frac{32(n-2)}{n(n-1)^2}\Biggl(\frac{ \operatorname{Cov}(X^2, Y)}{\mu_X^2 \mu_Y} +\frac{ 2\operatorname{Cov}(X, Y)(\operatorname{Var}(X)+\mu_X^2)}{\mu_X^3 \mu_Y}\Biggr)}_{\text{Second term}}\\
    &\phantom{asdfasdf}+\underbrace{\frac{4(n-2)(n-3)}{n(n-1)}\biggl(\frac{8}{n} \frac{\operatorname{Cov}(X, Y)}{\mu_X \mu_Y} + \frac{12}{n(n-1)} \frac{\operatorname{Cov}(X, Y)^2}{\mu_X^2 \mu_Y^2}\biggr)}_{\text{Third term}}
\end{split}
\end{align}
Taking the recursion of the third term into account, the total correction of term (c) is:
\begin{align}
\begin{split}
    \frac{4\widehat{\operatorname{Cov}(\bar{X}_2^2, \bar{Y}_2)}}{\widehat{\mu_X^4} \widehat{\mu_Y^2}}&\approx
    \frac{32(n-2)}{n(n-1)^2}\Biggl(\frac{ \widehat{\operatorname{Cov}(X^2, Y)}}{\widehat{\mu_X^2} \widehat{\mu_Y}} +\frac{ 2\widehat{\operatorname{Cov}(X, Y)}(\widehat{\operatorname{Var}(X)}+\widehat{\mu_X^2)}}{\widehat{\mu_X^3} \widehat{\mu_Y}}\Biggr)\\
    &\phantom{asdf}+\frac{4(n-2)(n-3)}{n(n-1)}\biggl(\frac{8}{n} r_{a}^{*} + \frac{12}{n(n-1)} \frac{\widehat{\operatorname{Cov}(X, Y)^2}}{\widehat{\mu_X^2} \widehat{\mu_Y^2}}\biggr) 
\end{split}
\label{eq:term_c_corrected}
\end{align}

\subsection{Term (d)}
The computation of the correction of term (d), $\frac{4\operatorname{Cov}(\bar{X}_2^2, \bar{X}_2)}{\mu_X^6}$, is similar to that of term (c). Hence, we only present the resulting correction:
\begin{align}
\begin{split}
    \frac{4\widehat{\operatorname{Cov}(\bar{X}_2^2, \bar{X}_2)}}{\widehat{\mu_X^4} \widehat{\mu_Y^2}}&\approx
    \frac{32(n-2)}{n(n-1)^2}\Biggl(\frac{ \widehat{\operatorname{Cov}(X^2, X)}}{\widehat{\mu_X^3}} +\frac{ 2\widehat{\operatorname{Var}(X)}(\widehat{\operatorname{Var}(X)}+\widehat{\mu_X^2})}{\widehat{\mu_X^4}}\Biggr)\\
    &\phantom{asdfasdf}+\frac{4(n-2)(n-3)}{n(n-1)}\Biggl( \frac{8}{n} r_{b}^{*} + \frac{12}{n(n-1)} \frac{\widehat{\operatorname{Var}(X)^2}}{\widehat{\mu_X^4}}\Biggr)
\end{split}
\label{eq:term_d_corrected}
\end{align}

\subsection{Term (e)}
Term (e) is:
\begin{align}
    \frac{\operatorname{Cov}(\bar{X}_2^3, \bar{Y}_2)}{\mu_X^6 \mu_Y^2}
\end{align}
To be able to compute that term, we reexpress the numerator in terms of several U-statistics:
\begin{align}
\begin{split}
    \bar{X}_2^3 &= \frac{4}{n^2(n-1)^2}U_{I}+\frac{24(n-2)}{n^2(n-1)^2}U_{II}+\frac{8(n-2)}{n^2(n-1)^2}U_{III}+\frac{8(n-2)(n-3)}{n^2(n-1)^2}U_{IV}\\
    &+\frac{30 (n-2)(n-3)}{n^2(n-1)^2}U_{V}+\frac{12 (n-2)(n-3)(n-4)}{n^2(n-1)2}U_{VI}+\frac{(n-2)(n-2)(n-4)(n-5)}{n^2(n-1)^2}\bar{X}_6,
\end{split}
\end{align}
where 
\begin{align}
    U_{I}:=\frac{1}{n(n-1)}\sum_{i=1}^n \sum_{\substack{j=1\\j \neq i}}^n X_i^3 X_j^3,
\end{align}

\begin{align}
    U_{II}:=\frac{1}{n(n-1)(n-2)}\sum_{i=1}^n \sum_{\substack{j=1\\j \neq i}}^n \sum_{\substack{k=1\\k \neq j\\k \neq i}}^n X_i^3 X_j^2 X_k,
\end{align}

\begin{align}
    U_{III}=\frac{1}{n(n-1)(n-2)}\sum_{i=1}^n \sum_{\substack{j=1\\j \neq i}}^n \sum_{\substack{k=1\\k \neq j\\k \neq i}}^n X_i^2, X_j^2 X_k^2
\end{align}

\begin{align}
    U_{IV}:=\frac{1}{n(n-1)(n-2)(n-3)}\sum_{i=1}^n \sum_{\substack{j=1\\j \neq i}}^n \sum_{\substack{k=1\\k \neq j\\k \neq i}}^n
        \sum_{\substack{l=1\\l \neq k\\l \neq j \\ l \neq i}}^n X_i^3 X_j X_k X_l,
\end{align}

\begin{align}
    U_{V}:=\frac{1}{n(n-1)(n-2)(n-3)}\sum_{i=1}^n \sum_{\substack{j=1\\j \neq i}}^n \sum_{\substack{k=1\\k \neq j\\k \neq i}}^n \sum_{\substack{l=1\\l \neq k\\l \neq j \\ l \neq i}}^n X_i^2 X_j^2 X_k X_l,
\end{align}

\begin{align}
    U_{VI}:=\frac{1}{n(n-1)(n-2)(n-3)(n-4)}\sum_{i=1}^n \sum_{\substack{j=1\\j \neq i}}^n \sum_{\substack{k=1\\k \neq j\\k \neq i}}^n
        \sum_{\substack{l=1\\l \neq k\\l \neq j \\ l \neq i }}^n \sum_{\substack{p=1\\p\neq l \\p \neq k\\p \neq j \\ p \neq i }}^n  X_i^2 X_j X_k X_l X_p,
\end{align}

\begin{align}
    \bar{X}_6=\frac{1}{n(n-1)(n-2)(n-3)(n-4)(n-5)}\sum_{i=1}^n \sum_{\substack{j=1\\j \neq i}}^n \sum_{\substack{k=1\\k \neq j\\k \neq i}}^n
        \sum_{\substack{l=1\\l \neq k\\l \neq j \\ l \neq i }}^n \sum_{\substack{p=1\\p\neq l \\p \neq k\\p \neq j \\ p \neq i }}^n \sum_{\substack{q=1\\q\neq p \\ q\neq l \\q \neq k\\q \neq j \\ q \neq i }}^n X_i X_j X_k X_l X_p X_q, 
\end{align}
Hence term (e) can be written as:
\begin{align}
\begin{split}
    \frac{\operatorname{Cov}(\bar{X}_2^3, \bar{Y}_2)}{\mu_X^6 \mu_Y^2} &= \underbrace{\frac{4}{n^2(n-1)^2}}_{\in \mathcal{O}\left(\frac{1}{n^4}\right)}\frac{\operatorname{Cov}(U_{I}, \bar{Y}_2)}{\mu_X^6 \mu_Y^2}+\underbrace{\frac{24(n-2)}{n^2(n-1)^2}}_{\in \mathcal{O}\left(\frac{1}{n^3}\right)}\frac{\operatorname{Cov}(U_{II}, \bar{Y}_2)}{\mu_X^6 \mu_Y^2}+\underbrace{\frac{8(n-2)}{n^2(n-1)^2}}_{\in \mathcal{O}\left(\frac{1}{n^3}\right)}\frac{\operatorname{Cov}(U_{III}, \bar{Y}_2)}{\mu_X^6 \mu_Y^2}\\
    &+\underbrace{\frac{8(n-2)(n-3)}{n^2(n-1)^2}}_{\in \mathcal{O}\left(\frac{1}{n^2}\right)}\frac{\operatorname{Cov}(U_{IV}, \bar{Y}_2)}{\mu_X^6 \mu_Y^2}
    +\underbrace{\frac{30 (n-2)(n-3)}{n^2(n-1)^2}}_{\in \mathcal{O}\left(\frac{1}{n^2}\right)}\frac{\operatorname{Cov}(U_{V}, \bar{Y}_2)}{\mu_X^6 \mu_Y^2}\\
    &+\underbrace{\frac{12 (n-2)(n-3)(n-4)}{n^2(n-1)^2}}_{\in \mathcal{O}\left(\frac{1}{n}\right)}\frac{\operatorname{Cov}(U_{VI}, \bar{Y}_2)}{\mu_X^6 \mu_Y^2}+\underbrace{\frac{(n-2)(n-3)(n-4)(n-5)}{n^2(n-1)^2}}_{\in \mathcal{O}\left(1\right)}\frac{\operatorname{Cov}(\bar{X}_6, \bar{Y}_2)}{\mu_X^6 \mu_Y^2}.
\end{split}
\end{align}
At this point, we can immediately discard the first three terms as they are at least $ \mathcal{O}\left(\frac{1}{n^3}\right)$ and so can directly be neglected for a second order correction. In addition, as we are dealing with covariances between U-statistics they add another $ \mathcal{O}\left(\frac{1}{n}\right)$. Therefore, the fourth and fifth term are actually  $\mathcal{O}\left(\frac{1}{n}\right) \mathcal{O}\left(\frac{1}{n^2}\right)=\mathcal{O}\left(\frac{1}{n^3}\right)$, so they can be neglected as well. Only the last and the second to last term remain:
\begin{align}
\begin{split}
    \frac{\operatorname{Cov}(\bar{X}_2^3, \bar{Y}_2)}{\mu_X^6 \mu_Y^2} \approx
    \underbrace{\frac{12 (n-2)(n-3)(n-4)}{n^2(n-1)^2}\frac{\operatorname{Cov}(U_{VI}, \bar{Y}_2)}{\mu_X^6 \mu_Y^2}}_{\text{Sixth term}}+\underbrace{\frac{(n-2)(n-3)(n-4)(n-5)}{n^2(n-1)^2}\frac{\operatorname{Cov}(\bar{X}_6, \bar{Y}_2)}{\mu_X^6 \mu_Y^2}}_{\text{Seventh term}}.
\end{split}
\end{align}
Re-expressing the covariances between U-statistics as covariances between random variables $X$ and $Y$ (and using the symmetrized version of $U_{VI}$), we obtain:
\begin{align}
\begin{split}
    \frac{\operatorname{Cov}(\bar{X}_2^3, \bar{Y}_2)}{\mu_X^6 \mu_Y^2}&\approx
    \underbrace{\frac{24 (n-2)(n-3)(n-4)}{n^2(n-1)^3}\biggl(\frac{ \operatorname{Cov}(X^2, Y)}{\mu_Y \mu_X^2}+\frac{4 \operatorname{Cov}(X, Y)(\operatorname{Var}(X)+\mu_X^2)}{\mu_Y \mu_X^3} \biggr)}_{\text{Sixth term}}\\
    &+\underbrace{\frac{(n-2)(n-3)(n-4)(n-5)}{n^2(n-1)^2}\biggl(\frac{12}{n} \frac{\operatorname{Cov}(X, Y)}{\mu_X \mu_Y} + \frac{30}{n(n-1)} \frac{\operatorname{Cov}(X, Y)^2}{\mu_X^2 \mu_Y^2} \biggr)}_{\text{Seventh term}}
\end{split}
\end{align}
Since the term $\frac{12}{n} \frac{\operatorname{Cov}(X, Y)}{\mu_X \mu_Y}$ is in $\mathcal{O}\left(\frac{1}{n}\right)$ we have to recurse on it. However, we already have derived its correction in equation (\ref{eq:r_a_corrected}). Therefore, the total correction of term (e) comes down to:
\begin{align}
\begin{split}
    \frac{\widehat{\operatorname{Cov}(\bar{X}_2^3, \bar{Y}_2)}}{\widehat{\mu_X^6} \widehat{\mu_Y^2}}&=
    \frac{24 (n-2)(n-3)(n-4)}{n^2(n-1)^3}\biggl(\frac{ \widehat{\operatorname{Cov}(X^2, Y)}}{\widehat{\mu_Y} \widehat{\mu_X^2}}+\frac{4 \widehat{\operatorname{Cov}(X, Y})(\widehat{\operatorname{Var}(X)}+\widehat{\mu_X^2})}{\widehat{\mu_Y} \widehat{\mu_X^3}} \biggr)\\
    &+\frac{(n-2)(n-3)(n-4)(n-5)}{n^2(n-1)^2}\biggl(\frac{12}{n} r_{a}^{*} + \frac{30}{n(n-1)} \frac{\widehat{\operatorname{Cov}(X, Y)^2}}{\widehat{\mu_X^2} \widehat{\mu_Y^2}} \biggr)
\end{split}
\label{eq:term_e_corrected}
\end{align}

\subsection{Term (f)}
Term (f) is:
\begin{align}
    \frac{\operatorname{Cov}(\bar{X}_2^3, \bar{X}_2)}{\mu_X^8}
\end{align}
The procedure to obtain its correction is analogous to that of term (e), hence we only present the result:
\begin{align}
\begin{split}
    \frac{\widehat{\operatorname{Cov}(\bar{X}_2^3, \bar{X}_2)}}{\widehat{\mu_X^8} }&=\frac{24 (n-2)(n-3)(n-4)}{n^2(n-1)^3} \biggl(\frac{ \widehat{\operatorname{Cov}(X^2, X)}}{\widehat{\mu_X^3}}+\frac{4  \widehat{\operatorname{Var}(X)}(\widehat{\operatorname{Var}(X)}+\widehat{\mu_X^2)}}{\widehat{\mu_X^4}}\biggr)\\
    &+\frac{(n-2)(n-3)(n-4)(n-5)}{n^2(n-1)^2}\biggl(\frac{12}{n} r_{b}^{*} + \frac{30}{n(n-1)} \frac{\widehat{\operatorname{Var}(X)^2}}{\widehat{\mu_X^4}} \biggr)
\end{split}
\label{eq:term_f_corrected}
\end{align}

\section{Top-label calibration}
\label{appendix:results}
Following standard practice in related work on calibration, we report the $L_1$ $ECE^{bin}$ for top-label (also called confidence) calibration on CIFAR-10/100. $ECE^{bin}$ was calculated using 15 bins and an adaptive width binning scheme, which determines the bin sizes so that an equal number of samples fall into each bin \citep{nguyen2015, mukhoti2020}. The 95\% confidence intervals for  $ECE^{bin}$ are obtained using 100 bootstrap samples as in \citet{kumar2019}. In all experiments with calibration regularized training, the biased version of $ECE^{KDE}$ was used. 

Table~\ref{table:top_label} summarizes our evaluation of the efficacy of KDE-XE in lowering the calibration error over the baseline XE on CIFAR-10 and CIFAR-100. The best performing $\lambda$ coefficient for KDE-XE is shown in the brackets. The results show that KDE-XE consistently reduces the calibration error, without dropping the accuracy. Figure \ref{fig:toplabel_calibration} depicts the $L_2$ $ECE^{bin}$ for several choices of the $\lambda$ parameter for KDE-XE, using ResNet-110 (SD) on CIFAR-10/100. 
Figure \ref{fig:reliability_plot_toplabel_calibration} shows reliability diagrams with 10 bins for top-label calibration on CIFAR-100 using ResNet and Wide-ResNet. Comapared to XE, we notice that KDE-XE lowers the overconfident predictions, and obtains better calibration than MMCE ($\lambda=2$) and FL-53 on average, as summarized by the ECE value in the gray box.

\begin{table*}[ht]
	\centering
	\footnotesize
	\caption{Top-label $L_1$ adaptive-width $ECE^{bin}$ and accuracy for XE and KDE-XE for various architectures on CIFAR-10/100. Best ECE values are marked in bold. The value in the brackets represent the value of the $\lambda$ parameter.}
	\resizebox{0.95\linewidth}{!}{%
		\begin{tabular}{cccccc}
			\toprule
			Dataset & Model & \multicolumn{2}{c}{$ECE^{bin}$} & \multicolumn{2}{c}{Accuracy} \\
			&& XE & KDE-XE & XE & KDE-XE  \\
			\midrule
			\multirow{2}{*}{CIFAR-10} 
			& ResNet-110 & 3.890 $\pm$ 0.602 & \textbf{3.093} $\pm$ 0.604 (0.001) & 0.925 $\pm$ 0.005 & 0.930 $\pm$ 0.005 \\ 
			& ResNet-110 (SD) & 3.555 $\pm$ 0.623 & \textbf{2.778} $\pm$ 0.468 (0.01) & 0.926 $\pm$ 0.005 & 0.932 $\pm$ 0.005 \\
			\midrule
			\multirow{4}{*}{CIFAR-100} 
			& ResNet-110 & 12.769 $\pm$ 0.784 & \textbf{8.969} $\pm$ 1.047 (0.2) & 0.700 $\pm$ 0.009 & 0.696 $\pm$ 0.009 \\
			& ResNet-110 (SD) & 11.175 $\pm$ 0.642 & \textbf{7.828} $\pm$ 0.814 (0.001) & 0.728 $\pm$ 0.009 & 0.721 $\pm$ 0.009 \\ 
			& Wide-ResNet-28-10 & 7.279 $\pm$ 0.876 & \textbf{3.703} $\pm$ 1.086 (0.5) & 0.762 $\pm$ 0.008 & 0.770 $\pm$ 0.008 \\
			& DenseNet-40 & 9.196 $\pm$ 0.881 & \textbf{8.016} $\pm$ 1.079 (0.01) & 0.756 $\pm$ 0.008 & 0.756 $\pm$ 0.008 \\
			\bottomrule
		\end{tabular}%
	}
	\label{table:top_label}
\end{table*}

\begin{figure}[ht!]
    \centering
    \subfloat[CIFAR-10]{
    \label{subfig:cifar10_resnetsd}
    \includegraphics[width=.4\linewidth]{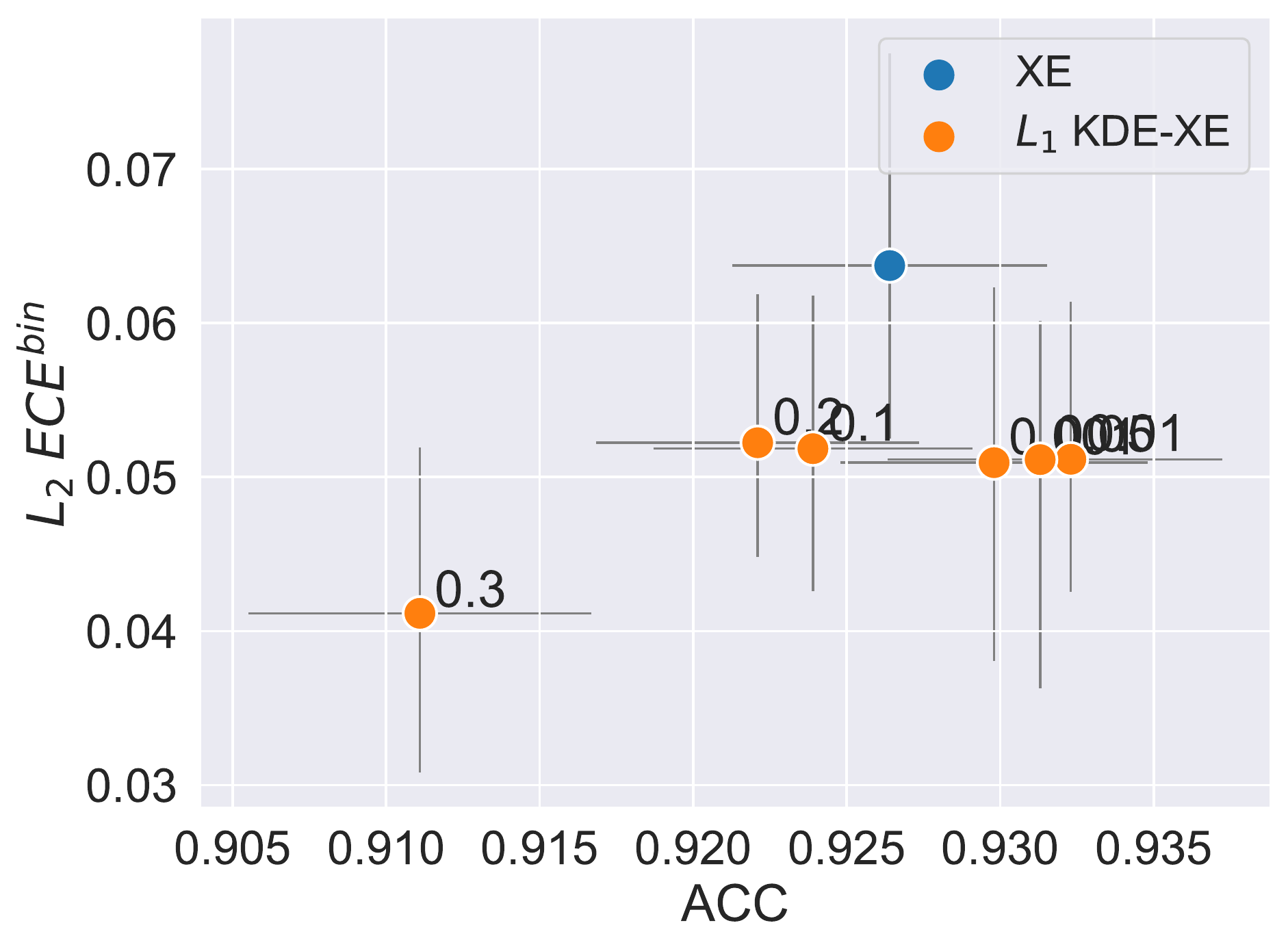}
    } 
    \subfloat[CIFAR-100]{
    \label{subfig:cifar100_resnetsd}
    \includegraphics[width=.4\linewidth]{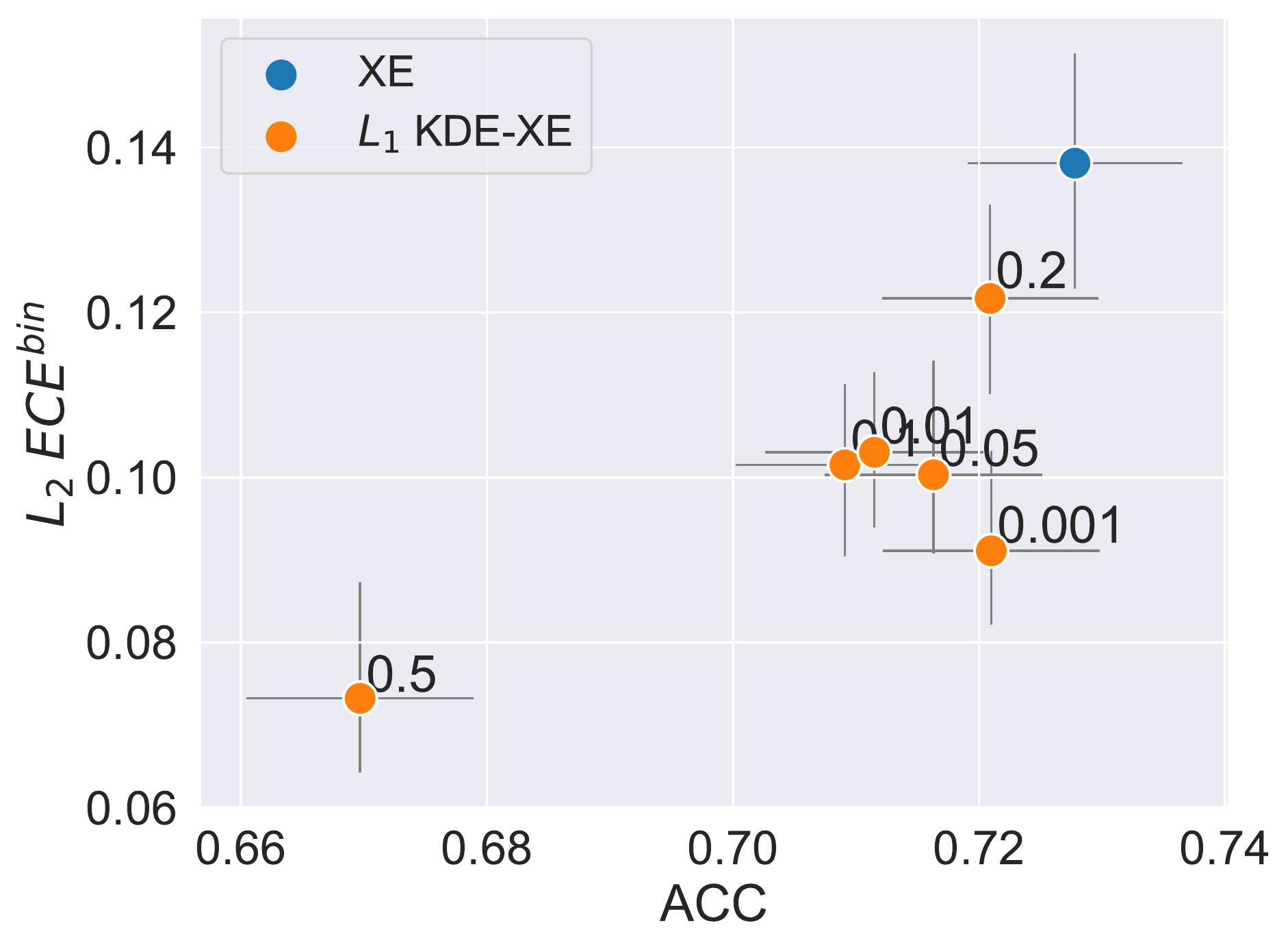} 
    } 
    \caption{$L_2$ $ECE^{bin}$ for top-label calibration using ResNet (SD).}
    \label{fig:toplabel_calibration}
\end{figure}

\begin{figure}[ht!]
    \captionsetup[subfigure]{labelformat=empty}
    \centering
    \subfloat{
    \label{subfig:cifar100_rn_xe}
    \includegraphics[width=.23\linewidth]{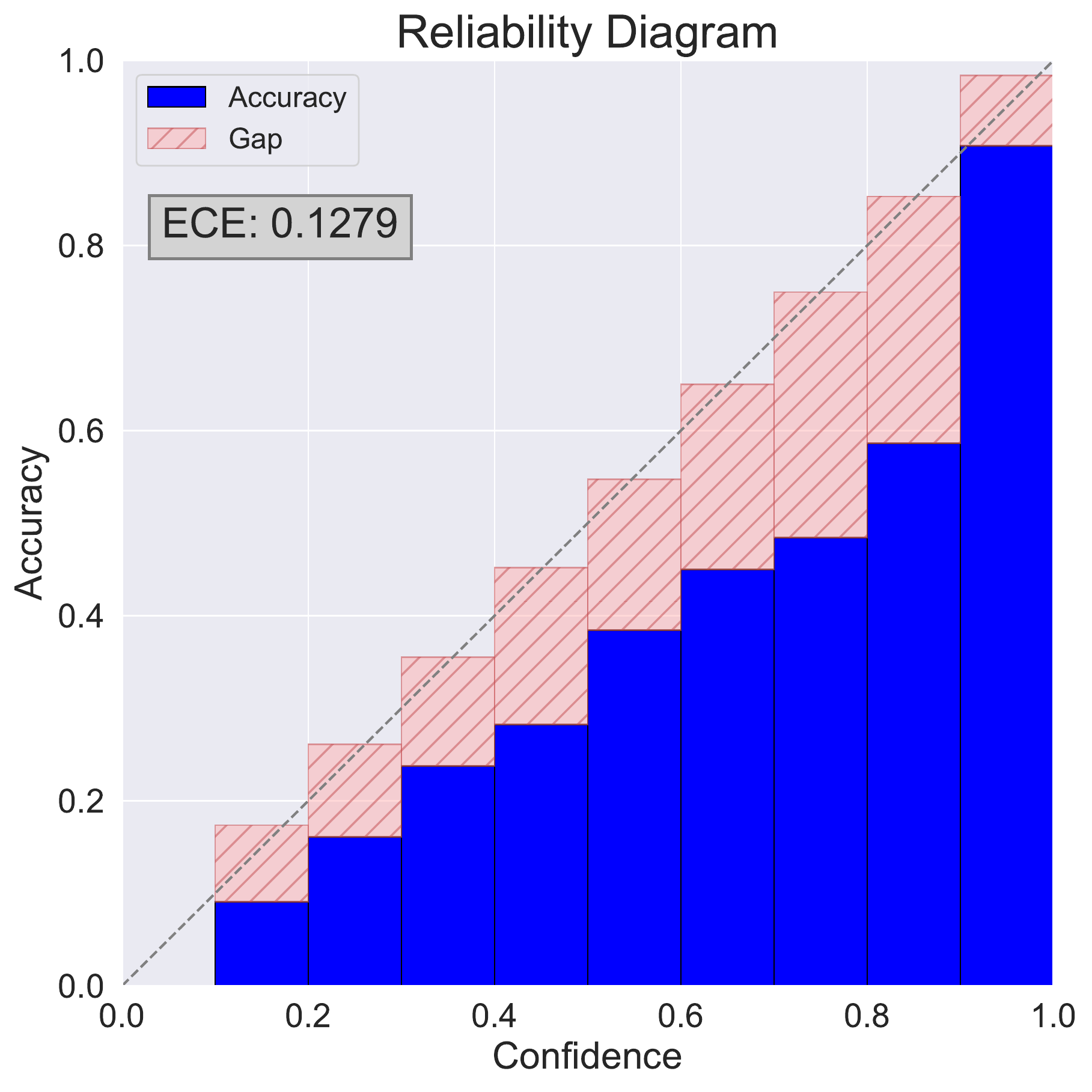}
    } 
    \subfloat{
    \label{subfig:cifar100_rn_mmce}
    \includegraphics[width=.23\linewidth]{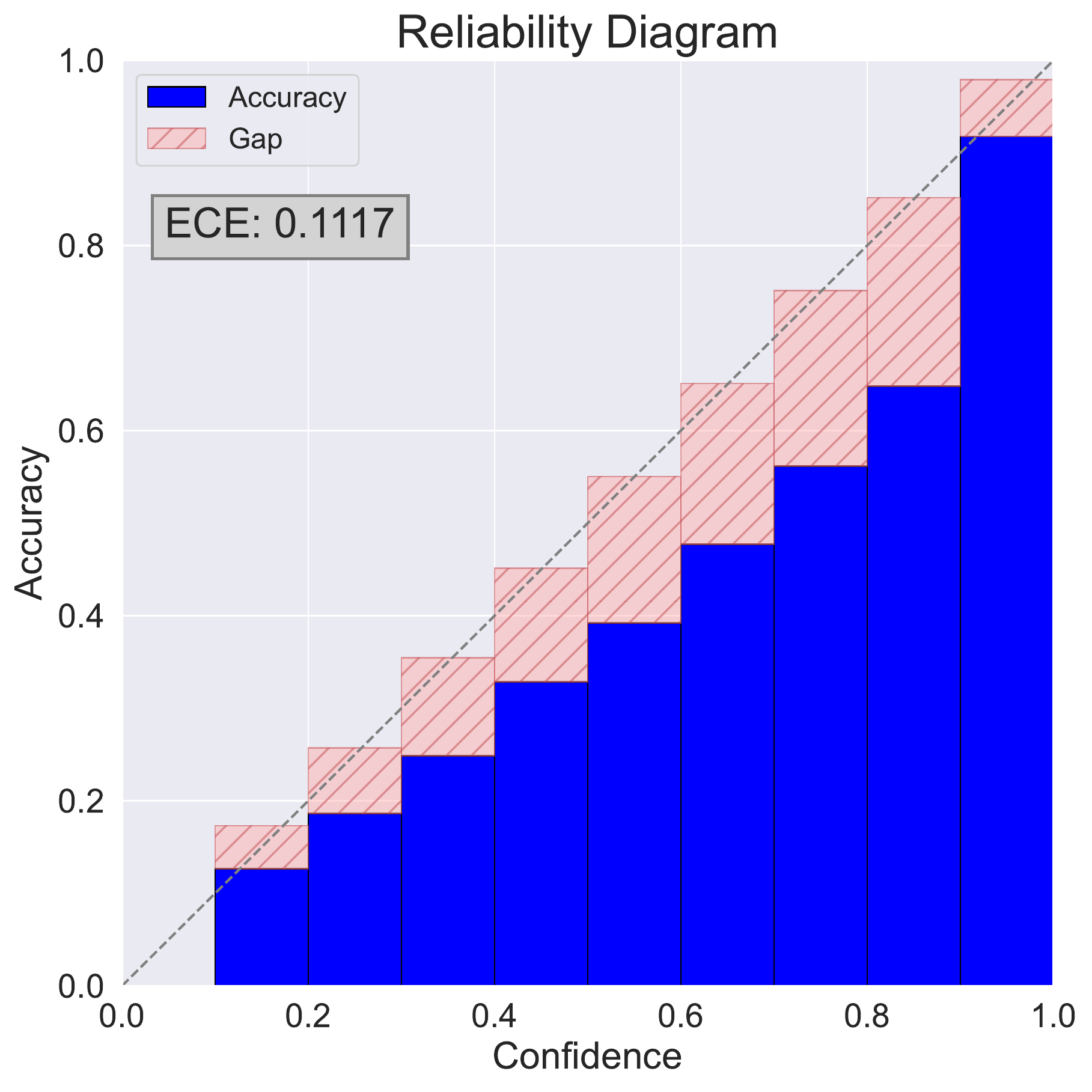} 
    } 
    \subfloat{
    \label{subfig:cifar100_rn_fl}
    \includegraphics[width=.23\linewidth]{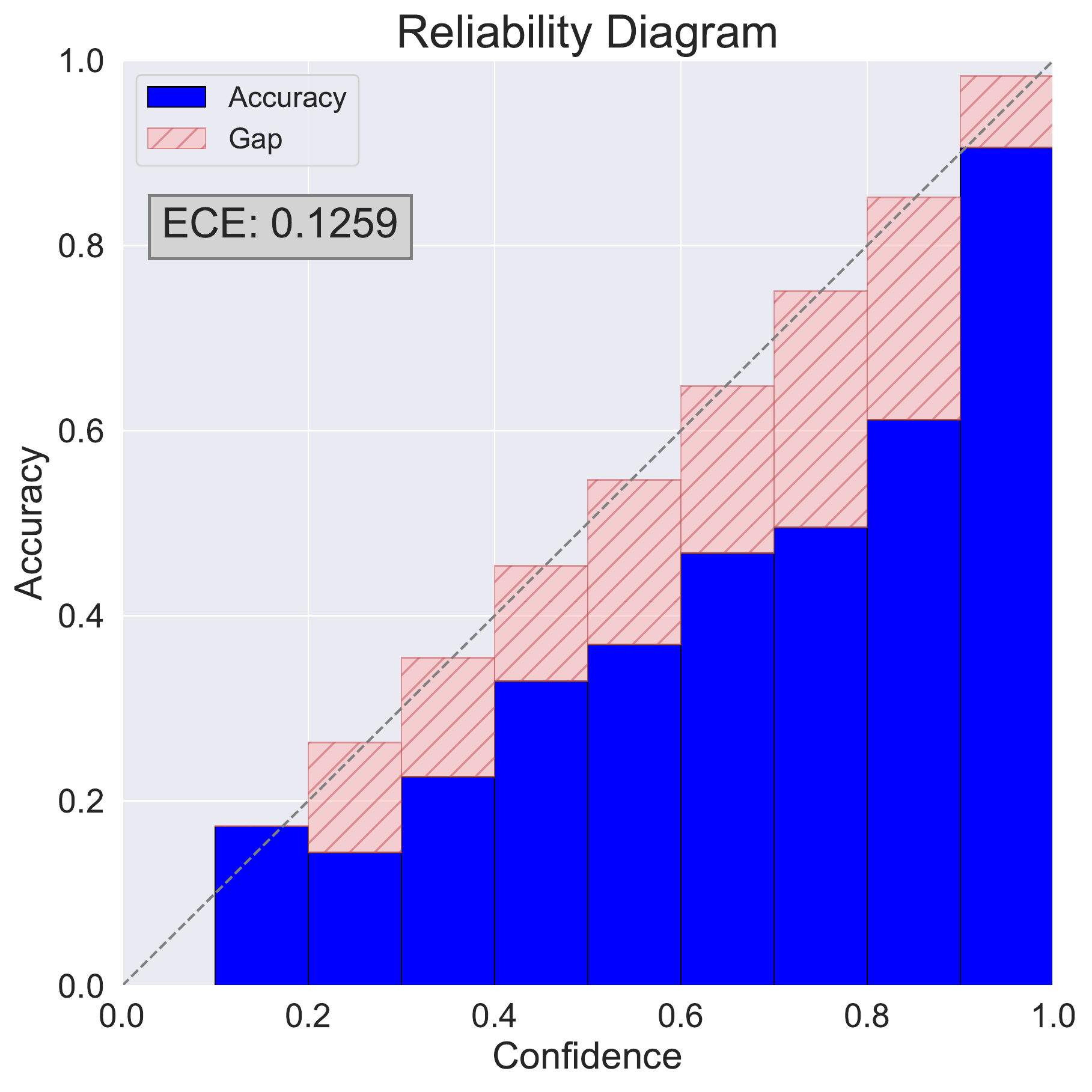} 
    } 
    \subfloat{
    \label{subfig:cifar100_rn_kdexe}
    \includegraphics[width=.23\linewidth]{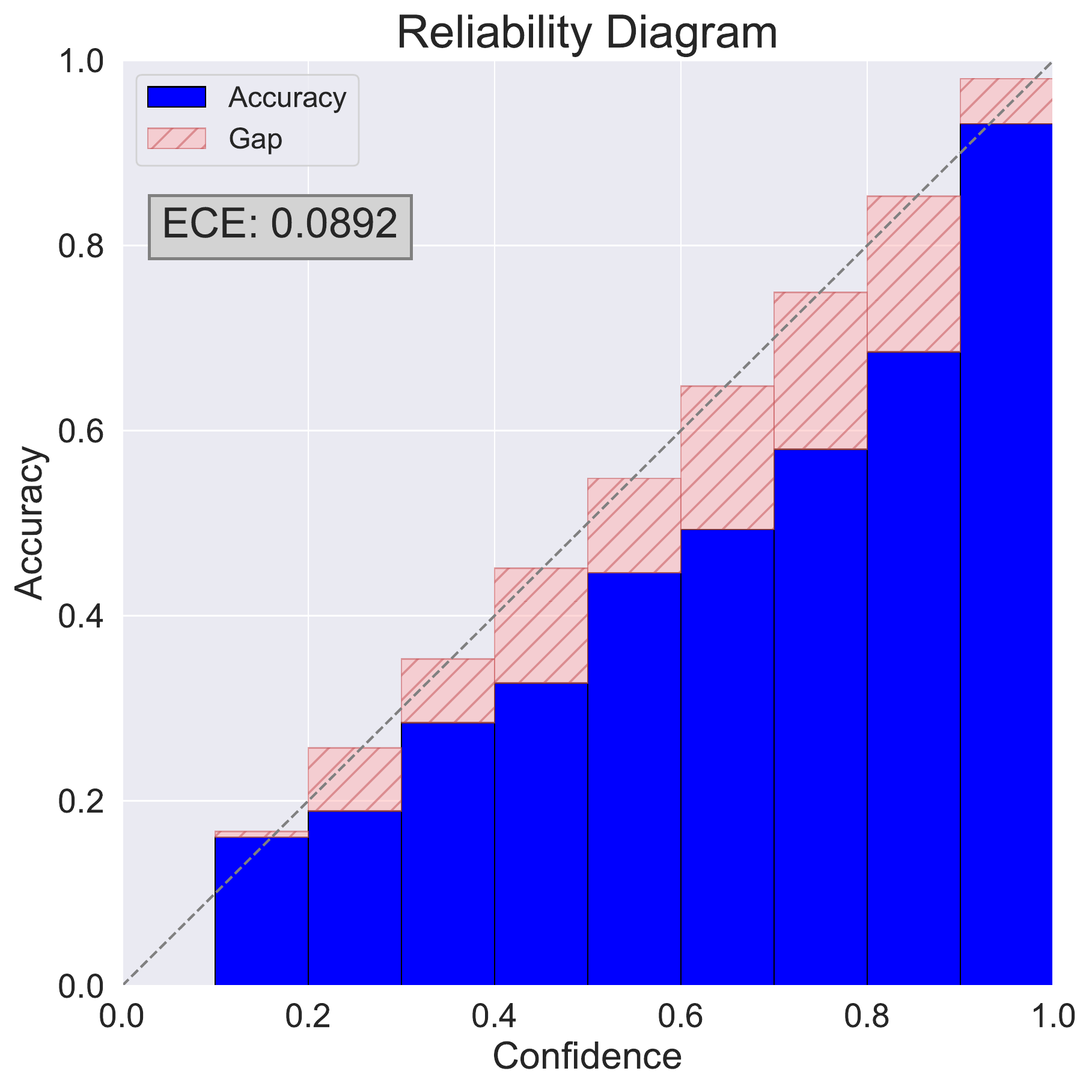} 
    } 
    \hfill
    \subfloat[XE]{
    \label{subfig:cifar100_wrn_xe}
    \includegraphics[width=.23\linewidth]{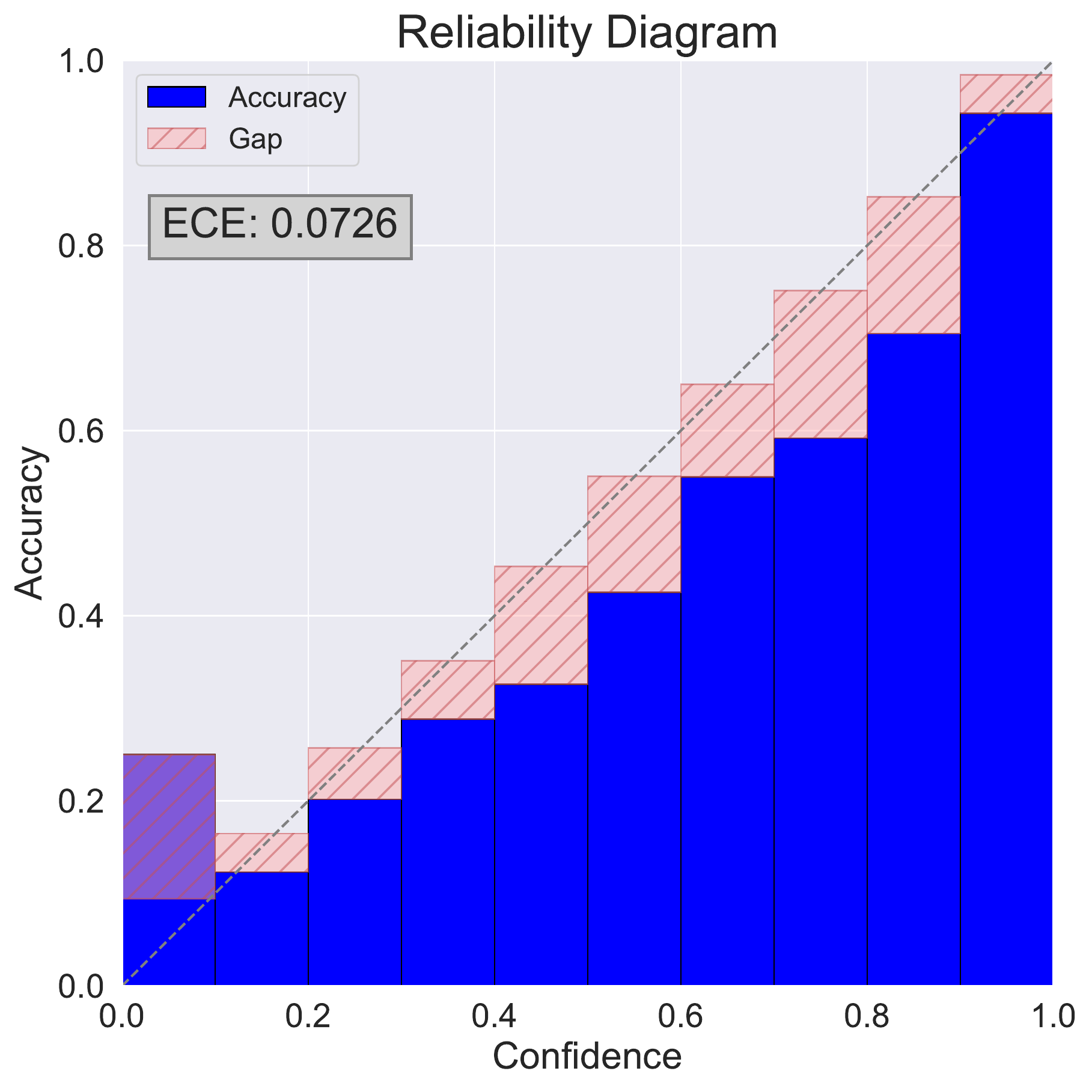}
    } 
    \subfloat[MMCE]{
    \label{subfig:cifar100_wrn_mmce}
    \includegraphics[width=.23\linewidth]{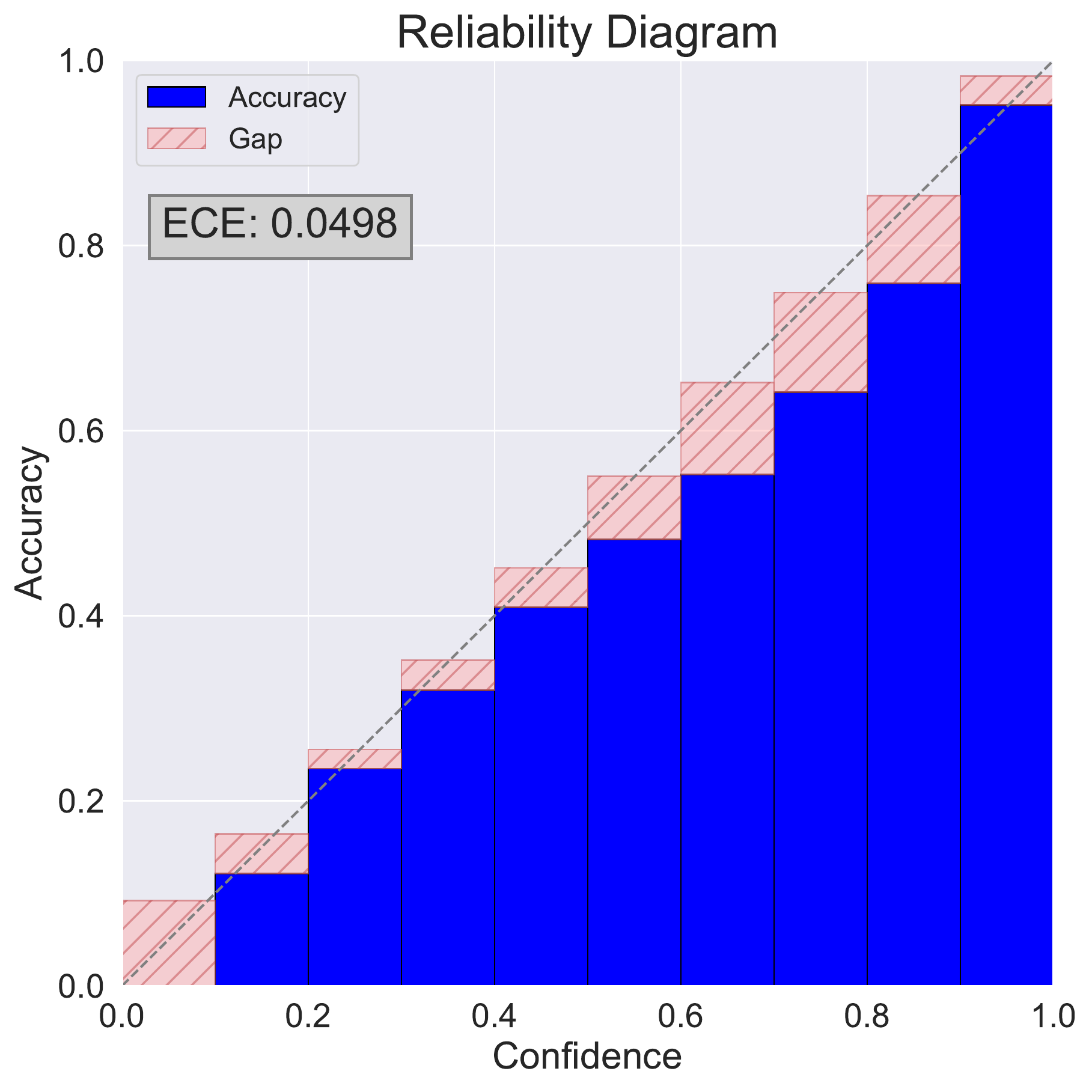} 
    } 
    \subfloat[FL-53]{
    \label{subfig:cifar100_wrn_fl}
    \includegraphics[width=.23\linewidth]{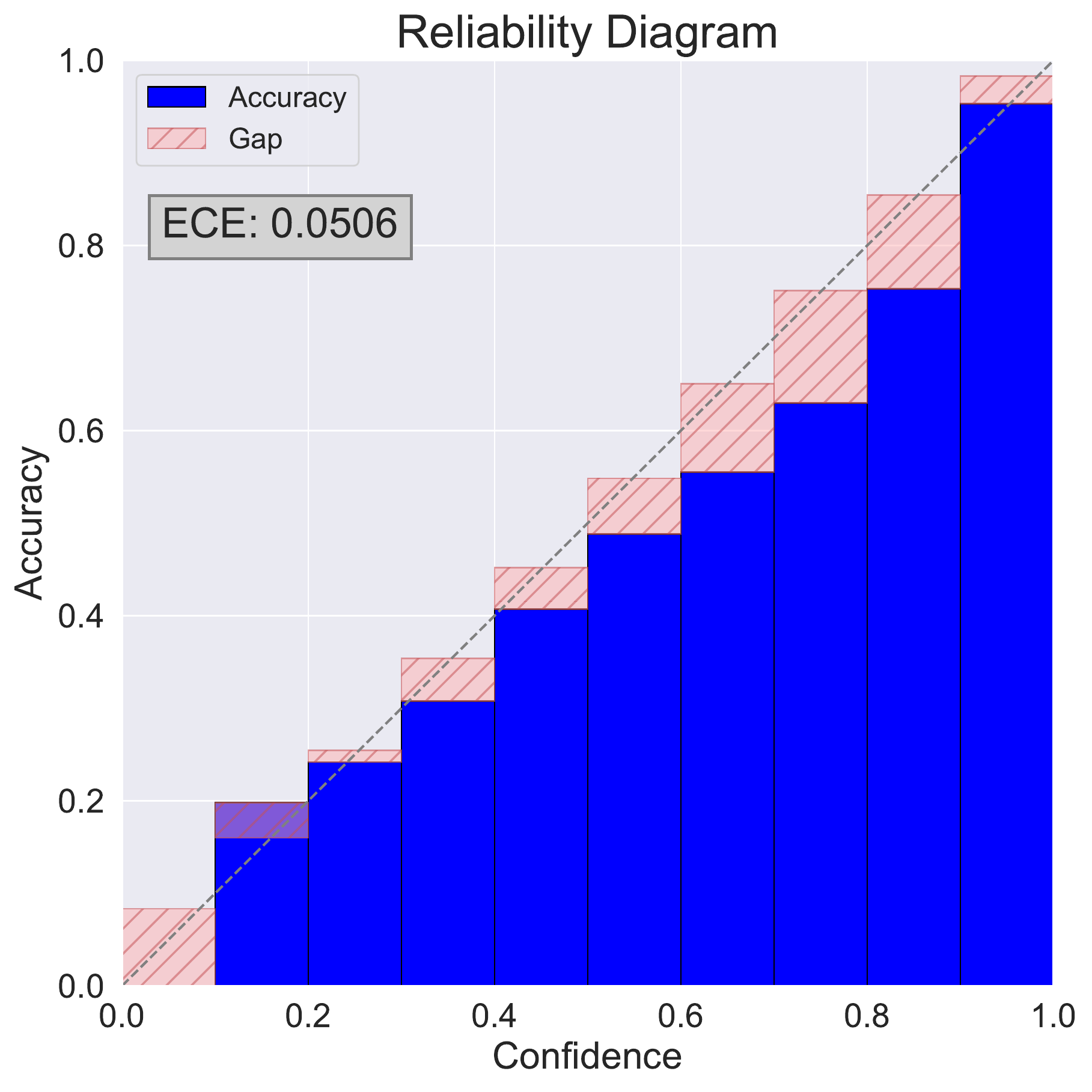} 
    } 
    \subfloat[$L_1$ KDE-XE]{
    \label{subfig:cifar100_wrn_kdexe}
    \includegraphics[width=.23\linewidth]{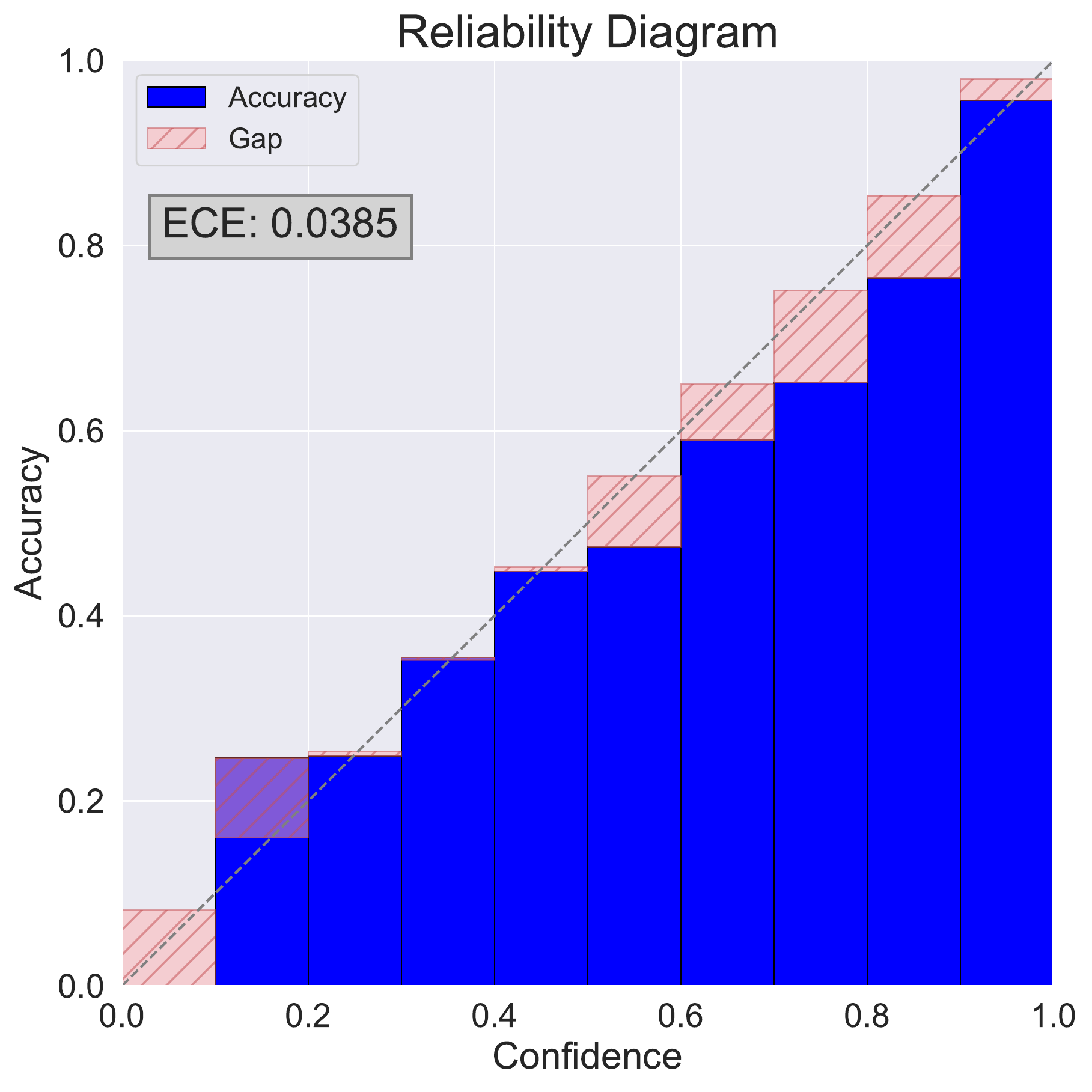} 
    } 
    \caption{Reliability diagrams for top-label calibration on CIFAR-100 using ResNet (top row) and Wide-ResNet (bottom row) for each of the considered baselines.}
    \label{fig:reliability_plot_toplabel_calibration}
\end{figure}

\section{Relationship between $ECE^{bin}$ and $ECE^{KDE}$}
\label{appendix:binning_vs_kde}
In the following two sections, we investigate further the relationship between $ECE^{bin}$, as the most widely used metric, and our $ECE^{KDE}$ estimator. For the three types of calibration, $ECE^{bin}$ is calculated with equal-width binning scheme. The values for the bandwidth in $ECE^{KDE}$ and the number of bins per class for $ECE^{bin}$ are chosen with leave-one-out maximum likelihood procedure and Doane's formula \citep{doanesformula1976}, respectively.

Figure~\ref{fig:binned_simplex_estimator_c10} shows an example of  $ECE^{bin}$ in a three-class setting on CIFAR-10. The points are mostly concentrated at the edges of the histogram, as can be seen from Figure \ref{subfig:histogram_c10}. The surface of the corresponding Dirichlet KDE is given in \ref{subfig:surface_c10}.

Figure~\ref{fig:binned_vs_kde_points} shows the relationship between  $ECE^{bin}$ and $ECE^{KDE}$. The points represent a trained Resnet-56 model on a subset of three classes from CIFAR-10. In every row, a differnt number of points was used to estimate the $ECE^{KDE}$. We notice the $ECE^{KDE}$ estimates of the three types of calibration closely correspond to their histogram-based approximations.


\begin{figure}[ht!]
    \centering
    \subfloat[Splitting the simplex in 16 bins]{
    \label{subfig:scatterplot_c10}
    \includegraphics[width=.31\linewidth]{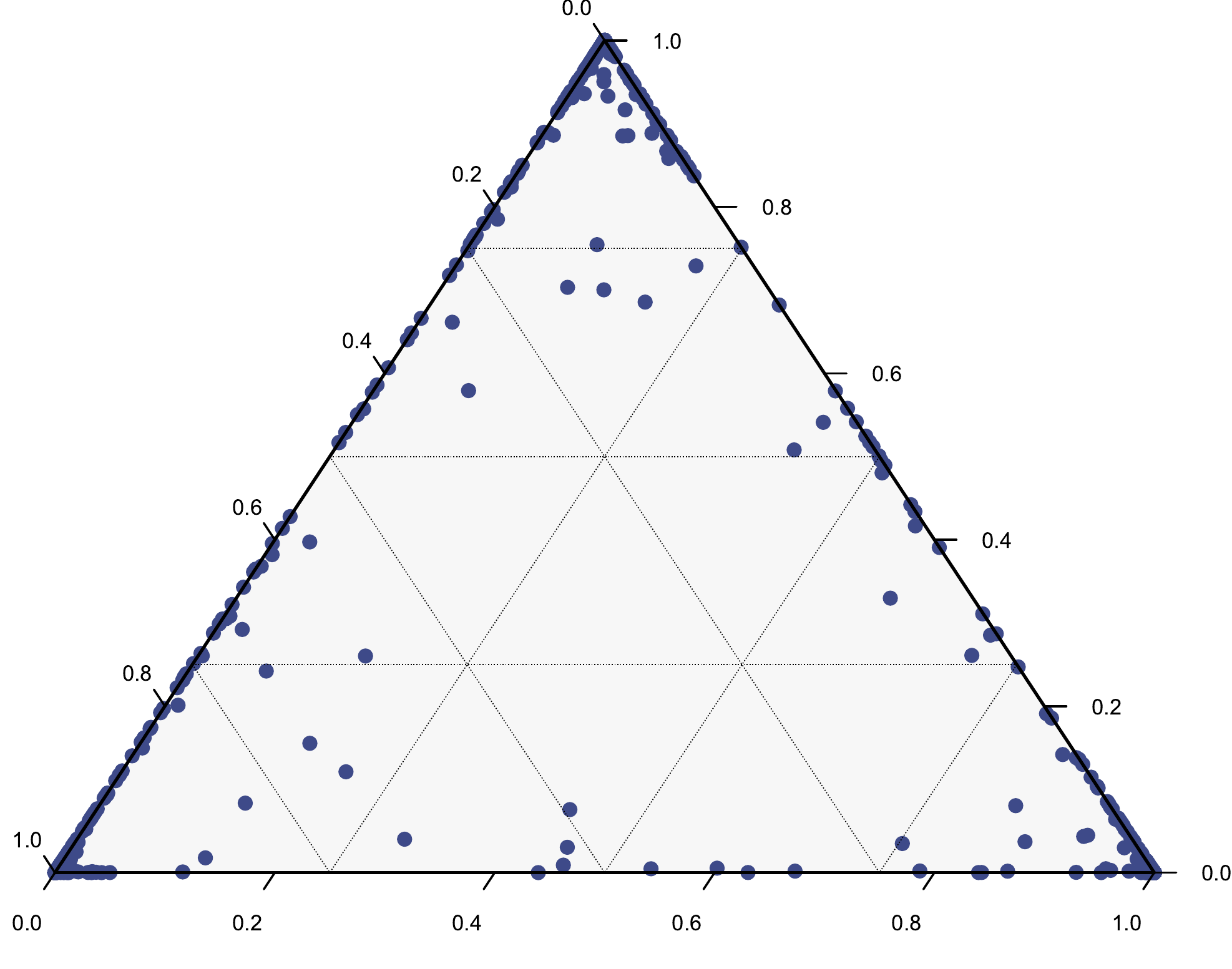}
    } \hfill
    \subfloat[Corresponding histogram]{
    \label{subfig:histogram_c10}
    \includegraphics[width=.31\linewidth]{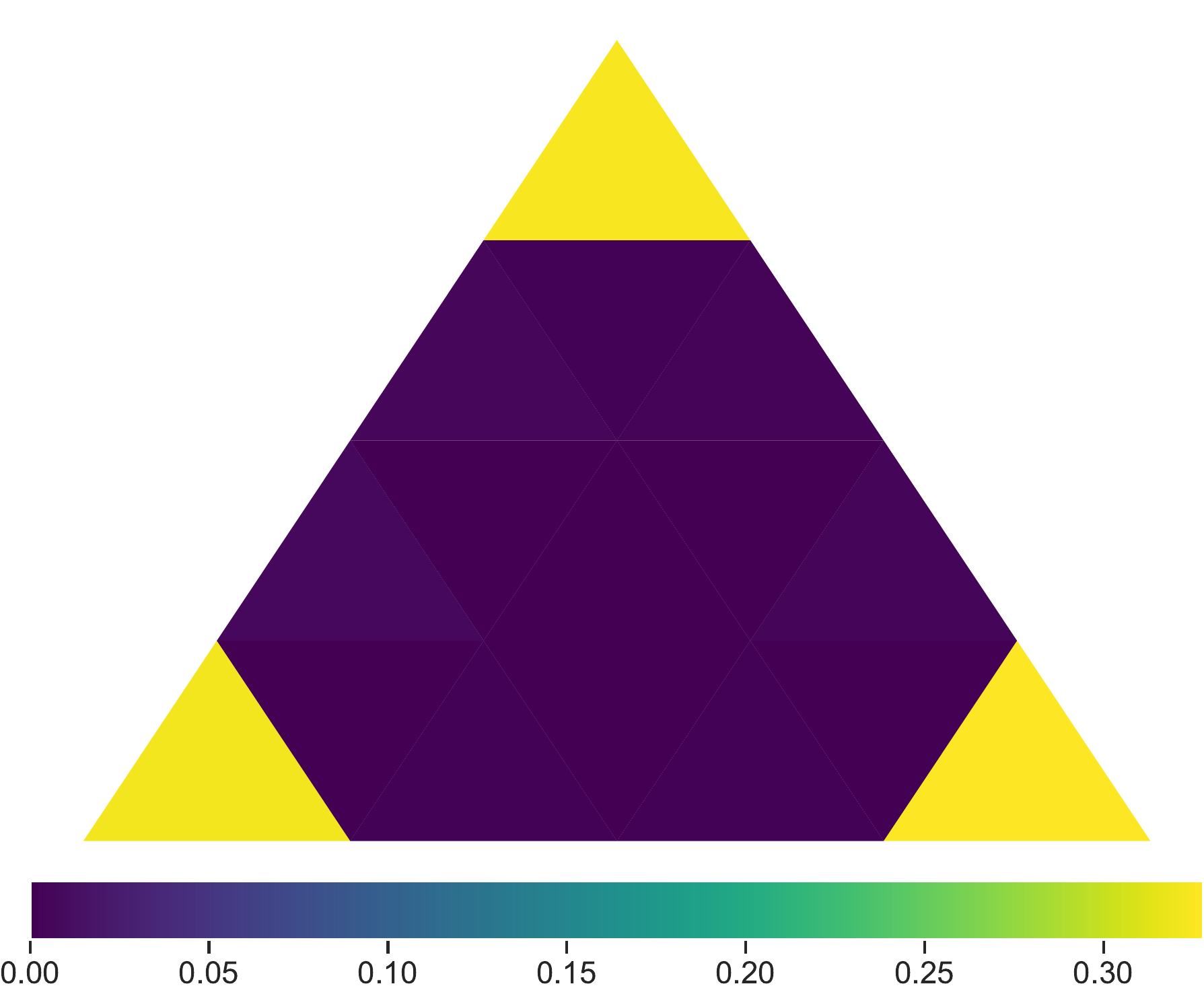} 
    } \hfill
    \subfloat[Corresponding Dirichlet KDE]{
    \label{subfig:surface_c10}
    \includegraphics[width=.31\linewidth]{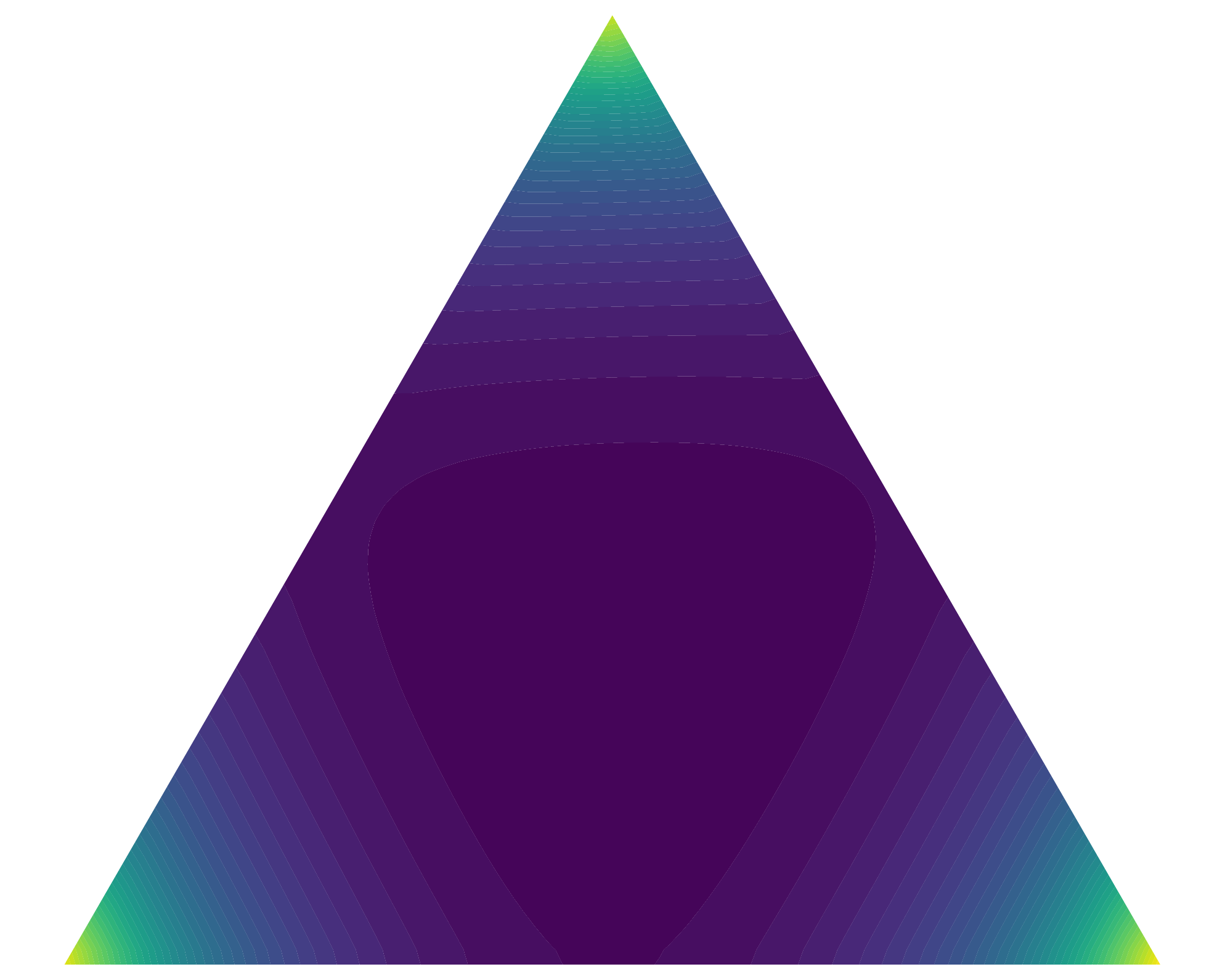}
    }
    \caption{An example of a simplex binned estimator and kernel-density estimator for CIFAR-10 }
    \label{fig:binned_simplex_estimator_c10}
\end{figure}

\begin{figure}[ht!]
    \captionsetup[subfigure]{labelformat=empty}
    \centering
    \subfloat{
    \includegraphics[width=.31\linewidth]{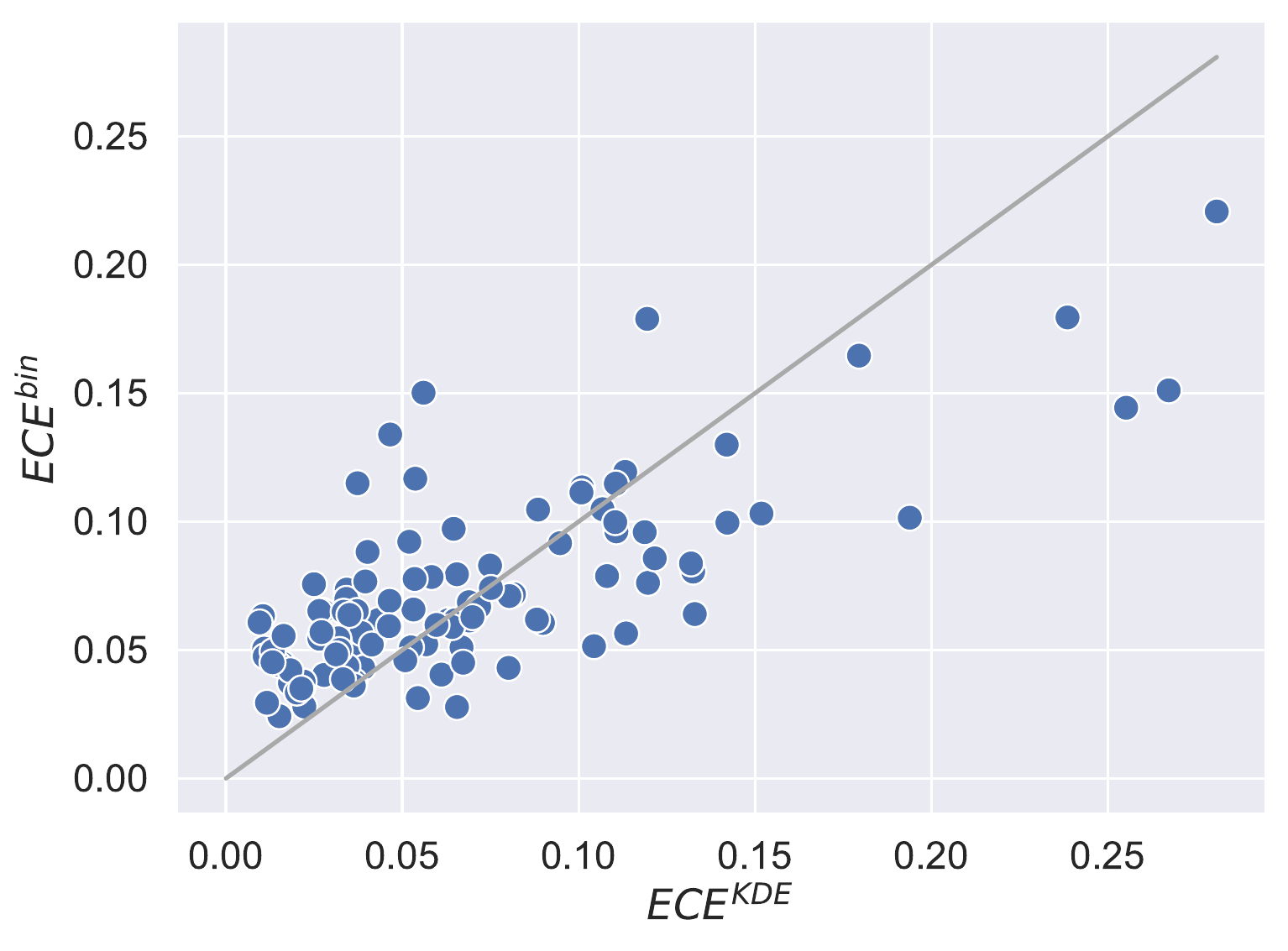}
    }
    \subfloat{
    \includegraphics[width=.31\linewidth]{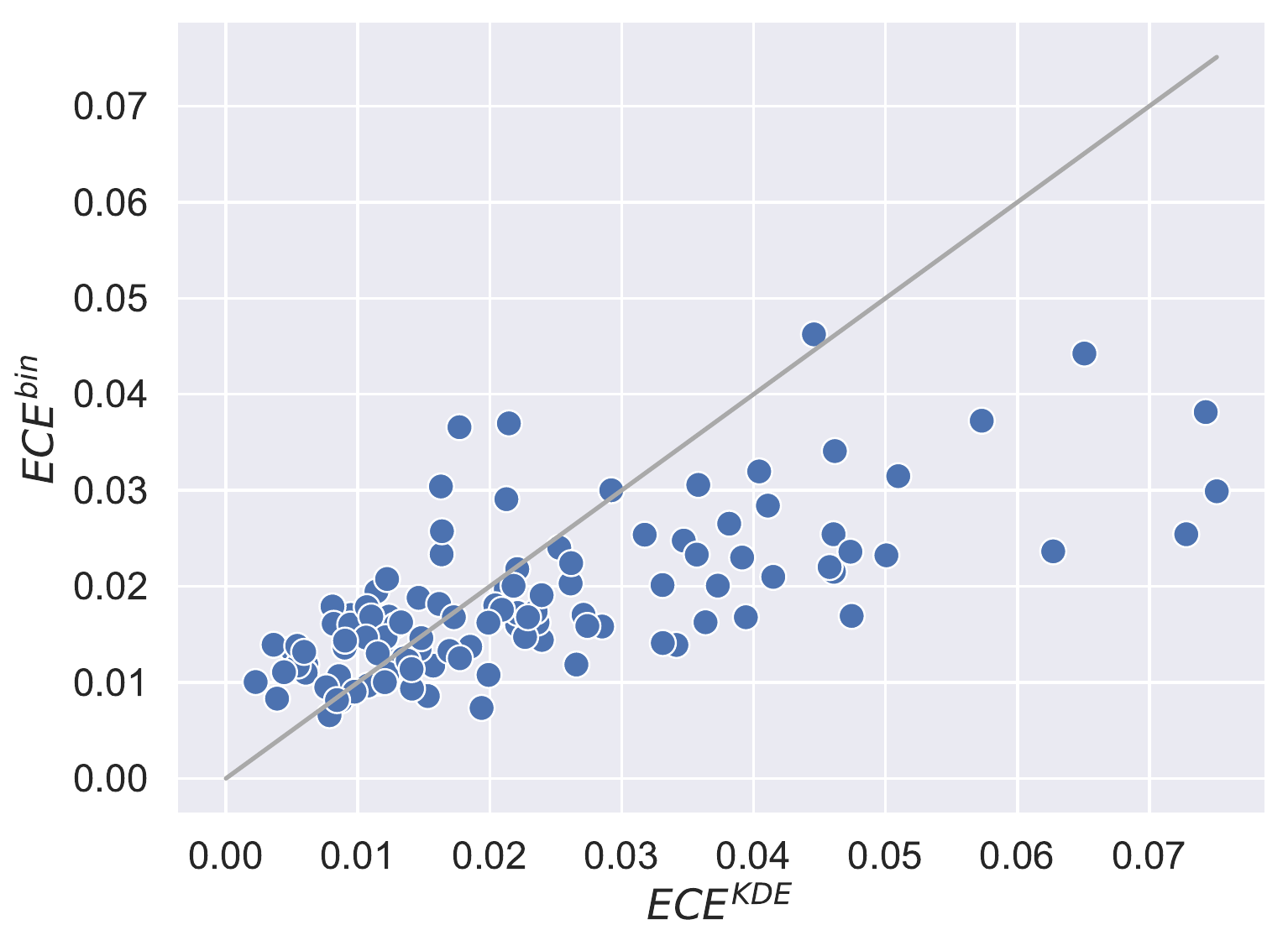}
    }
    \subfloat{
    \includegraphics[width=.31\linewidth]{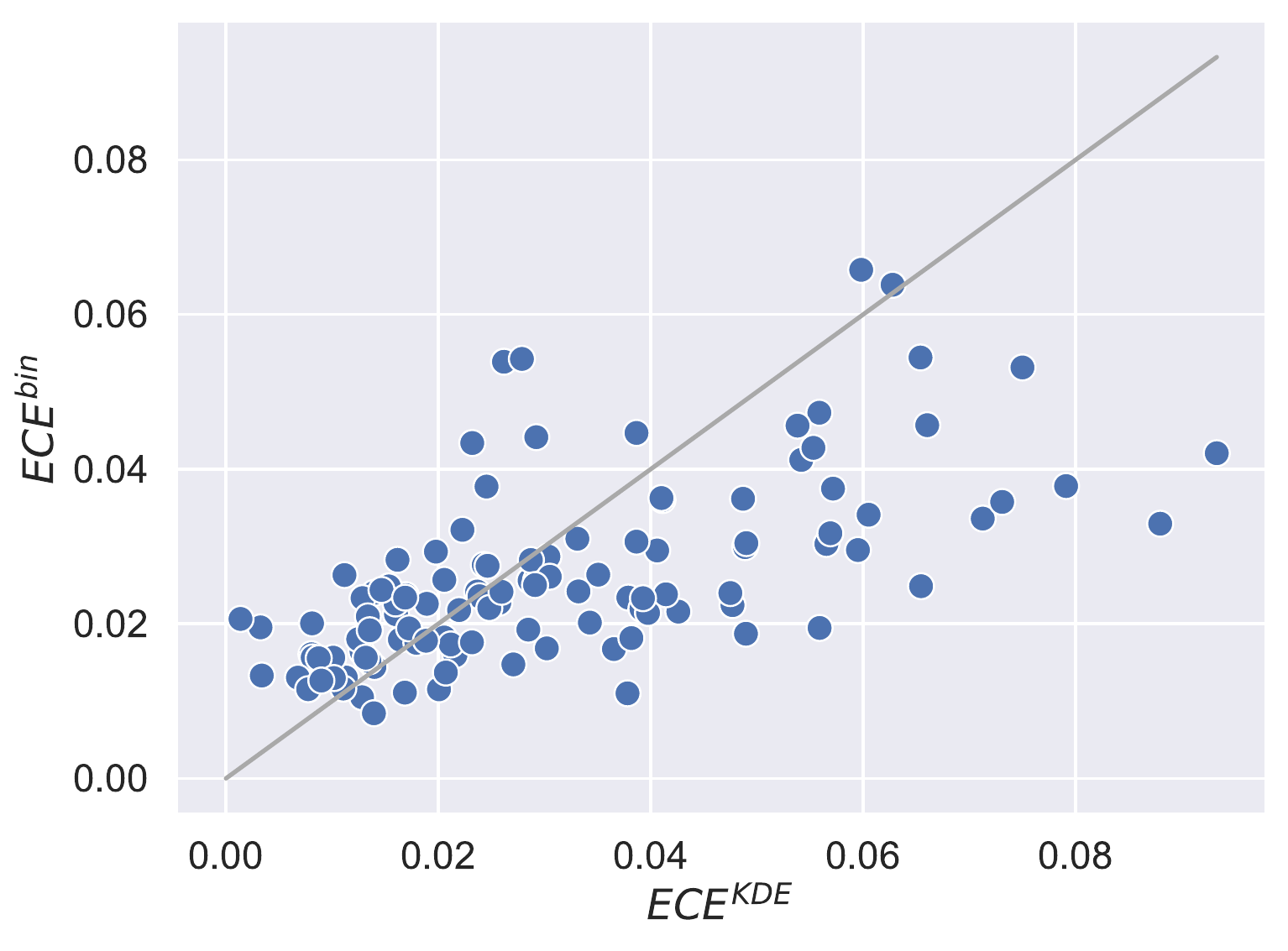}
    } \\[-2ex] 
    \subfloat{
    \includegraphics[width=.31\linewidth]{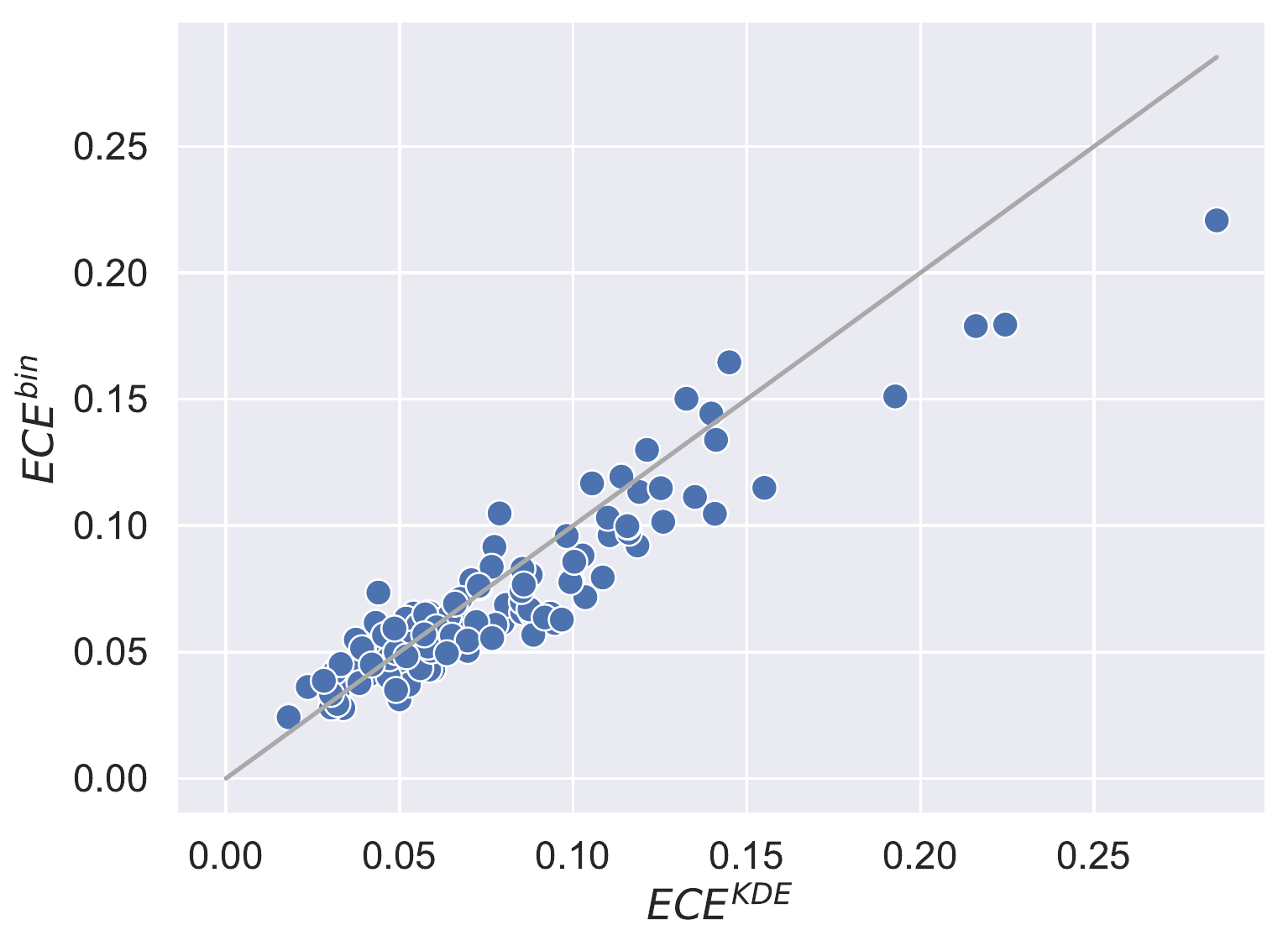}
    }
    \subfloat{
    \includegraphics[width=.31\linewidth]{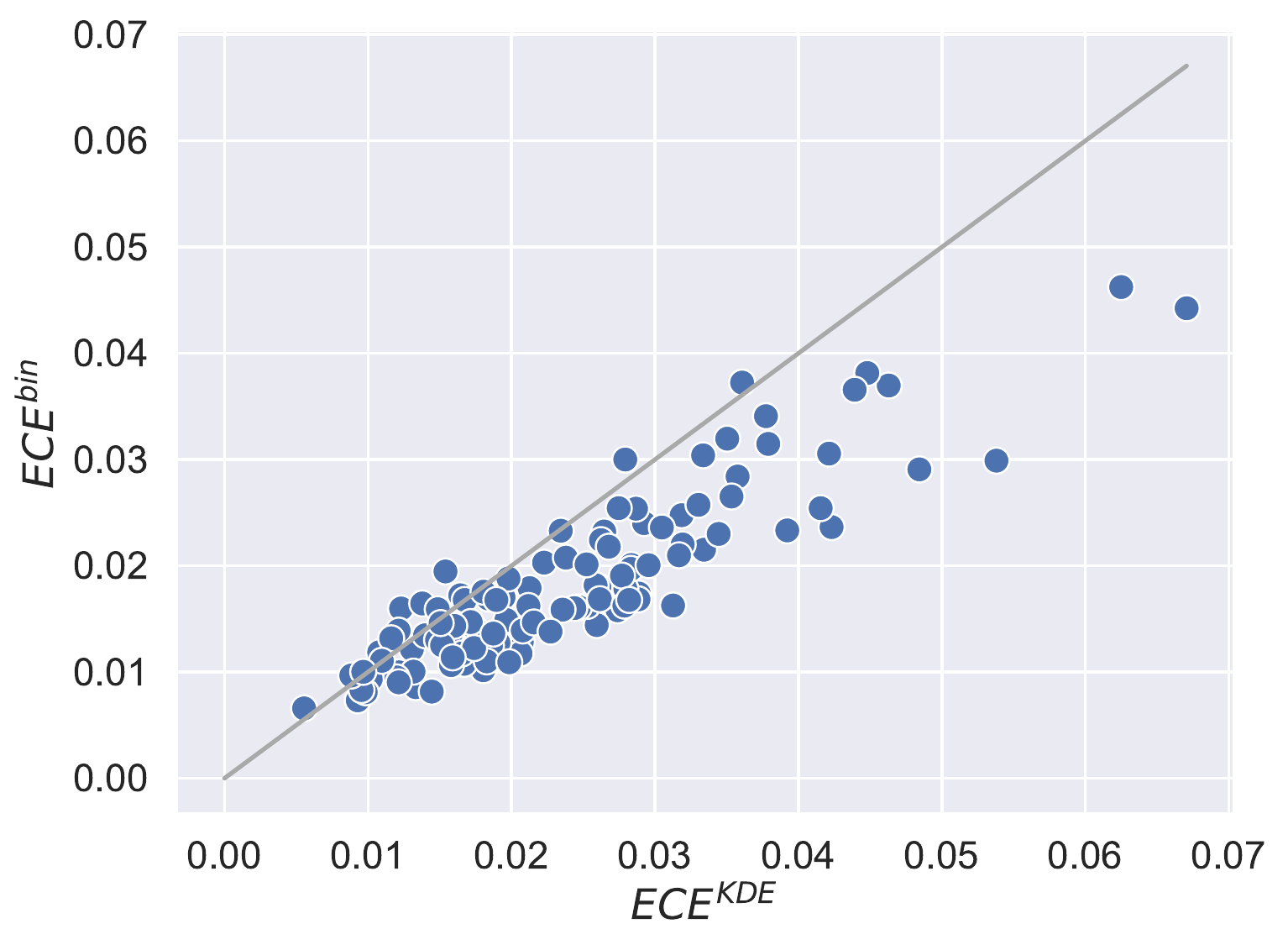}
    }
    \subfloat{
    \includegraphics[width=.31\linewidth]{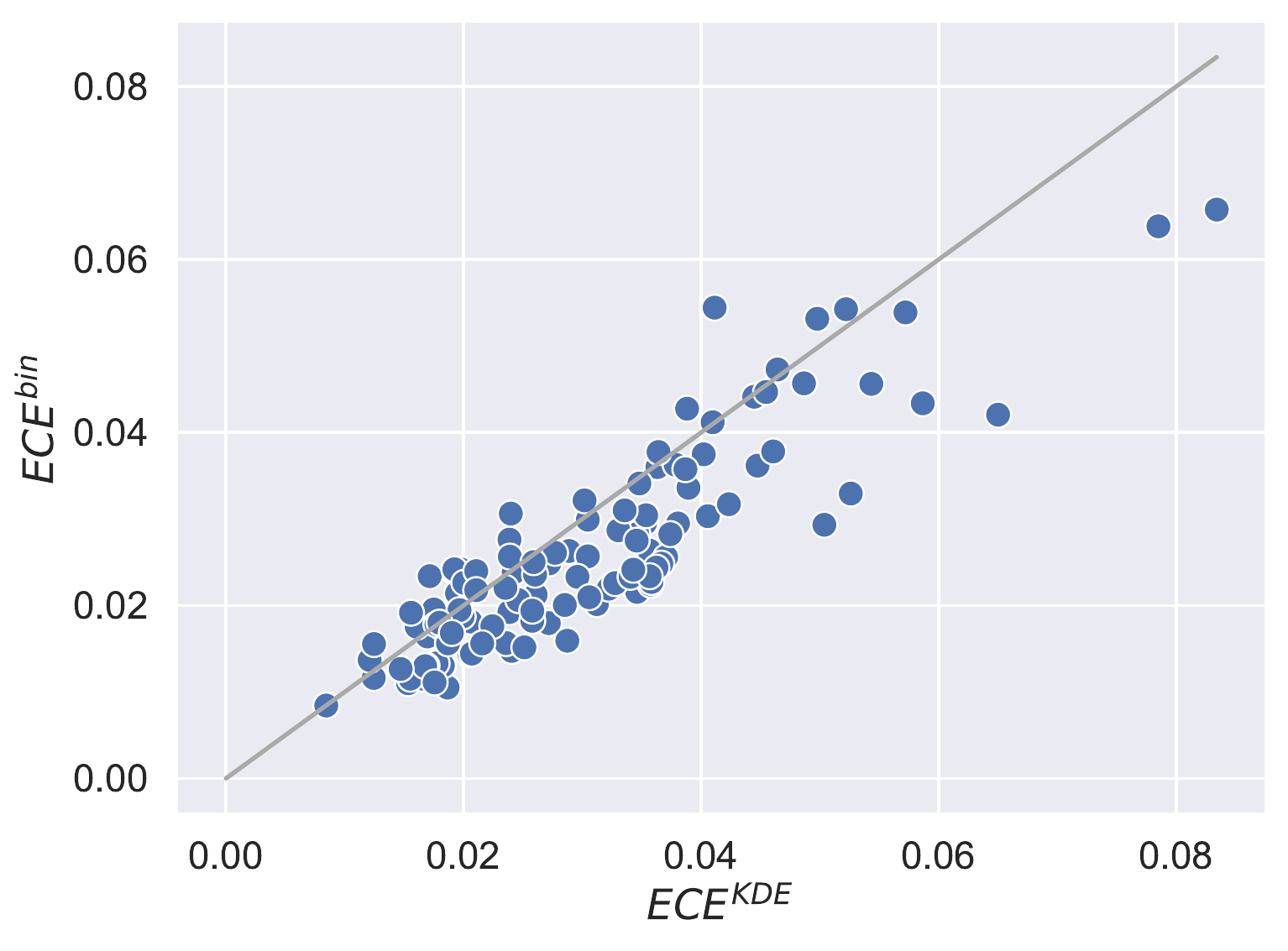}
    } \\[-2ex] 
    \subfloat{
    \includegraphics[width=.31\linewidth]{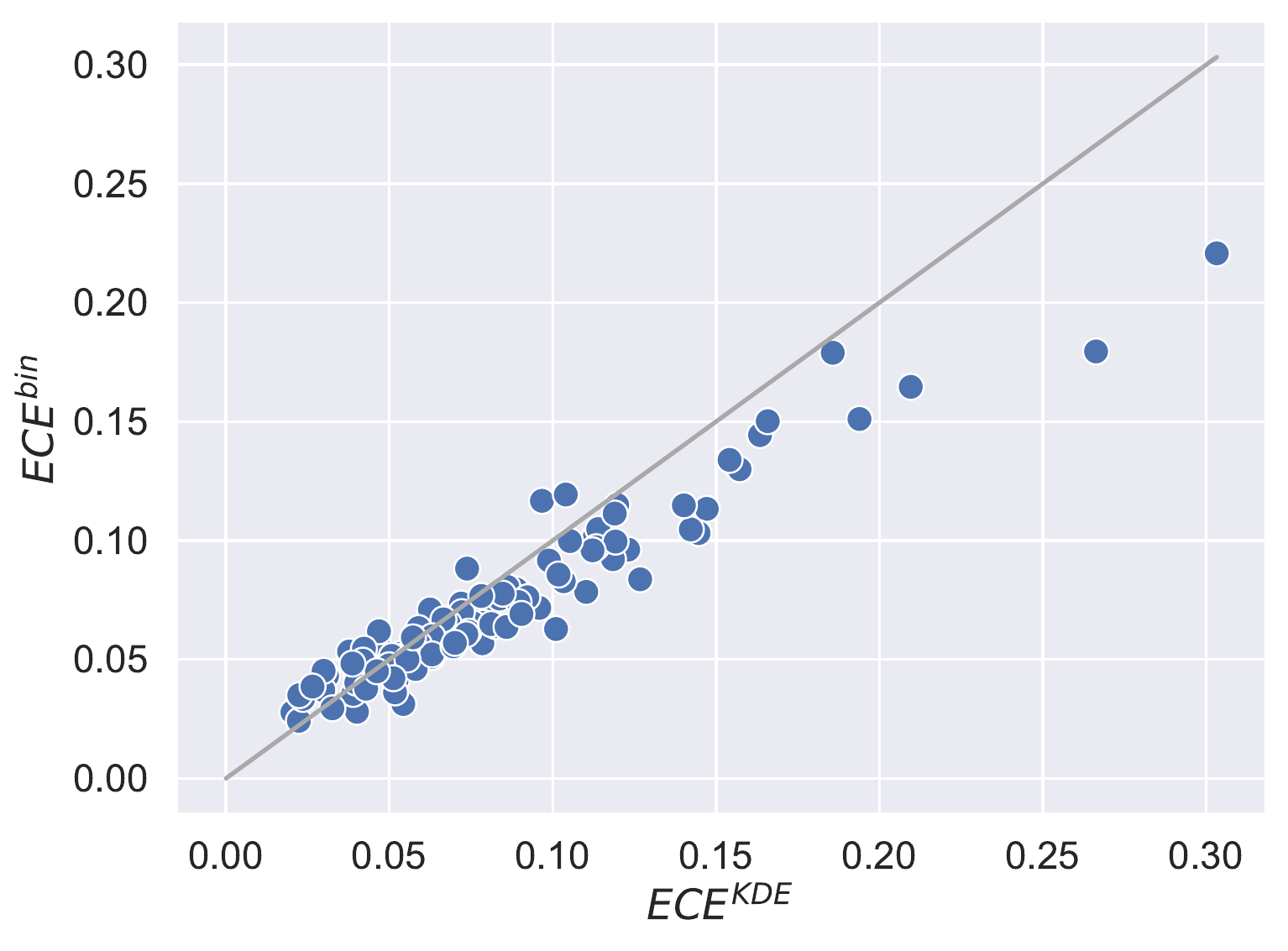}
    }
    \subfloat{
    \includegraphics[width=.31\linewidth]{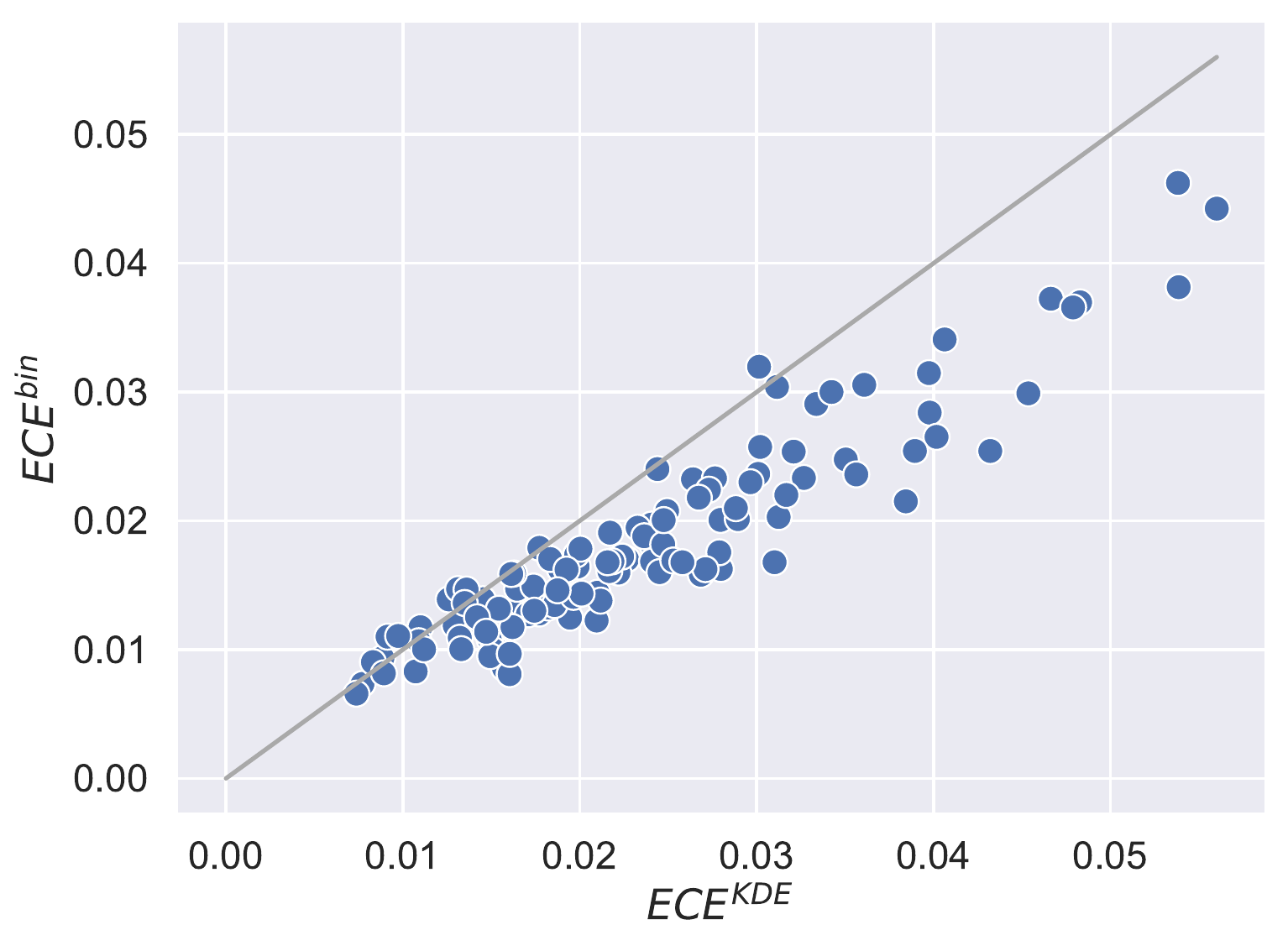}
    }
    \subfloat{
    \includegraphics[width=.31\linewidth]{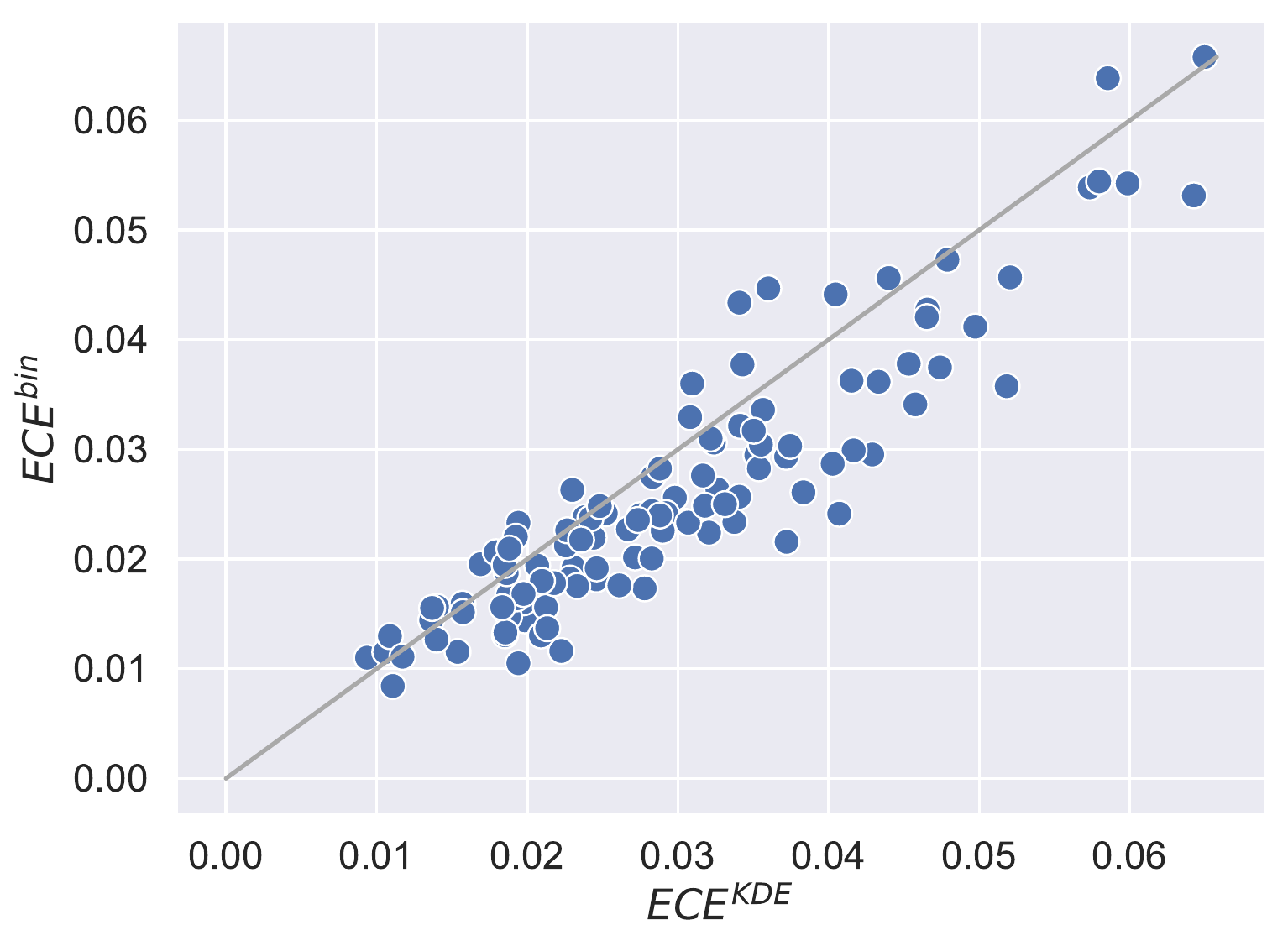}
    } \\[-2ex]
    \subfloat[Canonical]{
    \includegraphics[width=.31\linewidth]{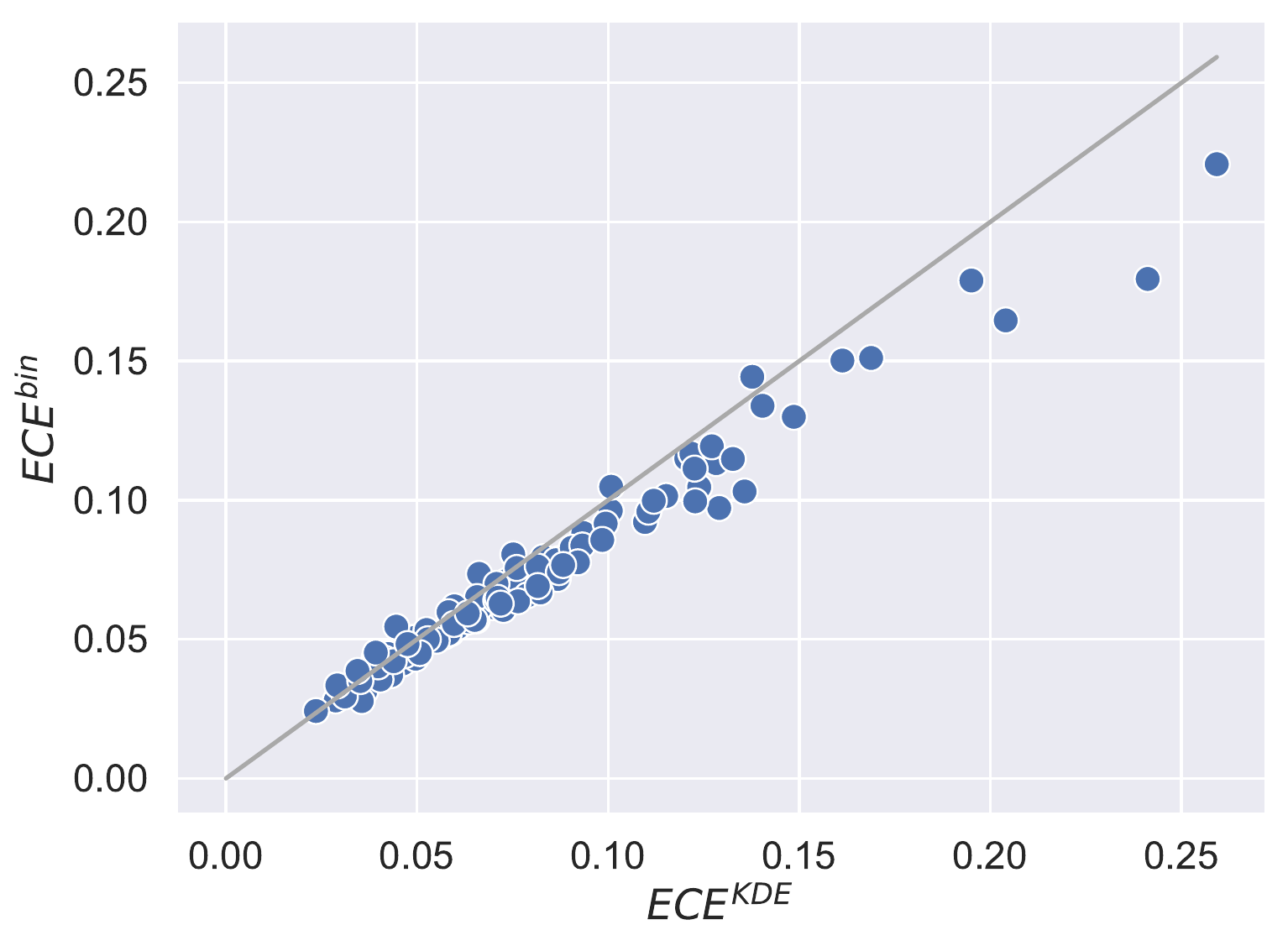}
    }
    \subfloat[Marginal]{
    \includegraphics[width=.31\linewidth]{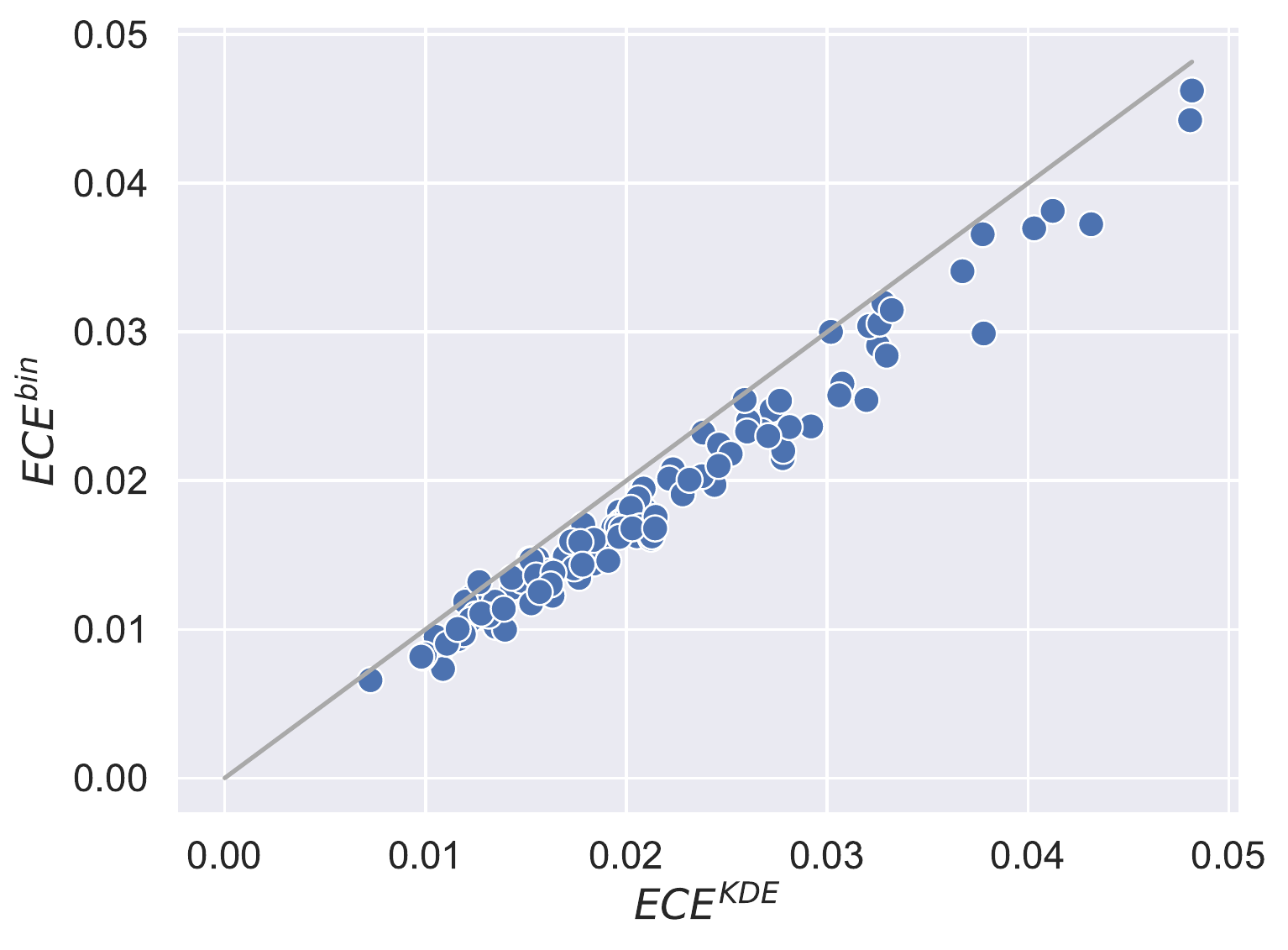}
    }
    \subfloat[Top-label]{
    \includegraphics[width=.31\linewidth]{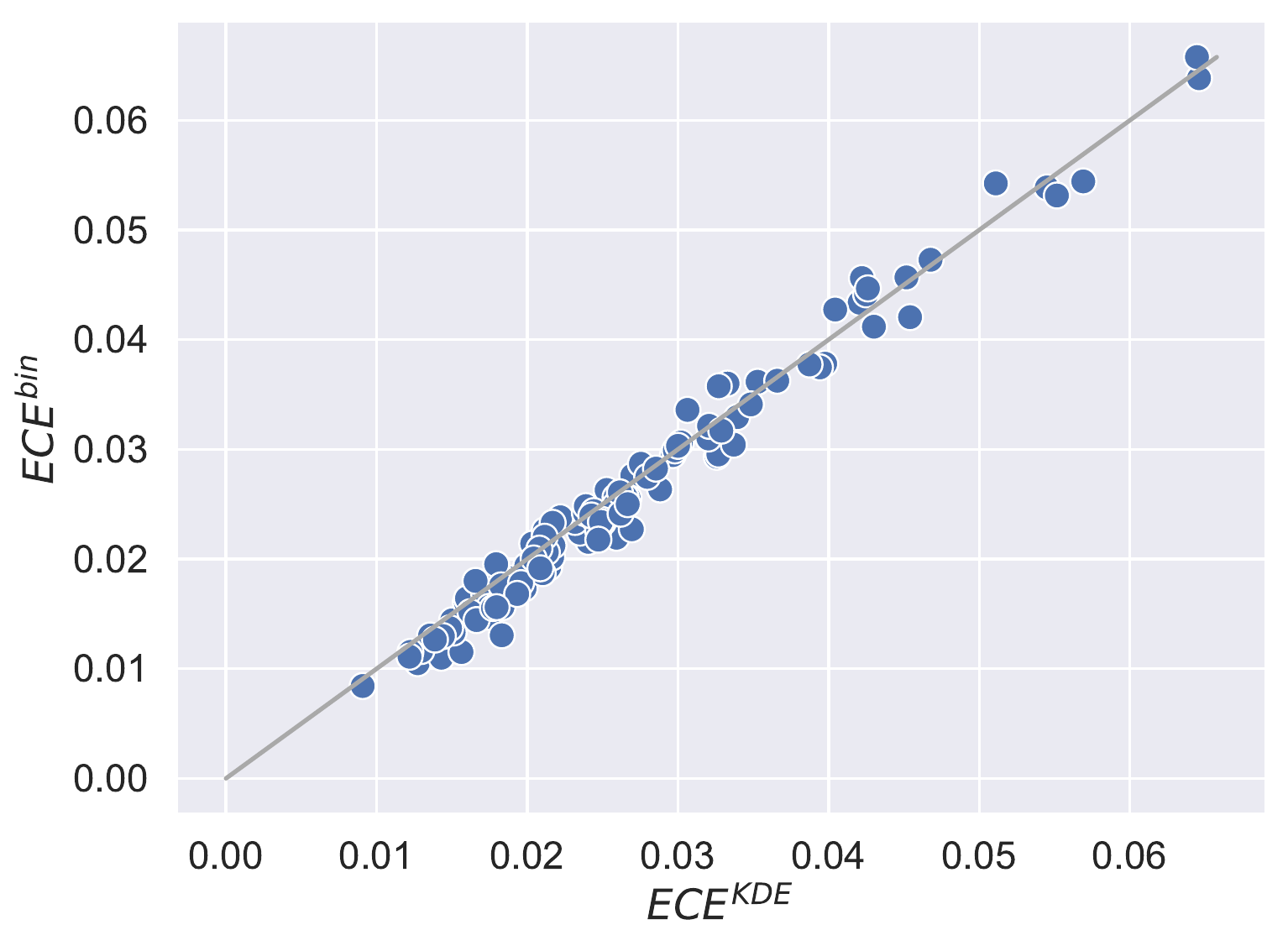}
    } 
    \caption{Relationship between $ECE^{bin}$ and $ECE^{KDE}$ for the three types of calibration: canonical (first column), marginal (second column) and top-label (third column). In every row top to bottom, different number of points (100, 500, 1000 and all points, respectively) are used to approximate $ECE^{KDE}$. Each point represents a ResNet-56 model trained on a subset of three classes from CIFAR-10. The number of bins per class (13) is selected using Doane's formula \citep{doanesformula1976}, while the bandwidth is selected using a leave-one-out maximum likelihood procedure (typical chosen values are 0.001 for 100 points and 0.0001 otherwise).
    }
    \label{fig:binned_vs_kde_points}
\end{figure}

\section{Bias and convergence rates}
\label{subsection:bias_and_convergece_rates}

Figure~\ref{fig:rebuttal_binned_vs_kde_scatter} shows a comparison of $ECE^{KDE}$ and $ECE^{bin}$ estimated with a varying number of points. The ground truth is computed from 3000 test points with $ECE^{KDE}$. The used model is a ResNet-56, trained on a subset of three classes from CIFAR-10. The figure shows that the two estimates are comparable and both are doing a reasonable job in a three-class setting. 

\begin{figure}[ht!]
    \centering
    \subfloat[Canonical]{
    \includegraphics[width=.31\linewidth]{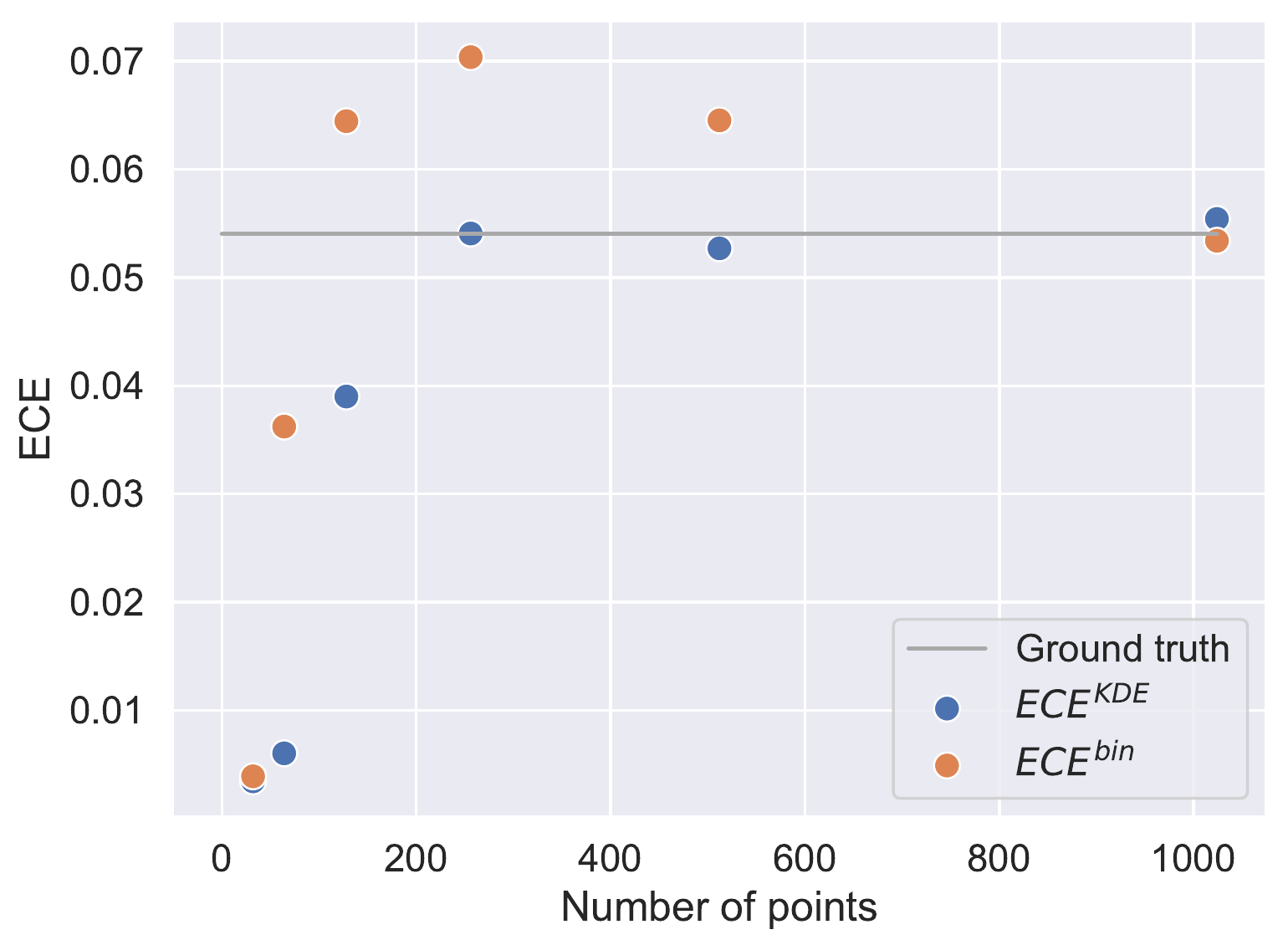}
    } \hfill
    \subfloat[Marginal]{
    \includegraphics[width=.31\linewidth]{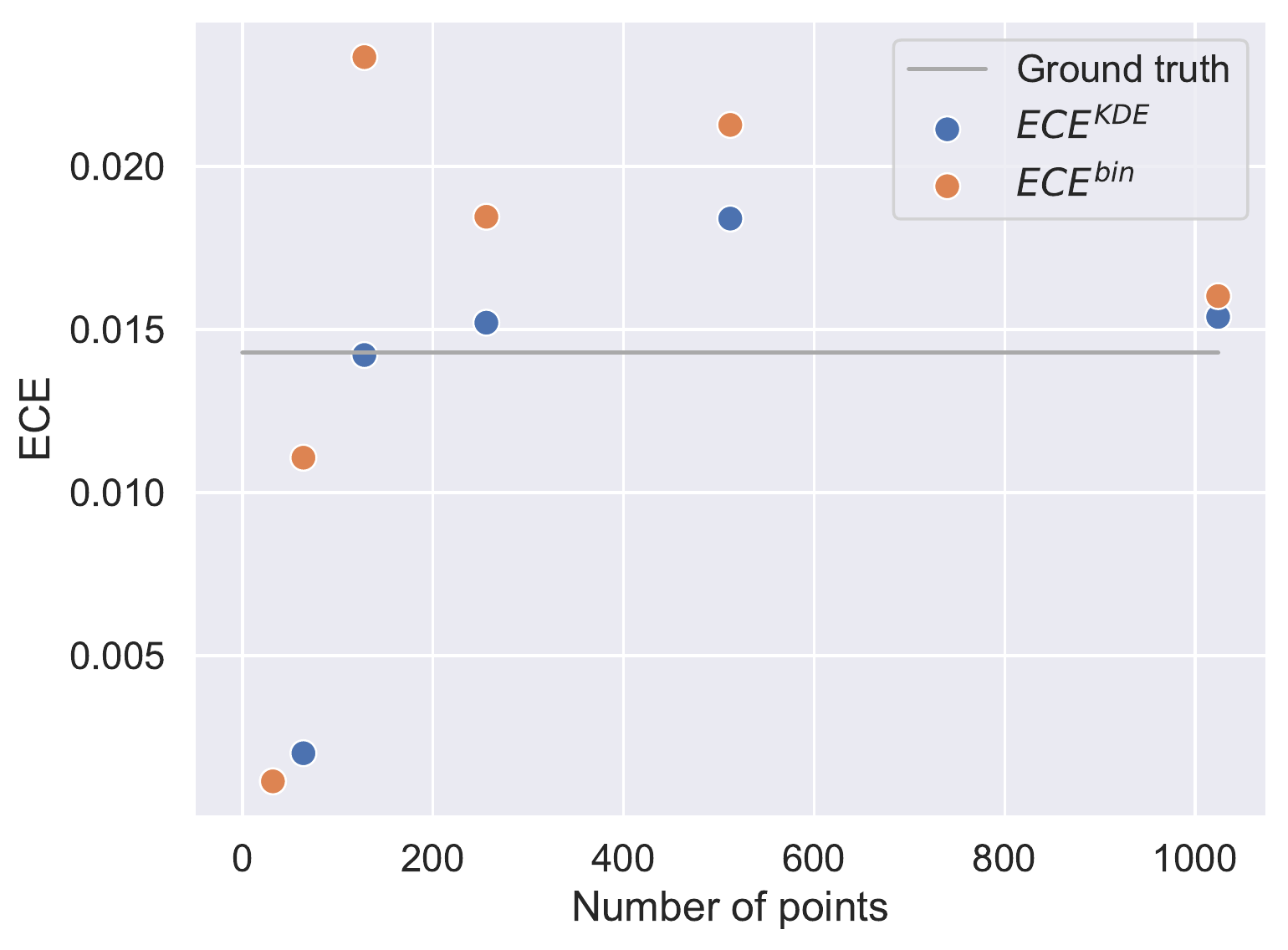}
    } \hfill
    \subfloat[Top-label]{
\includegraphics[width=.31\linewidth]{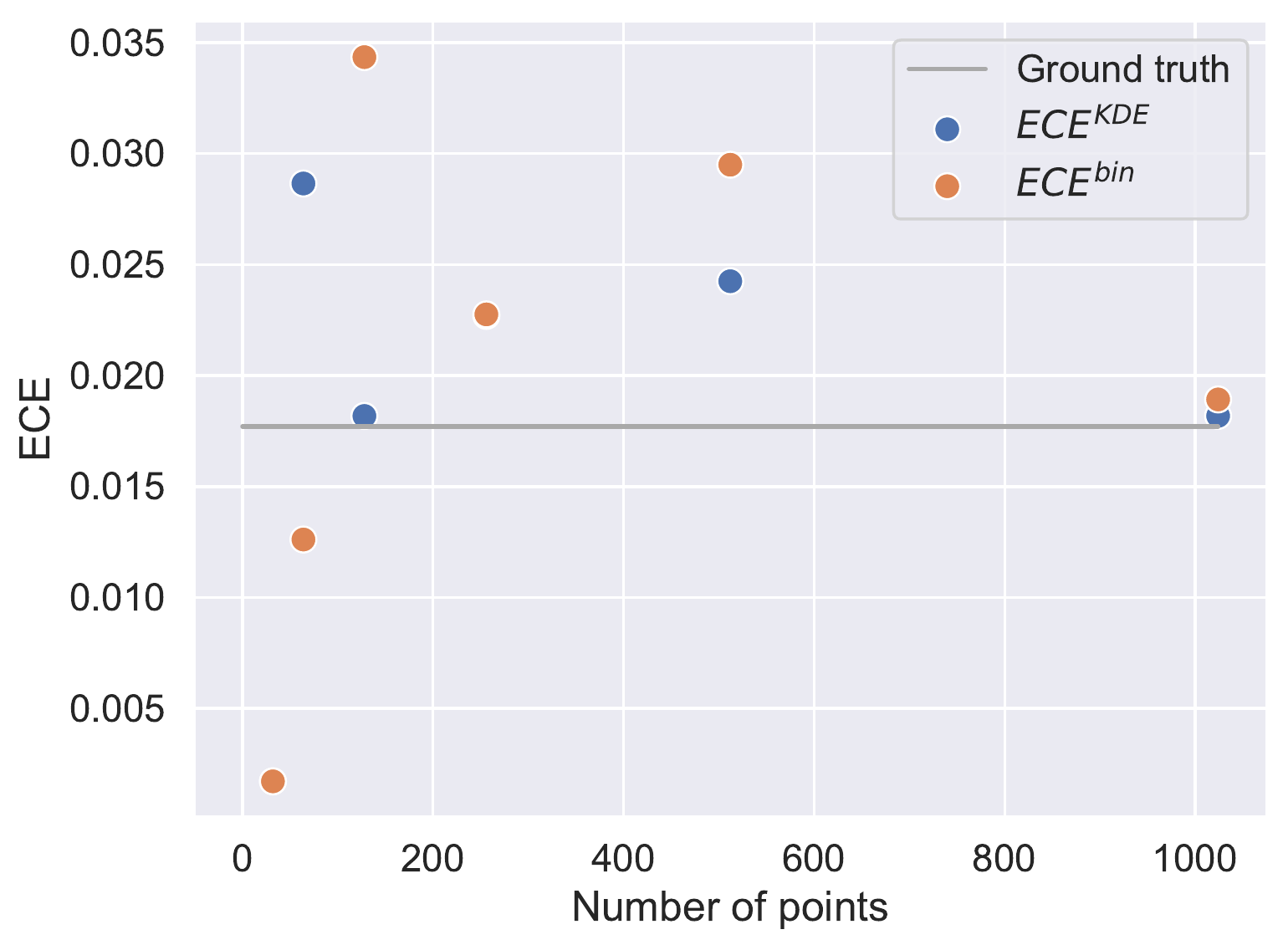}
    } \hfill
    \caption{$ECE^{KDE}$ estimates and their corresponding binned approximations on the three types of calibration for varying number of points used for the estimation. The ground truth is calculated using 3000 probability scores of the test set using $ECE^{KDE}$. Optimal number of bins and bandwidth are chosen with Doane's formula and LOO MLE, respectively. Typical chosen number of bins is 6-11, and common values for the bandwidth are 0.0001 and 0.001.}
    \label{fig:rebuttal_binned_vs_kde_scatter}
\end{figure}

Figure~\ref{fig:rebuttal_binned_vs_kde_scatter_diff} shows the absolute difference between the ground truth and estimated ECE using $ECE^{KDE}$ and a  $ECE^{bin}$ with varying number of points. The results are averaged over 120 ResNet-56 models trained on a subset of three classes from CIFAR-10.  Both estimators are biased and have some variance, and the plot shows that the combination of the two is in the same order of magnitude. The empirical convergence rates (slope of the log-log plot) is given in the legend and is shown to be close to the theoretically expected value of -0.5. We observe that $ECE^{KDE}$ has similar statistical properties in terms of bias and convergence as $ECE^{bin}$.

\begin{figure}[ht!]
    \centering
    \subfloat[Canonical]{
    \includegraphics[width=.31\linewidth]{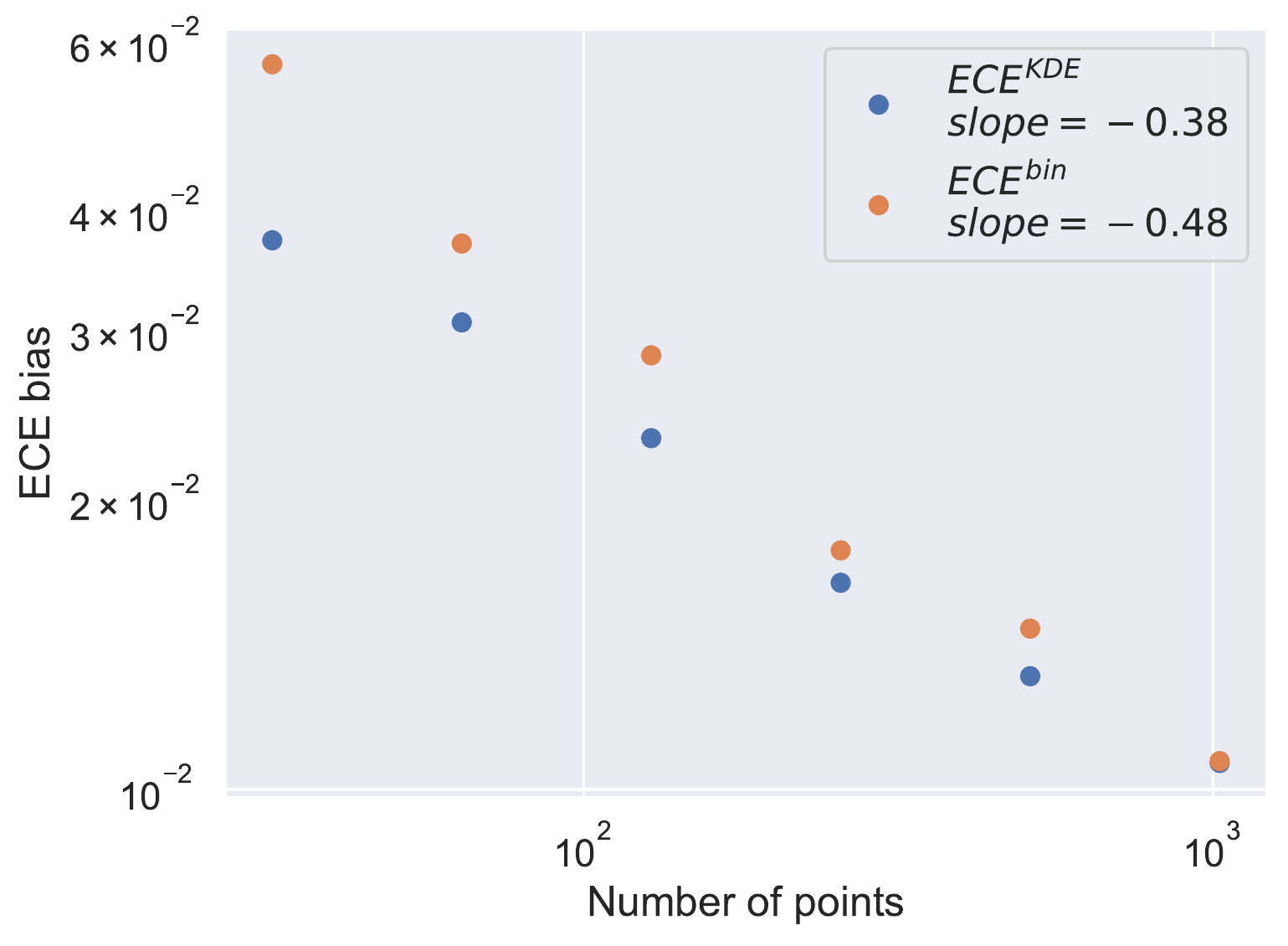}
    } \hfill
    \subfloat[Marginal]{
    \includegraphics[width=.31\linewidth]{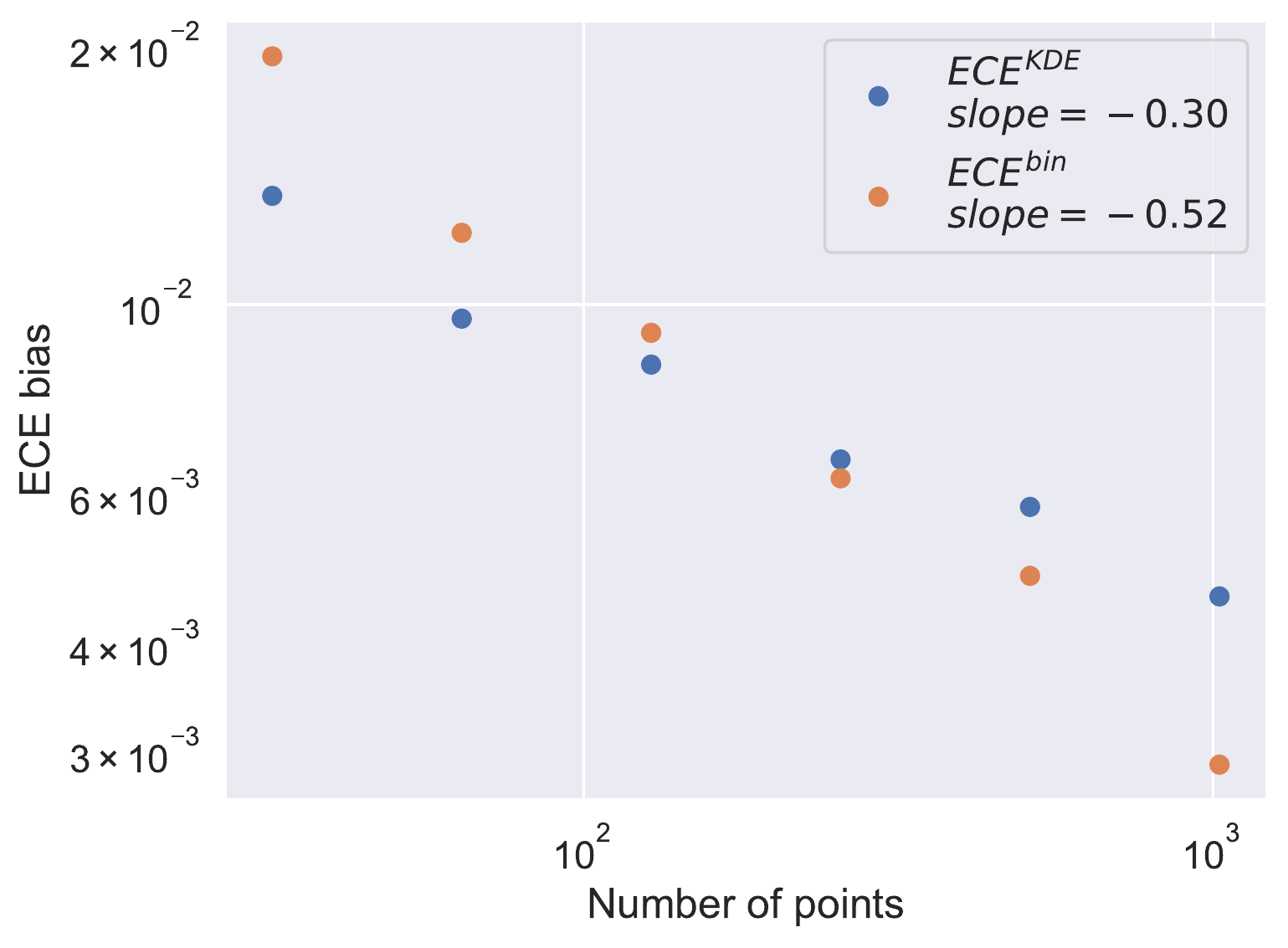}
    } \hfill
    \subfloat[Top-label]{
\includegraphics[width=.31\linewidth]{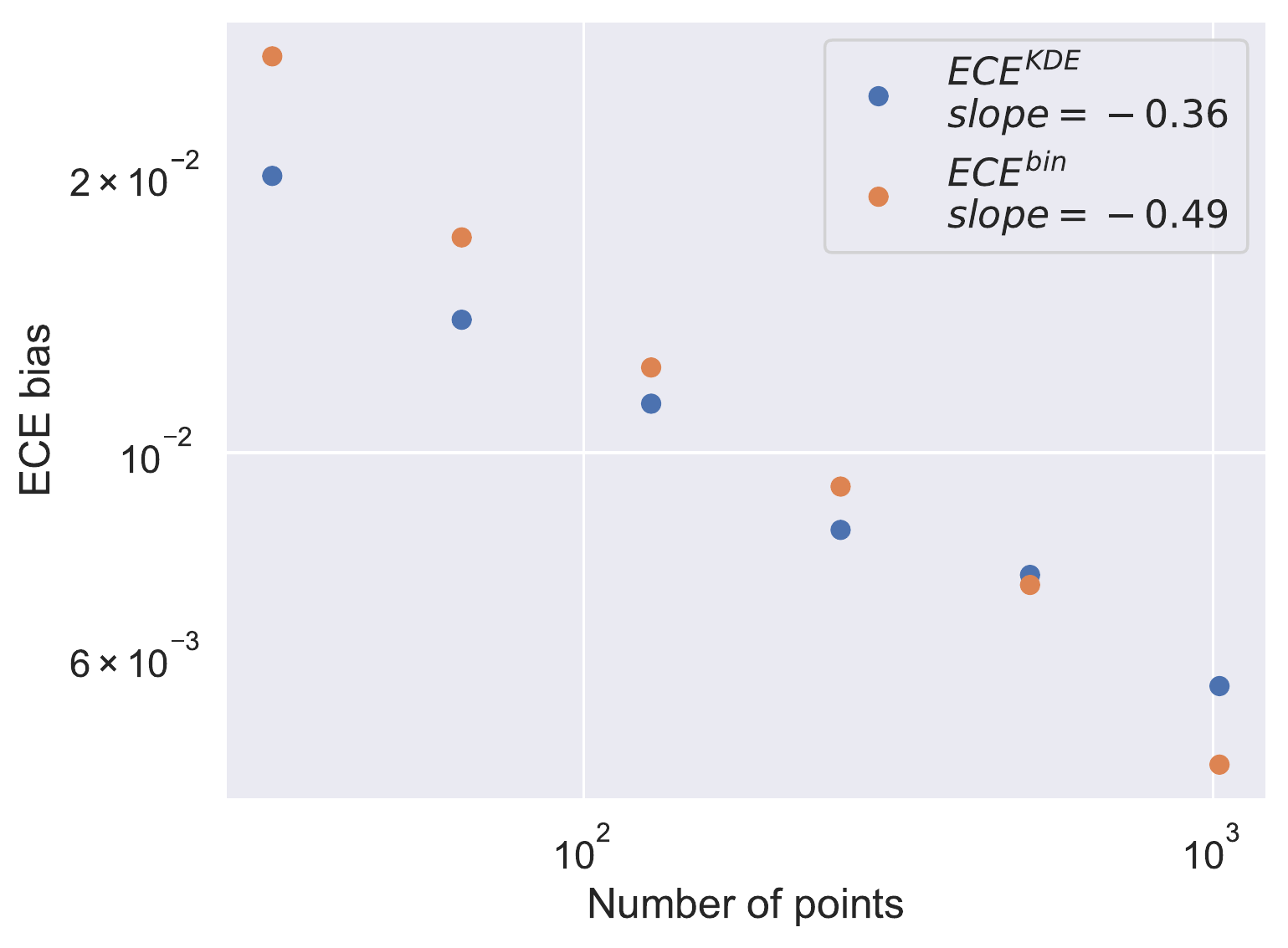}
    } \hfill
    \caption{Absolute difference between ground truth and estimated ECE  for varying number of points used for the estimation. The ground truth is calculated using 3000 probability scores of the test set. Note that the axes are on a log scale. }
    \label{fig:rebuttal_binned_vs_kde_scatter_diff}
\end{figure}

\end{document}